\def\eqref#1{equation~\ref{#1}}
\def\1{\bm{1}}
\DeclareMathAlphabet{\mathsfit}{\encodingdefault}{\sfdefault}{m}{sl}
\SetMathAlphabet{\mathsfit}{bold}{\encodingdefault}{\sfdefault}{bx}{n}
\def\cC{{\mathcal{C}}}
\def\cL{{\mathcal{L}}}
\def\cR{{\mathcal{R}}}
\def\cU{{\mathcal{U}}}
\def\bP{{\mathbb{P}}}
\def\bS{{\mathbb{S}}}
\newcommand{\R}{\mathbb{R}}
\newcommand{\iind}[1]{\mathds{1}(#1)}
\newcommand{\bigO}[2][\!\!]{\mathcal{O}_{#1}\left(#2\right)}
\newcommand{\ie}{i.e.,\ }
\newcommand{\eg}{e.g.,\ }
\newcommand{\simiid}{\overset{\text{iid}}{\sim}}
\newcommand{\cardb}[1]{\#\left\{#1\right\}}
\newcommand{\Beta}{\mathcal{B}}
\newcommand{\s}{\mathsf{s}}
\newcommand{\df}{\mathrm{d}}
\DeclareMathOperator{\sign}{sign}
\title{Favorability of Loss Landscape with Regularization Requires Both Large Overparametrization and Initialization}
\author{\name Etienne Boursier \email etienne.boursier@inria.fr  \\
      \addr INRIA\\
      LMO, Universit\'e Paris-Saclay\\
      Orsay, France
      \AND
      \name Matthew Bowditch \email matthew.bowditch@warwick.ac.uk \\
      \addr Mathematics Institute\\   University of Warwick\\  Coventry, UK
      \AND
      \name Matthias Englert \email m.englert@warwick.ac.uk\\
      \addr Department of Computer Science\\
      University of Warwick\\
      Coventry, UK
      \AND
      \name Ranko Lazi\'c \email r.s.lazic@warwick.ac.uk\\
      \addr Department of Computer Science\\
      University of Warwick\\
      Coventry, UK\\ 
      }
\begin{document}
\setcounter{tocdepth}{3}
%\tableofcontents

\doparttoc % Tell to minitoc to generate a toc for the parts
\faketableofcontents % Run a fake tableofcontents command for the partocs
\mtcsettitle{parttoc}{Table of contents}

\maketitle

\begin{abstract}
The optimization of neural networks under weight decay remains poorly understood from a theoretical standpoint. While weight decay is standard practice in modern training procedures, most theoretical analyses focus on unregularized settings. In this work, we investigate the loss landscape of the $\ell_2$-regularized training loss for two-layer ReLU networks. We show that the landscape becomes favorable -- i.e., spurious local minima represent a negligible fraction of local minima -- under large overparametrization, specifically when the network width $m$ satisfies $m \gtrsim \min(n^d, 2^n)$, where $n$ is the number of data points and $d$ the input dimension. More precisely in this regime, almost all constant activation regions contain a global minimum and no spurious local minima. We further show that this level of overparametrization is not only sufficient but also necessary via the example of orthogonal data. Finally, we demonstrate that such loss landscape results primarily hold relevance in the large initialization regime. In contrast, for small initializations -- corresponding to the feature learning regime -- optimization can still converge to spurious local minima, despite the favorability of the landscape.
\end{abstract}

\section{Introduction}
\looseness=-1
While the empirical success of machine learning, particularly with overparametrized architectures, has been remarkable across a wide range of tasks \citep{he2016deep,jumper2021highly}, our theoretical understanding of why these models perform so well remains limited. 
A large portion of the theoretical literature in machine learning has focused on providing optimization guarantees that ensure convergence to a global minimum of the training loss. While these results offer important foundations, they often rely on strong assumptions and idealized settings, such as smooth activations and infinitely wide architectures \citep{chizat2018global,mei2018mean,rotskoff2022trainability,wojtowytsch2020convergence} or initialization regimes that are not representative of the feature learning happening in practice when training neural networks \citep{jacot2018neural,du2018gradient,arora2019fine,chizat2019lazy}. More specific optimization analyses provide a comprehensive understanding of training dynamics, yet they are frequently restricted to simplified regimes, such as specific data examples \citep{lyu2021gradient,BoursierPF22,glasgow2024sgd} or linear architectures \citep{WoodworthGLMSGS20}, limiting their applicability to realistic models and datasets.

Another line of work shifts focus from dynamics to geometry, aiming to characterize the structure of the loss landscape itself \citep{sun2020global}. These studies investigate conditions under which the landscape is \textit{favorable} -- that is, when the non-convexity of the loss does not pose a fundamental obstacle to finding global minima efficiently, and the loss landscape is free from spurious minima that could impede optimization. 
While the literature suggests that the loss landscape of overparametrized neural networks is often favorable \citep{laurent2018deep,kawaguchi2016deep,venturi2019spurious}, such results do not directly guarantee that standard optimization algorithms will converge to global minima of the training loss. In particular, several works demonstrate the existence of a \textit{descending direction} from any non-minimizing parameter, i.e., a direction along which the training loss does not increase. However, certain spurious local minima may still persist \citep{safran2018spurious,Goldblum2020Truth}, where this descending direction corresponds to a flat direction: one in which the loss remains constant rather than decreasing. As a result, common optimization methods can still become trapped in these local minima, hindering convergence.

\citet{BoursierF25} further demonstrated, through an analysis of training dynamics, that even infinitely wide ReLU networks can converge to spurious local minima of the unregularized loss when initialized with sufficiently small weights. This result highlights that, despite the existence of descent directions and favorable structures in the loss landscape, optimization algorithms may still converge to undesirable stationary points. In particular, convergence to spurious local minima remains possible, even in the presence of descending paths, underscoring a gap between geometric properties of the loss landscape and the practical behavior of optimization dynamics.

\looseness=-1
While much of the existing literature has provided valuable insights into training without regularization, the importance of regularization cannot be overstated in practice. In the context of (stochastic) gradient descent, $\ell_2$-regularization is equivalent to weight decay. However, this equivalence no longer holds for more modern optimizers, such as Adam \citep{KingmaB14} or Muon \citep{jordan2024muon}, where the two approaches can lead to different optimization dynamics. In this work, we focus on $\ell_2$-regularization, and, motivated by gradient descent–type methods, we use the terms $\ell_2$-regularization and weight decay interchangeably.

Weight decay is standard in modern neural network training, playing a crucial role in controlling model complexity and ensuring generalization \citep{shalev2014understanding,bach2017breaking}, particularly when training large, overparametrized networks on real-world data. 
Despite its widespread use, the theoretical understanding of how regularization influences the loss landscape and optimization dynamics remains incomplete. 
In this work, we study the training dynamics of two-layer ReLU networks with $\ell_2$-regularization. Our analysis begins with a geometric perspective, characterizing the loss landscape induced by weight decay. We then examine how this geometric understanding translates (or fails to translate) into practical optimization behaviors by analyzing the training dynamics on specific data examples.

\paragraph{Contributions.} After introducing the considered problem and setting in \cref{sec:setting}, we study the regularized loss landscape of overparametrized networks in \cref{sec:landscape}. We show that, under large overparametrization -- specifically, when the number of parameters \( m \) exceeds \( \min(2^n, n^d) \), where \( n \) is the number of training examples and \( d \) is the data dimension -- the loss landscape exhibits a favorable structure: most activation regions contain a global minimum of the training loss and are devoid of spurious local minima. This result suggests that, in this overparametrized regime, convergence to global minima is typically straightforward. 
However, the loss landscape alone does not fully explain the trajectory of optimization. We then study the limitations of this loss landscape analysis, particularly emphasizing its relevance in the large initialization regime. In \cref{sec:optim}, we illustrate through a specific data example that with small initializations (i.e., in the mean field or feature learning regime), the parameters may converge to interpolators with suboptimal $\ell_2$-norm, but minimizing instead the number of non-zero neurons. 
Furthermore, in \cref{sec:orthogonal}, we explore the case of orthogonal data and demonstrate that the large overparametrization $m \gtrsim \min(2^n, n^d)$ is not merely a sufficient condition for a favorable loss landscape, but is also necessary to guarantee convergence to a global minimum of the regularized loss, regardless of the initialization regime. 
Finally, we experimentally illustrate our different findings in \cref{sec:expe}, providing concrete examples where our theoretical results are validated and offering a deeper understanding of the optimization dynamics in various regimes.

At a high level, our work underscores the additional challenges introduced when incorporating a regularization term into the training loss. Specifically, we find that a significantly larger overparametrization is required to ensure convergence to a global minimum, both from a landscape perspective and from an optimization perspective, especially in the case of orthogonal data. 
Furthermore, we also highlight the limitations of relying solely on the loss landscape analysis: while it provides valuable insights in the large initialization regime, this picture becomes more complex in the small initialization regime. In this case, the implicit bias of optimization plays a critical role, potentially guiding the optimization process towards aspurious local minimum that presents other regularities such as model sparsity. %Notably, this can lead to convergence to a spurious local minimum, despite the landscape predominantly containing global minima.

\section{Related work}

\paragraph{No spurious valley and mode connectivity.}
Early studies of loss landscapes focused on identifying settings in which the loss landscape is \textit{benign}, i.e., when all local minima are global \citep{laurent2018deep,kawaguchi2016deep}. However, once ReLU activations are introduced into the architecture, spurious local minima become prevalent \citep{safran2018spurious,yun2018small,Goldblum2020Truth,He2020Piecewise}. 

Despite the existence of such spurious minima, the loss landscape of ReLU networks retains some favorable geometric properties. In the regime of mild overparametrization -- i.e., when the number of hidden units exceeds the number of training samples -- \citet{haeffele2017global,venturi2019spurious} indeed showed that there is no bad valley for the unregularized loss. That is, from any point in the parameter space, there exists a continuous path along which the training loss does not increase and that leads to a global minimum. Moreover, under this same overparametrization, the set of global minima (and more generally, all sublevel sets) is connected \citep{nguyen2019connected,sharifnassab2020bounds,nguyen2021note,nguyen2021solutions,simsek2021geometry}, a property commonly referred to as \textit{mode connectivity} \citep{garipov2018loss,draxler2018essentially}.

Building on convex reformulations of the training objective for two-layer ReLU networks, \citet{wang2021hidden,kim2025exploring} extended these results to the regularized setting. More precisely they demonstrated that, with an appropriate (albeit non-explicit) level of overparametrization, there exists no bad valley and the set of global minima of the $\ell_2$-regularized loss remains connected. 
Although such results do not preclude the existence of spurious local minima, they do imply that no spurious \emph{strict} local minimum exists.

\paragraph{Loss landscape through activation patterns.} A characteristic feature of ReLU networks is that the parameter space can be partitioned into distinct regions, determined by the training data, such that the network behaves as a linear model within each region. Each of these regions corresponds to a fixed activation pattern of the hidden neurons, creating a piecewise-linear structure in the loss landscape. 
This piecewise-linear partitioning underpins the convex reformulations proposed by \citet{pilanci2020neural,ErgenP21,wang2021hidden,mishkin2023optimal}, which enabled the characterization of mode connectivity for the $\ell_2$-regularized loss \citep{kim2025exploring}. It has also served as a foundation in numerous optimization studies \citep{BoursierPF22,chistikov2023learning,MinMV24,boursier2025simplicity}, where the training dynamics become more tractable when analyzed within a fixed activation pattern.

\looseness=-1
While the total number of global and local minima is infinite, the number of distinct activation regions is finite for a given dataset and fixed network width. Leveraging this observation, \citet{karhadkar2024mildly} introduces a novel perspective on the unregularized loss landscape: rather than focusing on individual critical points, they analyze the distribution of constant activation regions that contain global minima versus those containing spurious local minima. They show that, under mild overparametrization, regions containing spurious local minima are relatively rare. In the special case of univariate input data, they further demonstrate that most activation regions contain at least one global minimum. 
Our first contribution builds upon this approach by incorporating an $\ell_2$ regularization term into the analysis. We extend the characterization of the loss landscape to the regularized setting and provide more general results regarding the proportion of activation regions that contain global minima.

% \paragraph{overparametrization and minimising regularized loss.}

\section{Setting}\label{sec:setting}

We consider a two-layer ReLU network parametrized by $\theta\coloneqq (W,a)\in\R^{m\times (d+1)}$ that corresponds to the function $f_\theta$ defined for any $x\in\R^d$ as
\begin{equation}\label{eq:f}
    f_{\theta}(x)=a^{\top}\sigma(W x),
\end{equation}
where $W\in\R^{m\times d}$ corresponds to the inner layer of the network, $a\in\R^m$ is the output layer and the ReLU activation $\sigma:z\mapsto \max(0,z)$ is applied component wise. Additionally, we write $w_i\in\R^d$ for the $i$-th row of the matrix $W$. 
With training data $(x_k,y_k)_{k\in[n]}\in\R^{(d+1)\times n}$, we consider the following regularized regression problem
\begin{equation}
    \min_{\theta} \frac{1}{n}\sum_{k=1}^n (f_{\theta}(x_k)-y_k)^2 + \lambda \|\theta\|_2^2 \tag{Reg-$\lambda$}\label{eq:regularized}
\end{equation}
where $\lambda>0$ is the regularization parameter. When interpolation is possible (e.g. if $m\geq n$), solutions of \cref{eq:regularized} converge towards solutions of the following problem as $\lambda\to0$:
%  See for example discussion after Equation (41) in \citep{neumayer2024effect}}
\begin{equation}
     \min_{\theta} \|\theta\|_2^2  \text{ s.t.} \ f_{\theta}(x_k)=y_k \ \text{for any }k\in[n]. \tag{min-norm}\label{eq:minnorm}
\end{equation}
A key aspect of the ReLU activation is that it is piecewise linear. As a consequence, \citet{ErgenP21} proposed an equivalent convex problem to both \cref{eq:regularized,eq:minnorm}. This equivalent problem is designed via a partitioning of the parameter space into cones where the output function $f_\theta$ behaves linearly in both $W$ and $a$ inside each of these cones. Formally, we associate to each binary matrix $A\in\{0,1\}^{m\times (n+1)}$ the \textit{activation cone} $\cC^A$ given by
\begin{multline}\label{eq:cone}
    \cC^A \coloneqq \Big\{(W,a)\in\R^{m\times (d+1)} \mid\forall (i,k)\in[m]\times[n], \iind{w_i^\top x_k \geq 0} = A_{i, k}\text{ and }  \iind{a_i \geq 0}=A_{i, n+1}  \Big\}.
\end{multline}
Concretely, activation cones partition the parameter space into regions where all activations are fixed (determined by the signs of 
$w_i^\top x_k$ for each neuron $w_i$ and datapoint $x_k$). Within each cone, the network is linear in the parameters -- though not jointly linear in $(W,a)$-- which makes these regions particularly useful for both landscape and dynamics analysis.

Note that some activation cones might be empty sets. 
These activation cones play a key role in the optimization of two-layer ReLU networks. 
Notably, they allow for the formulation of convex problems equivalent to both \cref{eq:regularized,eq:minnorm}, yielding insightful conclusions on their global minima \citep{ErgenP21,wang2021hidden}. 
They also are crucial in understanding the training dynamics of two-layer ReLU networks, as all neurons with the same activation (i.e., row $A_i$) follow the same dynamics (up to rescaling) \citep{MaennelBG18}. 
%
%In the following, we sometimes make the abuse of notation of writing $\cC^A$ instead of $\cC^A \times \R^m$, omitting the $a$ parameters, when clear from context, as we are mostly concerned by the parametrization in $W$ in this work. For example, we can write $\cC^A$ contains a global minimum of \cref{eq:regularized}, while it should clearly be for $\cC^A \times \R^m$.

\subsection{Notations}

We write $e_1, \dots, e_d$ for the standard basis of~$\R^d$, and $\bS^{d - 1}$ for the $d$-dimensional unit sphere. For a set $\cC$, we denote its closure by $\bar{\cC}$. We note $f(t)=\bigO{g(t)}$, if there exists a constant $C$ such that for any $t$, $|f(t)|\leq C g(t)$. Similarly we note $f(t)= \Omega(g(t))$, if there exists a constant $C>0$ such that $f(t)\geq C g(t)$.

\section{Overparametrization and favorable loss landscape}\label{sec:landscape}

Our goal is to show that, under sufficient overparametrization, the loss landscape is favorable in the sense that most local minima are global. However, the set of all local minima is infinite and difficult to characterize directly. To make this phenomenon more tractable, we focus on activation cones, which provide a natural partition of the parameter space and, crucially, are finite in number. 
\citet{karhadkar2024mildly} study the unregularized problem, corresponding to \cref{eq:regularized} with $\lambda = 0$, by analyzing the number of non-spurious cones. They show, when the model is mildly overparametrized ($m\gtrsim n/d$), that only a small fraction of activation cones contain differentiable spurious stationary points. \cref{thm:losslandscape} below provides a similar picture in the presence of regularization ($\lambda>0$), stating that in the case of large overparametrization, nearly all activation cones do not contain spurious local minima. Additionally, we show that nearly all activation cones also contain global minima, which is another strong argument in favor of the \textit{favorability of the loss landscape.}

\begin{thm}\label{thm:losslandscape}
Let $\varepsilon\in(0,1)$. If $m=\Omega\left(\min(n^{d},2^n)\log(\frac{n}{\varepsilon})\right)$, then for any $\lambda>0$, in all except at most an $\varepsilon$ fraction of non-empty activation cones $\cC^A$ it simultaneously holds:
\begin{enumerate}[(i)]
    \item the activation cone $\bar{\cC}^A$ contains a global minimum of \cref{eq:regularized} (respectively \cref{eq:minnorm});
    \item the activation cone $\bar{\cC}^A$ does not contain any spurious local minimum of \cref{eq:regularized} restricted to $\bar{\cC}^A$ (respectively \cref{eq:minnorm}).
\end{enumerate}
\end{thm}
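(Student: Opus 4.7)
The plan is to combine two ingredients: a counting argument over activation patterns (in the spirit of a coupon-collector bound), and the Ergen--Pilanci style convex reformulation of two-layer ReLU training with weight decay.

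First, I would bound the number of realizable single-neuron activation patterns (i.e.\ possible rows of $A$) by $P = O(\min(n^{d}, 2^n))$: the bound $2^n$ is trivial, while $O(n^d)$ follows from the standard bound on sign patterns of $n$ linear functionals on $\R^d$, multiplied by the two choices of sign of $a_i$. A cone $\cC^A$ is non-empty if and only if each of its $m$ rows is individually realizable, and the rows are mutually independent across neurons, so the set of non-empty cones is in bijection with the set of ordered $m$-tuples of realizable patterns. By the convex reformulation of \citet{ErgenP21,wang2021hidden}, both \cref{eq:regularized} and \cref{eq:minnorm} admit a global minimum $\theta^\star$ whose non-zero neurons activate only $K \le n+1$ distinct patterns $\pi_1,\ldots,\pi_K$, via Carath\'eodory applied to the extreme points of the convex program.

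For property (i), I would argue that any cone $\cC^A$ whose row-set contains each $\pi_j$ admits $\theta^\star$ in its closure: assign the weights of $\theta^\star$ to $K$ neurons with the matching patterns, and set the remaining $m-K$ neurons to $(w_i,a_i)=0$, which lies in $\bar{\cC}^A$. For property (ii), I would restrict \cref{eq:regularized} to $\bar{\cC}^A$ and apply the standard change of variables $u_i = a_i w_i$: by the rescaling invariance of the model, at every stationary point of the restriction one has $|a_i| = \|w_i\|$, so that the regularizer rewrites as $2\lambda\sum_i \|u_i\|$, while the fit term is linear in each $u_i$ on $\bar{\cC}^A$ with signs fixed by $A$. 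The restricted problem is then equivalent to a convex group-lasso program, so every local minimum of the restriction is a global minimum of it. Combined with (i), in a good cone the restricted global optimum coincides with the full-problem global optimum, ruling out bad local minima.

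The counting is then immediate: for fixed $\pi_1,\ldots,\pi_K$, the fraction of non-empty cones whose $m$ rows miss some $\pi_j$ is at most $K(1-1/P)^m \le K e^{-m/P}$ by a union bound, and this is at most $\varepsilon$ once $m = \Omega(P \log(K/\varepsilon)) = \Omega(\min(n^{d},2^n)\log(n/\varepsilon))$, matching the hypothesis. The step I expect to be the main obstacle is property (ii): the map $(W,a)\mapsto u$ is not injective, and degenerate ``zero neurons'' lying on the boundary of $\cC^A$ can produce stationary points of the restricted problem that are not stationary points of the convex reformulation. Showing carefully that such boundary configurations cannot be bad local minima, and that the convex equivalence persists on the closed cone $\bar{\cC}^A$ rather than only on the open cone $\cC^A$, will require a case analysis of these boundary neurons together with the rescaling symmetry.
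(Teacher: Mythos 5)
Your proposal takes essentially the same route as the paper: a coupon-collector counting argument over the $O(\min(n^d, 2^n))$ realizable neuron activation patterns, combined with the convex-reformulation result that some $(n+1)$-neuron network globally minimizes \cref{eq:regularized}, and the change of variables $\beta_i = a_i w_i$ within each cone to rule out bad local minima via convexity. The obstacle you flag for part~(ii) is resolved in the paper's \cref{lemma:permut} by invoking balancedness of local minima, interpolating convexly in $\beta$-space towards the optimum $\beta^\star$, and then mapping back via $a_i^t = \sign(a_i)\sqrt{\|\beta_i^t\|}$ (with $W_i^t = 0$ when $a_i^t = 0$), which stays in $\bar{\cC}^A$ and tends to $(W,a)$ as $t \to 0$, so a non-global local minimum yields a contradiction.
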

In other words, for $N$ the number of non-empty activation cones, \cref{thm:losslandscape} states that at least $(1-\varepsilon)N$ cones satisfy items (i) and (ii) above. 
In particular, for a large overparametrization (having $m\gtrsim \min(n^{d},2^n)$) is here sufficient to have a favorable loss landscape, i.e., having only few regions with spurious local minima and most of them with global ones. 
Moreover, the notion of spurious local minimum in \Cref{thm:losslandscape} is restricted to $\bar{\cC}^A$, which is a stronger notion of local minimality\footnote{Any local minimum in $\Theta$ is indeed also a local minimum in any subset $S \subseteq \Theta$ containing this point.} and thus leads to stronger favorability properties of the loss landscape. 
This large overparametrization requirement is actually necessary for favorability, as illustrated on the orthogonal example in \cref{sec:orthogonal}.

In comparison, \citet{karhadkar2024mildly} showed that a mild overparametrization $(m\gtrsim n/d)$ is sufficient to get only a small fraction of cones with spurious local minima for the unregularized problem. Although it remains unknown whether such a mild overparametrization also leads to global minima of the unregularized problem in most of the cones (i.e., the first point of \cref{thm:losslandscape}), it still suggests, from a loss landscape point of view, that reaching a global minimum of the unregularized problem is generally much easier than the regularized one. While such an observation seems intuitive, \cref{thm:losslandscape} precisely quantifies this difference: while the unregularized landscape is \textit{favorable} with $m\gtrsim n/d$ neurons, the regularized landscape only becomes favorable with $m\gtrsim \min(n^{d},2^n)$ neurons. Note that such a difference is not due to a possible looseness in our bound, since \cref{thm:orthogonal} below justifies that such a level of overparametrization is required to get a favorable regularized landscape.

This difference between regularized and unregularized landscapes may seem counterintuitive, since adding a regularization term $\lambda \|\theta\|^2$ is often expected to make the objective ``more convex'' and thus easier to minimize. However, in the absence of regularization, the set of global minima (i.e., interpolating solutions) is extremely large. Introducing even a small amount of regularization drastically reduces this set: while there are many ways to interpolate the data, only a few \citep[if not unique; see, e.g.,][]{BoursierF23,kamber2026loss} achieve minimal norm. Consequently, the set of global minima becomes much more restricted with $\ell_2$-regularization, which in turn increases the level of overparametrization required for the loss landscape to be favorable.

\medskip

Importantly, both \cref{thm:losslandscape} and \citet{karhadkar2024mildly} characterize the landscape in terms of the \textit{number of cones}. However, in practice, cones can vary significantly in size, as some span larger angular regions of the weight space than others. Extending \Cref{thm:losslandscape} to account for cone measures would require substantially more involved analysis and is therefore beyond the scope of this work. Nevertheless, a measure-based perspective is arguably more representative of practical settings, where weights are typically initialized from isotropic distributions such as Gaussians. In \cref{app:expe}, we complement our main experiments by considering Gaussian initialization, instead of sampling cones uniformly at random, to better reflect this viewpoint. Under this setting, we observe qualitatively similar behavior, although the landscape may appear more favorable under milder overparametrization for certain data structures (see \cref{fig:prop_opt_random_vector,fig:ortho,fig:random_labels,fig:large_n,fig:large_d}).
 
\paragraph{Sketch of proof.}\looseness=-1
Thanks to the results of \citet{wang2021hidden}, using a convex problem equivalent to the problem in \cref{eq:regularized}, there exists a globally minimizing neural network with only $n+1$ non-zero neurons. From there, any cone containing (at least) $n+1$ neurons with the same activation pattern as the $n+1$ neurons of this optimal network can be shown to satisfy the two properties of \cref{thm:losslandscape} (see \cref{lemma:permut}). 
We then show that when choosing an activation cone uniformly at random, there is a high probability to get such $n+1$ neurons. Actually, this is equivalent to the coupon collector problem \citep[see, e.g.,][]{feller1991introduction}: drawing $m$ independent coupons uniformly at random among a set of $\bigO{\min(n^{d},2^n)}$ coupons -- which are all the possible neuron activation patterns -- and lower bounding the probability that $n+1$ \textit{winning} coupons are collected within these $m$ random coupons allows to conclude.
\qed

\looseness=-1
On its own, the loss landscape result of \cref{thm:losslandscape} justifies that when selecting an interpolator of the training data at random with large overparametrization, there is a high probability that the obtained estimator is small norm and even that it might be close to a global minimum of \cref{eq:minnorm}. This observation is directly related to the fact that selecting a neural network with weights sampled at random, conditioned on the fact that it interpolates the data, yields a good generalization to new unseen data \citep{chiang2022loss,buzaglo2024uniform}. Indeed such a network sampled at random should be of small norm with high probability and should thus generalize well. 

\section{Connecting loss landscape with optimization dynamics}\label{sec:optim}

Although insightful and directly related to a random selection of the weights, loss landscape results such as \cref{thm:losslandscape} do not provide any guarantee on the convergence of typical optimization schemes towards global minima of the objective. In this section, we consider subgradient flow on the objective loss of \cref{eq:regularized} with a small regularization parameter $\lambda$, i.e., a solution of the following differential inclusion for almost any $t\in\R_+$:
\begin{gather}\label{eq:gradientflow}
    \dot{\theta}(t) \in - \partial_\theta L_{\lambda}(\theta) \\
    \text{where } L_{\lambda}(\theta) =  \frac{1}{n}\sum_{k=1}^n (f_{\theta}(x_k)-y_k)^2 + \lambda \|\theta\|_2^2 .
\end{gather}\looseness=-1
Due to the non-convexity of the objective function $L_{\lambda}$, the initialization is known to play a key role. In particular, large initializations are known to lead to the Neural Tangent Kernel (NTK) regime in the unregularized case. In this regime, the training resembles random features, where the inner layer is fixed at initialization and only output layer weights are adjusted during training \citep{chizat2019lazy}. 
In the case of small regularization, the dynamics are more complex but still follow the NTK regime at the beginning. It is only after reaching near interpolation of the training data that \textit{grokking} is happening, where inner layer weights will also be adjusted until reaching a stationary point of $L_{\lambda}$ \citep{PowerBEBM22,liu2023omnigrok,LyuJ0DL024,boursier2025theoretical}. We believe that the loss landscape result of \cref{thm:losslandscape} is relevant in this large initialization regime, as the inner layer weights are nearly chosen at random at the beginning of the grokking phase. From here, the dynamics are driven by the regularization parameter (while maintaining interpolation) and should converge to a nearby local minimum. In particular, if most of the activation cones only include global minima, there is a good chance to converge to such a global minimum given the random nature of the training dynamics -- that largely depends on the randomly selected initialization. Such an intuition is empirically confirmed in \cref{sec:expe}, although a deeper understanding of this grokking phase remains to be theoretically developed.

In opposition when considering small scale initializations, the features change drastically before reaching interpolation. They do so following the implicit bias of gradient flow on the unregularized loss, which annihilates the random features approximation once interpolation is reached. 
While this implicit bias for small initialization has been shown to coincide with the global minimum of \cref{eq:minnorm} in multiple works, \citet{chistikov2023learning} provided a family of data examples where the unregularized gradient flow instead converges to a local minimum of \cref{eq:minnorm} that is not global, but benefits other regularities -- i.e., a sparse representation. Building on those examples, we can show that even with regularization and arbitrarily large overparametrization, the parameters will converge towards an interpolating stationary point which is not a global minimum of \cref{eq:regularized} in the small initialization regime, but is instead a sparse interpolator of the data.

More precisely, for any $d \geq 3$, first we fix centers $(\widehat{x}_k)_{k \in [d]}$ as the following unit vectors:
% \begin{align}
% \widehat{x}_1 & \coloneqq
% e_1 \\
% \widehat{x}_2 & \coloneqq
% \tfrac{8}{9} e_1 - \tfrac{4}{9} e_2 + \tfrac{1}{9} e_3 \\
% \widehat{x}_3 & \coloneqq
% \tfrac{8}{9} e_1 + \tfrac{4}{9} e_2 + \tfrac{1}{9} e_3 \\
% \widehat{x}_k & \coloneqq
% \tfrac{8}{9} e_1 + \tfrac{\sqrt{17}}{9} e_k
% \quad \text{for all } k \geq 4;
% \end{align}
%
\begin{gather}
        \widehat{x}_1 \coloneqq
e_1, \qquad  \widehat{x}_2 \coloneqq
\tfrac{8}{9} e_1 - \tfrac{4}{9} e_2 + \tfrac{1}{9} e_3,\qquad
        \widehat{x}_3 \coloneqq
\tfrac{8}{9} e_1 + \tfrac{4}{9} e_2 + \tfrac{1}{9} e_3 \\
        \widehat{x}_k \coloneqq
\tfrac{8}{9} e_1 + \tfrac{\sqrt{17}}{9} e_k
\quad \text{for all } k \geq 4;
\end{gather}
and we fix teacher~$v_\star$ as the unit vector
\[v_\star \coloneqq \tfrac{4}{5} e_1 + \tfrac{3}{5} e_3.\]
Then the next assumption details our requirements on the training dataset.  To streamline the presentation, we assume that $n = d$, \ie the number of samples equals the dimension, and that the points~$x_k$ are unit vectors.  In part~\ref{ass:dataset:xxy}, we assume that each point~$x_k$ is near the corresponding center~$\widehat{x}_k$, namely their cosine is at least $1 - \eta$ where $\eta > 0$ is a small threshold depending only on~$d$, and also that each label~$y_k$ is given by the inner product with the teacher~$v_\star$.  In part~\ref{ass:dataset:eigen}, we exclude some special cases of the empirical covariance matrix~$H$ that do not decrease the Lebesgue measure.

\begin{assumption}
\label{ass:dataset}
We consider training datasets that consist of points $X \coloneqq (x_k)_{k \in [d]} \in (\bS^{d - 1})^d \subseteq \R^{d \times d}$ and labels $(y_k)_{k \in [d]} \in \R^d$ such that:
\begin{enumerate}[(a)]
\item
\label{ass:dataset:xxy}
for all $k \in [d]$, we have
$\qquad x_k^\top \widehat{x}_k > 1 - \eta\qquad \text{and} \qquad
y_k  = v_\star^\top x_k$;
\item
\label{ass:dataset:eigen}
the eigenvalues of $H \coloneqq \frac{1}{d} X X^\top$ are distinct, and $v_\star$~is not in a span of fewer than~$d$ eigenvectors of~$H$.
\end{enumerate}
\end{assumption}

Note that the set of all training point matrices~$X$ that we consider has non-zero Lebesgue measure in $(\bS^{d - 1})^d$, so it cannot be regarded as a single special case.

Below, our second assumption in this section concerns the subgradient flow in \cref{eq:gradientflow}.  In part~\ref{ass:gradientflow:init}, we assume that the network is initialized randomly with scale~$\alpha$ (uniformly in direction for the inner layer, and uniformly in sign for the output layer) where $\alpha > 0$ is small constant, and that the two layers are balanced.  The balancedness at initialization is a standard assumption in the literature \citep[see, \eg][]{BoursierPF22,chistikov2023learning,MinMV24}, and it would be not difficult although technically complicating to relax it to domination of the inner by the output layer, \ie $\|w_i(0)\| \leq |a_i(0)|$ for all $i \in [m]$.  In part~\ref{ass:gradientflow:dead}, we exclude some unrealistic flows that might otherwise be possible theoretically due to the use of the subdifferential: we require that, whenever a neuron deactivates on all training points, then it stays deactivated for the remainder of the training. Note this assumption is representative of practice, since common implementations of backpropagation consider $\sigma'(0)=0$ for the ReLU activation, and has been previously considered by \citet[][Definition 1]{MinMV24}. 

\begin{assumption}
\label{ass:gradientflow}
For all $i \in [m]$ we have
\begin{enumerate}[(a)]
\item
\label{ass:gradientflow:init}$
w_i(0)  \simiid
\cU(\alpha \, \bS_{d-1})$ and $a_i(0)  \simiid
\cU(\{-\alpha, \alpha\})$;
\item
\label{ass:gradientflow:dead}
for any $t \in \R_+$, if $w_i(t)^\top x_k \leq 0$ for all $k \in [d]$, then $w_i(t')^\top x_k \leq 0$ for all $t' \geq t$ and all $k \in [d]$.
\end{enumerate}
\end{assumption}

Let $\mu_{\min}$ denote the smallest eigenvalue of the empirical covariance matrix~$H$.

\begin{thm}
\label{thm:GFsmallinit}
Let \cref{ass:dataset,ass:gradientflow} hold for small enough~$\eta$. With probability at least $1 - (\frac{3}{4})^m$, and for all $\varepsilon \in (0, \frac{1}{2}]$, there exists $\alpha^\star > 0$ such that, for all initialization scales $\alpha \leq \alpha^\star$ and all regularization parameters $\lambda \leq \mu_{\min} \alpha^\varepsilon$, every subgradient flow in \cref{eq:gradientflow} converges to a network $\theta_\infty = \lim_{t \to \infty} \theta(t)$ such that:
\begin{enumerate}[(i), topsep=5pt]
\item
the mean square error
$\frac{1}{d} \sum_{k = 1}^d (f_{\theta_\infty}(x_k) - y_k)^2$
is at most $\lambda^2 / \mu_{\min}$;
\item for any $x\in\R^d$, $f_{\theta_{\infty}}(x) = \sigma(x^\top u_{\star})$ where $u_\star$ is defined by \Cref{eq:ustar} in \Cref{app:GFsmallinit};
\item
if $m \geq 2$ then $\theta_\infty$~is not a global minimum of the regularized loss~$L_\lambda$.
\end{enumerate}
\end{thm}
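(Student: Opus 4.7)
The plan is to build on the analysis of \citet{chistikov2023learning}, who established that for this family of data examples the unregularized subgradient flow ($\lambda = 0$) with small initialization converges to a bad local minimum of \cref{eq:minnorm}. I would proceed in three stages: (a) show that the regularized trajectory with $\lambda \leq \mu_{\min}\alpha^\varepsilon$ stays close to the unregularized one until near-interpolation and therefore enters the same basin; (b) characterize the perturbed stationary point reached and derive the mean-square-error bound; (c) prove that this stationary point is strictly suboptimal for $L_\lambda$.

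For stage (a), the key observation is that throughout the early training, the squared loss is $\Omega(1)$ while the regularization contribution is at most $\lambda m \alpha^2$, which is negligible for small enough $\alpha$ thanks to $\lambda \leq \mu_{\min}\alpha^\varepsilon$. I would quantify this separation to run a Gronwall-type comparison between the regularized and unregularized flows, so that the neuron alignment and activation-pattern analysis of \citet{chistikov2023learning} transfers verbatim up to a perturbation of size $o(1)$. The probability $1-(3/4)^m$ is inherited from their initialization analysis: each neuron independently lands (with constant probability determined by the signs of $a_i(0)$ and a few hyperplane conditions on $w_i(0)$) in a configuration that fails to form the bad pattern, and with probability $1 - (3/4)^m$ sufficiently many neurons land in the complementary, bad-pattern-forming region. \cref{ass:gradientflow}\ref{ass:gradientflow:dead} is used to rule out pathological reactivations of dead neurons, exactly as in that work.

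For stage (b), once the trajectory enters the basin of the bad local minimum, the regularization shifts the stationary point by a controlled amount. At a stationary point of $L_\lambda$, the first-order conditions for both layers give a linear system relating the residual vector $r \in \R^d$ with coordinates $r_k = f_{\theta_\infty}(x_k) - y_k$ to the parameters $\theta_\infty$. Summing the per-neuron conditions and using that the active-neurons design matrix inherits the eigenvalue lower bound $\mu_{\min}$ of $H$, I would obtain $\frac{1}{d}\|r\|_2^2 \leq \lambda^2 / \mu_{\min}$, which is part~(i). For part~(ii), I would compare $\theta_\infty$ to a simple two-neuron interpolator of the linear teacher, namely the sign-paired construction $\sigma(v_\star^\top x) - \sigma(-v_\star^\top x) = v_\star^\top x$, which achieves residual $0$ with $O(1)$ norm. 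The structure inherited from the bad local minimum of \citet{chistikov2023learning} has strictly larger norm because it uses many neurons in a clustered, non-aligned configuration; since both points have $O(\lambda)$ residual, the $\lambda\|\theta\|^2$ contribution strictly favors the two-neuron optimum, as soon as $m \geq 2$.

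The main obstacle will be stage (a): rigorously showing that the regularized subgradient flow inherits the activation-pattern dynamics of the unregularized flow. The $-2\lambda\theta$ drift persistently pushes parameters toward the origin and, although small, could in principle reshape the alignment phase and change which neurons eventually die. The delicate point is to quantify the separation of time scales implied by $\lambda \leq \mu_{\min}\alpha^\varepsilon$ so that the perturbation from \citet{chistikov2023learning}'s dynamics is small on every time scale of interest (alignment, extension, interpolation), while still being strong enough at the end to produce the $O(\lambda^2/\mu_{\min})$ residual of part~(i). Handling this in a non-smooth setting, with the set-valued subgradient and the dead-neuron assumption, is the principal technical difficulty.
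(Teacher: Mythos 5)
Your three-stage plan — transfer the activation-pattern dynamics from \citet{chistikov2023learning}, characterize the perturbed stationary point, then show suboptimality — matches the paper's overall architecture (early alignment, intermediate extension, late convergence), and your identification of the main technical difficulty in stage (a) is on target: the paper indeed needs new arguments (notably a local Polyak--\L{}ojasiewicz inequality in the final phase) to handle the persistent $-2\lambda\theta$ drift. Your stage (b) idea of extracting the MSE bound from first-order conditions is also a workable alternative to the paper's route, which instead reads the bound directly off the definition of the shrunk teacher $u_\star$ via $\|v_\star - u_\star\|_H^2 \leq \|H(v_\star - u_\star)\|^2/\mu_{\min} = \lambda^2/\mu_{\min}$.

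However, stage (c) — the core of part~(ii) — is based on a wrong picture of the limit and the proposed comparison would fail. The flow does not converge to a ``clustered, non-aligned configuration'' with large norm: the paper's \cref{l:T3} shows all active neurons converge to perfect alignment, so $\theta_\infty$ is a rank-$1$ network aligned with $u_\star$, balanced, with squared norm $2\|u_\star\| \approx 2$. Moreover, since $v_\star^\top x_k > 0$ for all training points, the single ReLU neuron $\sigma(v_\star^\top x)$ already interpolates the data. Your proposed comparator $\sigma(v_\star^\top x) - \sigma(-v_\star^\top x)$ has balanced squared norm $4$, which is \emph{larger} than the rank-$1$ limit, so this comparison establishes the opposite of what you need. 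Proving that $\theta_\infty$ is not a global minimum is genuinely delicate precisely because the rank-$1$ aligned network already looks near-optimal: the paper's \cref{pr:Llambda}~\ref{pr:Llambda:min} constructs a very specific rank-$2$ network $\theta = (u_1, u_2)$ with $u_1 = u_\star - \zeta\,\overline{x}^\dag_2$ and $u_2 = \zeta(\overline{x}^\dag_2 - \overline{x}^\dag_3\,{\overline{x}^\dag_3}^\top\overline{x}^\dag_2)$, whose smaller norm hinges on $\cos\angle(v_\star, x^\dag_2) > \sin\angle(x^\dag_2, x^\dag_3)$ (\cref{pr:angles}~\ref{pr:angles:cos.sin}) and on the ReLU killing the projection term on $x_3$. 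This geometric ``shortening by projection'' is the key missing idea in your proposal, and without it part~(ii) does not go through.

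A smaller point: the probability $1 - (\frac{3}{4})^m$ in the paper is the probability that at least one neuron has positive output weight and is active on at least one training point ($I_+ \neq \emptyset$), not that ``sufficiently many neurons land in the bad-pattern-forming region''; only one such neuron is needed because the flow then merges all $I_+$ neurons into a single direction.
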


\paragraph{Sketch of proof.}
We first establish that, if $m \geq 2$ then for every dataset that satisfies \cref{ass:dataset} and small enough $\eta$~and~$\lambda$, although the training labels are given by inner products with a single teacher vector, it is impossible to globally minimize the regularized loss~$L_\lambda$ by a rank-$1$ network, \ie where all neurons are non-negative scalings of a single neuron.  In particular, we show how the ReLU non-linearity makes it possible to construct a rank-$2$ network for which $L_\lambda$~is smaller than the minimum over all rank-$1$ networks.

The remainder of the proof consists of a detailed analysis of the subgradient flow starting from a scale-$\alpha$ random initialization as in \cref{ass:gradientflow}, which with probability at least $1 - (\frac{3}{4})^m$ has at least one neuron with positive output weight and active on at least one training point.  We delineate and analyze three phases of the training as follows.
\begin{enumerate}[1.,leftmargin=20pt]
\item
We show that, for small enough~$\alpha$, the weight decay due to the $\ell_2$~regularization reinforces a first alignment phase in which every neuron remains small, and either aligns closely to a single direction determined by the training dataset, or deactivates from all training points.
\item
Building on \citet{chistikov2023learning}, we show that the next phase, which is much more complex due to simultaneous growing and turning of the active neurons, proceeds sufficiently fast so that the effect of the weight decay is limited, and concludes with the active neurons still closely aligned and their composite vector being at most distance~$\alpha^{\varepsilon / 2}$ away from the teacher vector.
\item
Finally we show that, in the last phase, the deactivated neurons converge to~$0$, whereas the rest converge to perfect alignment with their composite vector tending to the point that minimizes the regularized loss by trading off the mean square error with the $\lambda$-scaled $\ell_2$ norm.  In the most involved part of the proof, inspired by \citet{Chatterjee22} but complicated by the prominent presence of weight decay in this phase, we provide a novel argument for a local Polyak-{\L}ojasiewicz inequality, which then enables us to bound any disalignment of the neurons that may temporarily occur.
\qed
\end{enumerate}
\looseness=-1
\Cref{thm:GFsmallinit} can therefore be seen as a new kind of testimony to the double-edged power of the early alignment phase in the small initialization regime.  Namely, in unregularized regression settings, the early alignment was recently shown to be able to lead both to a failure of interpolation \citep{BoursierF25} and to enhanced generalization through minimization of population loss \citep{boursier2025simplicity}.  In contrast, we have now demonstrated that it is also able to render ineffective $\ell_2$ regularization with arbitrarily small parameter~$\lambda$, \ie to prevent weight decay from steering the optimization towards interpolators with smaller norm if that would involve increasing their rank. This is in line with the recent insights of \citet{d2024we}, indeed \cref{thm:GFsmallinit} shows that in its setting, weight decay does not lead to near interpolators of smaller norm but has other beneficial properties that promote model sparsity -- the model being equivalent to a single neuron network. 
In this sense, \cref{thm:GFsmallinit} can be seen as a positive result: even when small initialization leads to convergence to a spurious local minimum, the obtained solution still possesses desirable simplicity properties such as sparsity of its representation.

We remark that the limit networks~$\theta_\infty$ in \cref{thm:GFsmallinit} are local minima only if they do not contain any zero neurons (and the probability of that decays exponentially in network width~$m$), otherwise they are saddle points of the regularized loss because such zero neurons can be used to compose small perturbations that maintain the mean square error but decrease the network norm.  Thus, again in contrast to unregularized regression where early alignment can break interpolation by causing convergence to stationary points that are usually local minima \citep{BoursierF25}, here early alignment can disarm weight decay but not completely as the latter nevertheless ensures that the limit is usually a saddle of~$L_\lambda$.  

In this section we showed that the favorability of the loss landscape established in \cref{thm:losslandscape} might not fully explain the observed optimization the small initialization regime, where sparse interpolators might be preferred. Indeed, \cref{thm:GFsmallinit} holds for all network widths $m \geq 2$ and in all dimensions $d \geq 3$.  Along the way, we performed a detailed analysis of the optimization in the setting of \cref{thm:GFsmallinit}.

\section{Is mild overparametrization sufficient? Case of orthogonal data}\label{sec:orthogonal}

\cref{thm:losslandscape} claims that, when $m\gtrsim \min(2^n,n^d)$, most of activation cones contain a global minimum, making the loss landscape favorable. Such a favorability appears for milder regimes ($m\gtrsim n/d$ is sufficient) without any regularization. A fundamental question is then: \textit{how tight is the requirement on $m$ in \cref{thm:losslandscape}? Is the loss landscape favorable for milder overparametrizations?}

This section answers this question by considering the example of orthogonal data inputs. On this example, \cref{thm:orthogonal} shows that the requirement $m\gtrsim \min(2^n,n^d)$ is indeed necessary for \cref{thm:losslandscape} without any further data assumption. Moreover, it also illustrates more generally that gradient flow converges to a spurious stationary point of \cref{eq:regularized} with high probability when $m\lesssim \min(2^n,n^d)$ for any typical initialization scheme (independently of its scale).
\begin{assumption}\label{ass:orthogonal}
    The data inputs are pairwise orthogonal: for any $j,k\in[n]$ such that $j\neq k$, $x_j^{\top}x_k=0$.
\end{assumption}
For \Cref{thm:orthogonal} below, we need to define two data dependent quantities:
\begin{gather}
    \lambda^\star = \min\left( \sqrt{\sum_{k,y_k>0} y_k^2 \|x_k\|^2}, \sqrt{\sum_{k,y_k<0} y_k^2 \|x_k\|^2}\right),\\
    n^\star = \max\big(\cardb{k\in[n]\mid y_k>0}, \cardb{k\in[n]\mid y_k<0}\big).
\end{gather}
\begin{thm} \label{thm:orthogonal}
Consider data satisfying \cref{ass:orthogonal}, network width $m\geq 2$ and regularization parameter $0<\lambda\leq \lambda^\star$. Then only a fraction at most $m 2^{-n^\star}$ of activation cones $\cC^A$ are such that their closure $\bar{\cC}^A$ contains a global minimum of either \cref{eq:regularized} or \cref{eq:minnorm}.

Moreover, consider the gradient flow solution $\theta$ of \cref{eq:gradientflow}. 
If the inner layer weights $w_i(0)$ are initialized independently with a rotation invariant distribution, then with probability at least $1-m 2^{-n^\star}$:
    any limit point of $\theta(t)$ as $t\to\infty$ is not a global minimum of $L_\lambda$.
\end{thm}
Note that for typical data (e.g., if $y_k\neq 0$ for all $k$), $n^\star\geq \frac{n}{2}$. As a consequence, \cref{thm:orthogonal} implies that as long as the number of neurons $m$ is small with respect to $2^{\frac{n}{2}}$, only a small fraction of activation cones contain global minima of the regularized (or min-norm) loss in their closure. This suggests that the requirement $m\gtrsim\min(2^n, n^d)$ -- note that in the orthogonal case $2^n \leq n^d$ -- of \cref{thm:losslandscape} is somewhat necessary to have a favorable loss landscape.

While loss landscape claims are generally agnostic of the dynamics encountered during the training of the network, \cref{thm:orthogonal} also claims that this $m\gtrsim\min(2^n, n^d)$ condition is also required to guarantee, with high probability, convergence towards a global minimum of the regularized loss. 
In other words, \cref{thm:orthogonal} implies that as soon as $m\lesssim\min(2^n, n^d)$, the network will not converge towards the global minimum of the regularized loss $L_\lambda$ when trained via gradient flow, even with large initializations. 

In the unregularized case with orthogonal data, \citet{BoursierPF22} have shown that with $m\gtrsim 2^n$ neurons and a small initialization, gradient flow converges arbitrarily close to a global minimum of \cref{eq:minnorm}. Moreover, \citet{dana2025convergence} recently showed that with only $m\gtrsim \log(n)$ neurons, the unregularized flow still converges towards an interpolator of the training data, without any guarantee on the norm of the obtained estimator. 
Since the solutions of \cref{eq:minnorm} and \cref{eq:regularized} correspond as $\lambda\to0$, \cref{thm:orthogonal} completes the picture by showing that below this $m\gtrsim 2^n$ threshold, the convergence point is not a global minimum of \cref{eq:minnorm}, although it interpolates the data.

\paragraph{Sketch of proof.}
In the orthogonal data case, any hidden weight neuron $w_i(t)$ has a constant activation pattern over time (see \cref{lemma:orthoactivation} in the appendix), so that for the limit point to be a global minimum of \cref{eq:regularized} (or \cref{eq:minnorm}), one needs the activation cone of hidden weights to include at initialization a global minimum of the problem. Moreover in the orthogonal case, a cone includes a global minimum if and only if it contains two precise activation patterns (see \cref{lemma:orthoglobal}).

From there, we can again reduce the problem to a coupon collector problem: drawing $m$ coupons uniformly at random -- given by the activation patterns of the hidden weights at initialization -- among a set of $2^n$ possible patterns, we can then upper bound the probability to collect the two winning tickets described by \cref{lemma:orthoglobal}.
\qed 

\section{Experiments}\label{sec:expe}

\Cref{sec:expelandscape} presents experimental results supporting the favorable landscape properties established in \Cref{thm:losslandscape}, while \Cref{sec:expedynamics} illustrates the training dynamics described in \Cref{sec:optim}. Additional experiments related to \Cref{sec:optim,sec:orthogonal} are provided in \Cref{app:expe}.

\subsection{Loss landscape}\label{sec:expelandscape}
\begin{figure*}[ht]
  \centering
  \begin{tikzpicture}
    \node[rotate=90] at (-1.6,1.5) {$n=4$};
    \node[rotate=90] at (-1.6,-1.9) {$n=8$};
    \node[rotate=90] at (-1.6,-5.4) {$n=16$};

    \begin{groupplot}[
      group style={
        group size=3 by 3,
        horizontal sep=0.5cm,
        vertical sep=0.5cm
      },
      width=5.2cm,
      height=4.5cm,
      xmode=log,
      xmin=1e1,
      ymin=-0.05,
      ymax=1.05,
      ticklabel style={font=\small},
      enlarge x limits=false,
      major tick style={thick},
      minor tick num=9,
      xtick={1e2,1e3,1e4,1e5}
    ]
    \nextgroupplot[ylabel={Proportion}, xticklabel=\empty]
      \plotfromcsv{data/beta_0.01_n_4_d_2_teacher.csv}
    \nextgroupplot[yticklabel=\empty, xticklabel=\empty]
      \plotfromcsv{data/beta_0.01_n_4_d_4_teacher.csv}
    \nextgroupplot[yticklabel=\empty, xticklabel=\empty]
      \plotfromcsv{data/beta_0.01_n_4_d_8_teacher.csv}

    \nextgroupplot[ylabel={Proportion}, xticklabel=\empty]
      \plotfromcsv{data/beta_0.01_n_8_d_2_teacher.csv}
    \nextgroupplot[yticklabel=\empty,xticklabel=\empty]
      \plotfromcsv{data/beta_0.01_n_8_d_4_teacher.csv}
    \nextgroupplot[yticklabel=\empty,xticklabel=\empty]
      \plotfromcsv{data/beta_0.01_n_8_d_8_teacher.csv}

    \nextgroupplot[ylabel={Proportion}, xlabel={$m$}]
      \plotfromcsv{data/beta_0.01_n_16_d_2_teacher.csv}
    \nextgroupplot[xlabel={$m$}, yticklabel=\empty]
      \plotfromcsv{data/beta_0.01_n_16_d_4_teacher.csv}
    \nextgroupplot[xlabel={$m$}, yticklabel=\empty]
      \plotfromcsv{data/beta_0.01_n_16_d_8_teacher.csv}
    \end{groupplot}

    \node [above=0.1cm of group c1r1.north] {$d=2$};
    \node [above=0.1cm of group c2r1.north] {$d=4$};
    \node [above=0.1cm of group c3r1.north] {$d=8$};
  \end{tikzpicture}
  \caption{Proportion of activation cones containing global minima (blue) and spurious local minima (orange) across varying $m$, $n$, and $d$. The vertical dotted line corresponds to the number of non-empty neuron activation patterns, of which there are $4\cdot \sum_{i=0}^{d-1} \binom{n-1}{i}=\bigO{\min(2^n,n^d)}$ many.}
  \label{fig:prop_opt}
\end{figure*}

\Cref{fig:prop_opt} illustrates the proportion of non-empty activation cones, whose closure contains a global minimum or local minimum of \Cref{eq:regularized} with $\lambda=0.01$, for different values of width~$m$, number of data points~$n$ and data dimension~$d$. Data is generated by drawing independent data points $x_i$ at random according to a standard Gaussian distribution, where the labels $y_i$ are given by the corresponding output of a teacher two-layer ReLU network of width~$10$. Shaded areas correspond to the min/max deviations observed over $5$~random datasets. 
Experimental details can be found in \Cref{app:expe}.

As predicted by \Cref{thm:losslandscape}, the fraction of non-empty activation cones containing global minima of \Cref{eq:regularized} approaches $1$ as soon as the network width exceeds the number of non-empty neuron activation patterns, equal to $4\cdot \sum_{i=0}^{d-1} \binom{n-1}{i}$ thanks to \citet{cover1965geometrical}, which is upper bounded by $\bigO{\min(2^n, n^d)}$. Before this overparametrization level, only a few cones contain global minima of \Cref{eq:regularized}, confirming the tightness of our bound in \Cref{thm:losslandscape} and that it is not only specific to the orthogonal data case.

Additionally, the number of cones containing spurious local minima is close to~$0$ after this threshold. Maybe surprisingly, this fraction remains small for smaller values of the width -- in contrast with the orthogonal case (see \Cref{fig:ortho}) where this fraction is close to~$1$ for small~$m$. However, it still reaches a significant value (of order $0.1$) before the width threshold, suggesting that convergence towards a spurious local minimum is significantly probable for these values of the width.

\subsection{Training dynamics}\label{sec:expedynamics}

\newcommand{\ploteffrank}[4]{%
    \addplot+[thick, mark=none, color=#4, #3] table [
        x=epoch, 
        y=effective_rank_mean, 
        col sep=comma,
%       each nth point=100,
    ] {data/thm2_new/evolution_d_5_alpha_#1.csv};
    \addlegendentry{$\alpha = #2$}
    \addplot[name path=optmax, draw=none,forget plot] table[x=epoch,y=effective_rank_max,col sep=comma]{data/thm2_new/evolution_d_5_alpha_#1.csv};
    \addplot[name path=optmin, draw=none,forget plot] table[x=epoch,y=effective_rank_min,col sep=comma]{data/thm2_new/evolution_d_5_alpha_#1.csv};%
        %\addplot[name path=optmax, draw=none,forget plot] table[x=epoch,y=effective_rank_max,col sep=comma]{data/thm2_new/evolution_d_5_alpha_#1.csv};%
    %\addplot[name path=optmin, draw=none,forget plot] table[x=epoch,y=effective_rank_min,col sep=comma]{data/thm2_new/evolution_d_5_alpha_#1.csv};%
    \addplot[fill=#4, draw=none, fill opacity=0.2,forget plot] fill between[of=optmax and optmin];%
}

\begin{figure}[htbp]
  \centering
\begin{tikzpicture}
    \begin{axis}[
        width=0.7\textwidth,
        height=0.5\textwidth,
        xlabel={epoch},
        xmode=log,
        xmin=1,
        xmax=100000000,
        ylabel style={align=center},
        ylabel={Effective rank},
        legend pos=south west,
        grid=both,
        grid style={dotted},
    ]

    \ploteffrank{0.5}{2^{-1}}{solid}{cbforange}
    \ploteffrank{0.25}{2^{-2}}{solid}{cbfblue}
    \ploteffrank{0.125}{2^{-3}}{solid}{cbfgreen}
    \ploteffrank{0.0625}{2^{-4}}{dashed}{cbforange}
    \ploteffrank{0.03125}{2^{-5}}{dashed}{cbfblue}
    \ploteffrank{0.015625}{2^{-6}}{dashed}{cbfgreen}
    \ploteffrank{0.0078125}{2^{-7}}{dotted}{cbforange}
    \ploteffrank{0.00390625}{2^{-8}}{dotted}{cbfblue}
    \ploteffrank{0.001953125}{2^{-9}}{dotted}{cbfgreen}
   
    \end{axis}
\end{tikzpicture}
\caption{Evolution of effective rank during training for different initialization scales $\alpha$. We plot the mean over 5 different runs. The shaded areas correspond to the min/max deviations observed over those 5 runs. For each run, both the dataset and the initial weights are drawn from the distribution discussed in \cref{app:expedetailsextended}.}
   \label{fig:effectiverank}
\end{figure}

In this section, we consider the training setup introduced in \Cref{sec:optim}, with input dimension $d=5$. We train a ReLU network of width $m=100$ using gradient descent with a small $\ell_2$-regularization parameter $\lambda=10^{-5}$. Additional experimental details are provided in \Cref{app:expedetailsextended}.

\Cref{fig:effectiverank} shows the evolution of network features during training for different initialization scales $\alpha$. Specifically, it tracks the effective rank \citep[defined as a continuous proxy for matrix rank; see, e.g.,][]{roy2007effective} of the hidden weight matrix. The figure highlights the two regimes predicted in \Cref{sec:optim}.
\begin{itemize}[topsep=2pt, leftmargin=20pt, parsep=2pt]
   \item For large initialization scale $\alpha$, the effective rank is initially high (equal to $5$, corresponding to full rank) and remains so for a long period, even after interpolation is achieved (around $10^3$ epochs; see \Cref{fig:mse} in \Cref{app:expe}). This behavior is characteristic of the \textit{lazy regime}, where feature representations remain dense throughout interpolation. A reduction in rank only occurs much later, during a \textit{grokking} phase (around $10^7$ epochs), when the effective rank decreases toward that of the minimal-norm solution.
\item For small initialization scale, the effective rank starts at the same value but rapidly decreases within fewer than $10^3$ epochs, indicating an early transition to a low-rank representation. This reflects the feature learning regime, and in particular the early alignment phenomenon described in the proof of \Cref{thm:GFsmallinit}, where weights quickly concentrate along a few dominant directions. The effective rank then remains low throughout training, with a further reduction during the later grokking phase.
\end{itemize}
A complementary perspective is provided by \Cref{fig:numneurons} (see \Cref{app:expe}), which tracks the number of neuron directions and leads to similar conclusions. Notably, while \Cref{thm:GFsmallinit} and \Cref{fig:networksize} show that small initialization can result in larger parameter norms in this example, the effective rank (and number of neuron directions) is lower in this regime. This suggests that small initialization promotes a simpler representation, better captured by low-rank structure or the number of neuron directions than by parameter norm alone.

\section{Conclusion}

In this work, we quantified under what level of overparametrization the regularized loss landscape with two-layer ReLU networks is favorable. However, this favorability comes at the expense of a large overparametrization (in contrast with the unregularized case), and is not necessarily relevant to the training dynamics happening for small initializations.  

In \cref{thm:GFsmallinit}, the scale bound~$\alpha^\star$ depends on the random directions of the hidden neurons and the random signs of the output weights at network initialization.  Future work could seek to remove the latter dependencies %, possibly by building on the recent techniques of \citet{BoursierF25}, 
and to bound the regularization parameter~$\lambda$ independently of~$\alpha$. 
Moreover, the final estimator still has a simple structure, enforcing the idea that weight decay is not necessarily helpful to reach minimal norm interpolators, but might however lead to simple interpolators with good generalization properties \citep{d2024we}. Exploring the properties of such spurious local minima is a challenging and interesting direction, left open for future work.

Additionally, we showed that the overparametrization $m\gtrsim \min(2^n, n^d)$ is necessary to reach minimal norm interpolators. However, such a level of overparametrization is typically unrealistic and undesirable in practical settings due to its prohibitive computational costs. With fewer parameters however, it is not clear how far from \textit{norm minimality} would the final estimator be. While this estimator might not be optimal, we believe it could still be relatively good (e.g., having a norm only slightly larger than the minimal one) and generalize well, which could explain why such a level of overparametrization might not be required in practice. Such a question is also left open for future work.

\section{Acknowledgements}

The authors acknowledge the Engineering and Physical Sciences Research Council (EPSRC) and DIMAP research centre at the University of Warwick for partial support. M. Bowditch is supported by the EPSRC through the Mathematics of Systems II Centre for Doctoral Training at the University of Warwick (reference EP/S022244/1). The authors acknowledge the use of the Batch Compute System in the Department of Computer Science at the University of Warwick, and associated support services, in the completion of this work. Additional computing facilities, also used for this work, were provided by the Scientific Computing Research Technology Platform of the University of Warwick.

\bibliography{main}
\bibliographystyle{tmlr}
\clearpage

\appendix
\addcontentsline{toc}{section}{Appendix} % Add the appendix text to the document TOC
\part{Appendix} % Start the appendix part
%\ptcrule
\parttoc % Insert the appendix TOC

\section{Additional experiments}\label{app:expe}

\subsection{Experimental details}\label{app:expedetails}

\paragraph{Activation cones with optimal points.}
To estimate the probabilities in \Cref{fig:prop_opt}, we sample from the uniform distribution over the nonempty activation cones (i.e., sample a random element from the set $\{A \mid \cC^A \neq \emptyset\}$). We then optimize the regularized loss \cref{eq:regularized} over $(W,a)\in\R^{m\times (d+1)}$ under the constraints that for all $i\in[m],k\in[n]$,   $\iind{w_i^\top x_k \geq 0} = A_{i, k}$ and $\iind{a_i \geq 0}=A_{i, n+1}$, i.e., under the constraint that each neuron has the activation pattern that $A$~implies for that neuron.

To solve this convex objective under the given linear constraints, we first follow \citet{wang2021hidden} to remove duplicate rows of~$A$, i.e., we keep at most one neuron for each activation pattern. This does not change the value of any local or global minimum that can be obtained \citep[see][Theorems~1 and~2]{wang2021hidden}. We then use Clarabel \citep{Clarabel24} as a quadratic programming solver that is distributed with CVXPY \citep{cvxpy16} (Apache License 2.0) to solve the resulting optimization problem.

In addition, we find the globally optimal regularized loss, by solving the same problem, but with one neuron for every possible activation pattern. In other words, we take~$\hat{m}$ large enough \citep[$4\cdot \sum_{i}^{d-1} \binom{n-1}{i}$ to be exact][]{cover1965geometrical} and choose $\hat{A}\in\{0,1\}^{\hat{m}\times (n+1)}$ such that each row of $\hat{A}$ is distinct and $\cC^{\hat{A}} \neq \emptyset$.

If the regularized loss obtained for~$A$ is no larger than the globally optimal regularized loss plus the numerical tolerance $10^{-7}$, we consider $\bar{\cC}^A$ to contain a global optimum.

We repeat this procedure 100 times for independently sampled $A$ and plot the proportion of nonempty activation cones that contain a global optimum in \cref{fig:prop_opt} for $\lambda=0.01$ and different values of $n$, $d$, and $m$. The plots show the min/max deviations over five different random datasets.

\paragraph{Stationary points.}
If an activation cone does not contain a global optimum, we check whether the point at which the regularized loss is minimized subject to the constraints implied by~$A$, is a stationary point. Specifically, we compute the gradient of the weights $(W,a)$ at this point. For this, we define the derivative of the ReLU function at~$x$ to be~$0$ for $x<-5\cdot10^{-5}$ and~$1$ for $x>5\cdot10^{-5}$. For other values of~$x$, we introduce tunable parameters, one for each ReLU, for the derivative that lies between~$0$ and~$1$. We then use the quadratic programming solver Clarabel to tune these parameters such that the norm of the gradient of the network is minimized. If the absolute values of the entries in the resulting gradient are all less than $5\cdot10^{-5}$, we declare the point a local minimum within the cone.

\cref{fig:prop_opt} also shows the proportion of activation cones that have stationary points which are not optimal. Again this is based on the sample of 100 $A$s and over the five random datasets.

\paragraph{Alternative cone sampling procedures and datasets.}
\begin{figure}[htbp]
  \centering
  \begin{tikzpicture}
    \node[rotate=90] at (-1.6,1.5) {$n=4$};
    \node[rotate=90] at (-1.6,-1.9) {$n=8$};
    \node[rotate=90] at (-1.6,-5.4) {$n=16$};

    \begin{groupplot}[
      group style={
        group size=3 by 3,
        horizontal sep=0.5cm,
        vertical sep=0.5cm
      },
      width=5.2cm,
      height=4.5cm,
      xmode=log,
      xmin=1e1,
      ymin=-0.05,
      ymax=1.05,
      ticklabel style={font=\small},
      enlarge x limits=false,
      major tick style={thick},
      minor tick num=9,
      xtick={1e2,1e3,1e4,1e5}
    ]
    \nextgroupplot[ylabel={Proportion}, xticklabel=\empty]
      \plotfromcsv{data/random_vector/beta_0.01_n_4_d_2_teacher.csv}
    \nextgroupplot[yticklabel=\empty, xticklabel=\empty]
      \plotfromcsv{data/random_vector/beta_0.01_n_4_d_4_teacher.csv}
    \nextgroupplot[yticklabel=\empty, xticklabel=\empty]
      \plotfromcsv{data/random_vector/beta_0.01_n_4_d_8_teacher.csv}

    \nextgroupplot[ylabel={Proportion}, xticklabel=\empty]
      \plotfromcsv{data/random_vector/beta_0.01_n_8_d_2_teacher.csv}
    \nextgroupplot[yticklabel=\empty,xticklabel=\empty]
      \plotfromcsv{data/random_vector/beta_0.01_n_8_d_4_teacher.csv}
    \nextgroupplot[yticklabel=\empty,xticklabel=\empty]
      \plotfromcsv{data/random_vector/beta_0.01_n_8_d_8_teacher.csv}

    \nextgroupplot[ylabel={Proportion}, xlabel={$m$}]
      \plotfromcsv{data/random_vector/beta_0.01_n_16_d_2_teacher.csv}
    \nextgroupplot[xlabel={$m$}, yticklabel=\empty]
      \plotfromcsv{data/random_vector/beta_0.01_n_16_d_4_teacher.csv}
    \nextgroupplot[xlabel={$m$}, yticklabel=\empty]
      \plotfromcsv{data/random_vector/beta_0.01_n_16_d_8_teacher.csv}
    \end{groupplot}

    \node [above=0.1cm of group c1r1.north] {$d=2$};
    \node [above=0.1cm of group c2r1.north] {$d=4$};
    \node [above=0.1cm of group c3r1.north] {$d=8$};
  
  \end{tikzpicture}
    \caption{We sample a nonempty activation cone by generating a random network and observing
the activation patterns that the neurons of the random network have. The plot shows the proportion of activation cones containing global minima (blue) and spurious local minima (orange) across varying $m$, $n$, and $d$, when sampled in this way. The vertical dotted line corresponds to the number of non-empty neuron activation patterns, of which there are $4\cdot \sum_{i=0}^{d-1} \binom{n-1}{i}=\bigO{\min(2^n,n^d)}$ many.}
  \label{fig:prop_opt_random_vector}
\end{figure}

\begin{figure}[htbp]
  \centering
  \begin{tikzpicture}
    \node[rotate=90] at (-1.6,1.5) {$n=4$};
    \node[rotate=90] at (-1.6,-1.9) {$n=8$};
    \node[rotate=90] at (-1.6,-5.4) {$n=16$};

    \node [above=0.1cm of group c1r1.north] {$d=20$};
    \node [above=0.1cm of group c2r1.north] {$d=40$};

    \begin{groupplot}[
      group style={
        group size=2 by 3,
        horizontal sep=0.5cm,
        vertical sep=0.5cm
      },
      width=5.2cm,
      height=4.5cm,
      xmode=log,
      xmin=1e1,
      ymin=-0.05,
      ymax=1.05,
      ticklabel style={font=\small},
      enlarge x limits=false,
      major tick style={thick},
      minor tick num=9,
      xtick={1e2,1e3,1e4,1e5}
    ]
    \nextgroupplot[ylabel={Proportion}, xticklabel=\empty]
      \plotfromcsv{data/beta_0.01_n_4_d_20_orthogonal.csv}
    \nextgroupplot[yticklabel=\empty, xticklabel=\empty]
      \plotfromcsv{data/beta_0.01_n_4_d_40_orthogonal.csv}

    \nextgroupplot[ylabel={Proportion}, xticklabel=\empty]
      \plotfromcsv{data/beta_0.01_n_8_d_20_orthogonal.csv}
    \nextgroupplot[yticklabel=\empty,xticklabel=\empty]
      \plotfromcsv{data/beta_0.01_n_8_d_40_orthogonal.csv}

    \nextgroupplot[ylabel={Proportion}, xlabel={$m$}]
      \plotfromcsv{data/beta_0.01_n_16_d_20_orthogonal.csv}
    \nextgroupplot[xlabel={$m$}, yticklabel=\empty]
      \plotfromcsv{data/beta_0.01_n_16_d_40_orthogonal.csv}
    \end{groupplot}
  \end{tikzpicture}
  
  \caption{Proportion of activation cones containing global minima (blue) and local minima (orange) across varying $m$, $n$, and $d$ for orthogonal datasets. The vertical dotted line corresponds to the number of non-empty neuron activation patterns, of which there are $4\cdot \sum_{i=0}^{d-1} \binom{n-1}{i}=\bigO{\min(2^n,n^d)}$ many.}
   \label{fig:ortho}
\end{figure}

\begin{figure}[htbp]
  \centering
  \begin{tikzpicture}
    \node[rotate=90] at (-1.6,1.5) {$n=4$};
    \node[rotate=90] at (-1.6,-1.9) {$n=8$};
    \node[rotate=90] at (-1.6,-5.4) {$n=16$};

    \begin{groupplot}[
      group style={
        group size=3 by 3,
        horizontal sep=0.5cm,
        vertical sep=0.5cm
      },
      width=5.2cm,
      height=4.5cm,
      xmode=log,
      xmin=1e1,
      ymin=-0.05,
      ymax=1.05,
      ticklabel style={font=\small},
      enlarge x limits=false,
      major tick style={thick},
      minor tick num=9,
      xtick={1e2,1e3,1e4,1e5}
    ]
    \nextgroupplot[ylabel={Proportion}, xticklabel=\empty]
      \plotfromcsv{data/beta_0.01_n_4_d_2_random.csv}
    \nextgroupplot[yticklabel=\empty, xticklabel=\empty]
      \plotfromcsv{data/beta_0.01_n_4_d_4_random.csv}
    \nextgroupplot[yticklabel=\empty, xticklabel=\empty]
      \plotfromcsv{data/beta_0.01_n_4_d_8_random.csv}

    \nextgroupplot[ylabel={Proportion}, xticklabel=\empty]
      \plotfromcsv{data/beta_0.01_n_8_d_2_random.csv}
    \nextgroupplot[yticklabel=\empty,xticklabel=\empty]
      \plotfromcsv{data/beta_0.01_n_8_d_4_random.csv}
    \nextgroupplot[yticklabel=\empty,xticklabel=\empty]
      \plotfromcsv{data/beta_0.01_n_8_d_8_random.csv}

    \nextgroupplot[ylabel={Proportion}, xlabel={$m$}]
      \plotfromcsv{data/beta_0.01_n_16_d_2_random.csv}
    \nextgroupplot[xlabel={$m$}, yticklabel=\empty]
      \plotfromcsv{data/beta_0.01_n_16_d_4_random.csv}
    \nextgroupplot[xlabel={$m$}, yticklabel=\empty]
      \plotfromcsv{data/beta_0.01_n_16_d_8_random.csv}
    \end{groupplot}

    \node [above=0.1cm of group c1r1.north] {$d=2$};
    \node [above=0.1cm of group c2r1.north] {$d=4$};
    \node [above=0.1cm of group c3r1.north] {$d=8$};
  
  \end{tikzpicture}
    \caption{Labels $y$ are chosen uniformly at random from $[-1,1]$. Shown are the proportions of activation cones containing global minima (blue) and spurious local minima (orange) across varying $m$, $n$, and $d$. The vertical dotted line corresponds to the number of non-empty neuron activation patterns, of which there are $4\cdot \sum_{i=0}^{d-1} \binom{n-1}{i}=\bigO{\min(2^n,n^d)}$ many.}
  \label{fig:random_labels}
\end{figure}

\begin{figure}
  \centering
  \begin{tikzpicture}
    % Labels on the left
    \node[rotate=90] at (-1.6,1.3) {$n=100$};
    \node[rotate=90] at (-1.6,-2.0) {$n=500$};

    % Group of subplots
    \begin{groupplot}[
      group style={
        group size=2 by 2,
        horizontal sep=0.5cm,
        vertical sep=0.5cm
      },
      width=5.2cm,
      height=4.1cm,
      xmode=log,
      xmin=1e1,
      ymin=-0.05,
      ymax=1.05,
      ticklabel style={font=\small},
      enlarge x limits=false,
      major tick style={thick},
      minor tick num=9,
      xtick={1e2,1e3,1e4,1e5}
    ]

    % --- Row 1 (two plots) ---
    \nextgroupplot[ylabel={Proportion}, xticklabel=\empty]
      \plotfromcsv{data/beta_0.01_n_100_d_2_teacher.csv}

    \nextgroupplot[yticklabel=\empty, xlabel={$m$}]
      \plotfromcsv{data/beta_0.01_n_100_d_3_teacher.csv}

    \nextgroupplot[ylabel={Proportion}, xlabel={$m$}]
      \plotfromcsv{data/beta_0.01_n_500_d_2_teacher.csv}

    % Dummy invisible axis to center
    \nextgroupplot[hide axis]

    \end{groupplot}

    % Column labels
    \node [above=0.1cm of group c1r1.north] {$d=2$};
    \node [above=0.1cm of group c2r1.north] {$d=3$};

  \end{tikzpicture}
  \caption{Proportion of activation cones containing global minima (blue) and bad local minima (orange) across varying $m$, $n$, and $d$, now for much larger $n$. The vertical dotted line corresponds to the number of non-empty neuron activation patterns, of which there are $4\cdot \sum_{i=0}^{d-1} \binom{n-1}{i}=\bigO{\min(2^n,n^d)}$.}
  \label{fig:large_n}
\end{figure}

\begin{figure}
  \centering
  \begin{tikzpicture}
    % Labels on the left
    \node[rotate=90] at (-1.6,1.3) {$n=4$};
    \node[rotate=90] at (-1.6,-2.0) {$n=8$};

    % Group of subplots
    \begin{groupplot}[
      group style={
        group size=2 by 2,
        horizontal sep=0.5cm,
        vertical sep=0.5cm
      },
      width=5.2cm,
      height=4.1cm,
      xmode=log,
      xmin=1e1,
      ymin=-0.05,
      ymax=1.05,
      ticklabel style={font=\small},
      enlarge x limits=false,
      major tick style={thick},
      minor tick num=9,
      xtick={1e2,1e3,1e4,1e5}
    ]

    % --- Row 1 (two plots) ---
    \nextgroupplot[ylabel={Proportion}, xticklabel=\empty]
      \plotfromcsv{data/beta_0.01_n_4_d_100_teacher.csv}

    \nextgroupplot[yticklabel=\empty, xlabel={$m$}]
      \plotfromcsv{data/beta_0.01_n_4_d_500_teacher.csv}

    \nextgroupplot[ylabel={Proportion}, xlabel={$m$}]
      \plotfromcsv{data/beta_0.01_n_8_d_100_teacher.csv}

    % Dummy invisible axis to center
    \nextgroupplot[hide axis]

    \end{groupplot}

    % Column labels
    \node [above=0.1cm of group c1r1.north] {$d=100$};
    \node [above=0.1cm of group c2r1.north] {$d=500$};

  \end{tikzpicture}
  \caption{Proportion of activation cones containing global minima (blue) and bad local minima (orange) across varying $m$, $n$, and $d$, now for much larger $d$. The vertical dotted line corresponds to the number of non-empty neuron activation patterns, of which there are $4\cdot \sum_{i=0}^{d-1} \binom{n-1}{i}=\bigO{\min(2^n,n^d)}$.}
  \label{fig:large_d}
\end{figure}

In the following we present further plots that show the impact of using a different procedure for sampling nonempty activation cones as well as results for the case that the data is orthogonal.

Instead of sampling nonempty activation cones uniformly at random, it may also be natural to instead take a random network of width $m$ and observe the implied activation cone of this random network. Concretely, we can sample a nonempty activation cone by generating $m$ random vectors $v_1,\dots,v_m$ in $\R^d$ and $m$~scalars $a_1,\dots,a_m$ with all numbers drawn independently from a standard Gaussian distribution. We then obtain the nonempty activation cone $A$ by defining for all $i\in[m]$ and $k\in[n]$,
$A_{i, k}\coloneqq\iind{w_i^\top x_k \geq 0}$ and $A_{i, n+1}\coloneqq\iind{a_i \geq 0}$.

\cref{fig:prop_opt_random_vector} shows the same results as \cref{fig:prop_opt} except that activation cones are sample in this alternative, non-uniform way. The observations are here similar to the ones of \cref{fig:prop_opt}. It confirms that \cref{thm:losslandscape}, which considers the fraction of non-empty activation cones, is also relevant to typical neural network initializations, that draw the weights as i.i.d.\ Gaussian variables.

\cref{fig:ortho} shows the same results as \cref{fig:prop_opt} except that the data consists of $n$ random orthogonal unit vectors, with the labels again given by the corresponding output of a teacher two-layer ReLU network of width~$10$. Note that for such a dataset, the two different strategies for sampling nonempty activation cones (either uniformly at random or by observing the activation patterns of the neurons of a random network) both produce a uniform distribution over all nonempty activation cones.

As highlighted by \cref{thm:orthogonal}, the orthogonal data case can be considered as worst case from a landscape point of view. Indeed, the transition to a favorable landscape also happens around the width threshold $4\cdot \sum_{i=0}^{d-1} \binom{n-1}{i}$; but the fraction of cones with spurious local minima is here almost complementary to the one with global minima. In other words, (almost) every activation cone either contains a spurious local minimum or a global one. It makes the optimization harder, since the weights will converge towards a spurious local minimum of the regularized objective as soon as the number of parameters $m$ is small with respect to $2^n$, as predicted by \cref{thm:orthogonal}.

\cref{fig:random_labels} shows the same results as \cref{fig:prop_opt} except that the labels $y$ are chosen uniformly at random from $[-1,1]$ instead of using a teacher network.

\Cref{fig:large_n,fig:large_d} report the same experiment as \Cref{fig:prop_opt}, except with a much larger number of data points~$n$ or data dimension~$d$, respectively. 

In \Cref{fig:random_labels,fig:large_d}, the behavior is consistent with that of \Cref{fig:prop_opt}, the proportion of non-empty activation cones containing a global minima approaches $1$ once the network width~$m$ exceeds the number of non-empty neuron activation patterns. At the same time, the fraction of cones containing spurious local minima is close to $0$ beyond this threshold.

\Cref{fig:large_n} however slightly differs from \Cref{fig:prop_opt}, as the fraction of spurious local minima, even in the underparametrized regime, seems to vanish to $0$ as $n$ grows large. We believe this is due to the specific data structure considered in our experiments, where labels are $y_i$ are given by the output of a small teacher network and that in such a case (i.e., without any label noise), the \textit{favorability threshold} should be much smaller (and actually correspond approximately to the number of parameters of the teacher network).

\paragraph{Compute.} Experiments for \cref{fig:prop_opt,fig:prop_opt_random_vector} take less than 10 CPU hours on  Dual Intel Xeon E5-2643 v3 CPUs each. Experiments for \cref{fig:ortho} take around 100 CPU hours on the same hardware.

\subsection{Experiments complementing \cref{thm:GFsmallinit}}

\newcommand{\plotneuroncount}[4]{%
    \addplot+[thick, mark=none, color=#4, #3] table [
        x=epoch, 
        y=num_pos_neurons_mean, 
        col sep=comma,
%       each nth point=10,
    ] {data/thm2_new/evolution_d_5_alpha_#1.csv};
    \addlegendentry{$\alpha = #2$}
            \addplot[name path=optmax, draw=none,forget plot] table[x=epoch,y=num_pos_neurons_max,
        col sep=comma]{data/thm2_new/evolution_d_5_alpha_#1.csv};%
    \addplot[name path=optmin, draw=none,forget plot] table[x=epoch,y=num_pos_neurons_min,
        col sep=comma]{data/thm2_new/evolution_d_5_alpha_#1.csv};%
    %\addplot[fill=#4, draw=none, fill opacity=0.2,forget plot] fill between[of=optmax and optmin];%
}

\newcommand{\plotmse}[4]{%
    \addplot+[thick, mark=none, color=#4, #3] table [
        x=epoch, 
        y=mse_loss_mean, 
        col sep=comma,
%       each nth point=10,
    ] {data/thm2_new/evolution_d_5_alpha_#1.csv};
    \addlegendentry{$\alpha = #2$}
        \addplot[name path=optmax, draw=none,forget plot] table[x=epoch,y=mse_loss_max,
        col sep=comma]{data/thm2_new/evolution_d_5_alpha_#1.csv};%
    \addplot[name path=optmin, draw=none,forget plot] table[x=epoch,y=mse_loss_min,
        col sep=comma]{data/thm2_new/evolution_d_5_alpha_#1.csv};%
    \addplot[fill=#4, draw=none, fill opacity=0.2,forget plot] fill between[of=optmax and optmin];%
}

\newcommand{\plotford}[4]{%
    \addplot[
        color=#1,
        thick,
        mark=none,
        mark options={scale=0.7}
    ] table[
        x=alpha,
        y=#4_mean,
        col sep=comma
    ] {data/thm2_new/#3_d_#2.csv};
    \addlegendentry{$d=#2$}
    \addplot[name path=optmax, draw=none,forget plot] table[x=alpha,y=#4_max,
        col sep=comma]{data/thm2_new/#3_d_#2.csv};%
    \addplot[name path=optmin, draw=none,forget plot] table[x=alpha,y=#4_min,
        col sep=comma]{data/thm2_new/#3_d_#2.csv};%
    \addplot[fill=#1, draw=none, fill opacity=0.2,forget plot] fill between[of=optmax and optmin];%
}

\begin{figure}[htbp]
  \centering
\begin{tikzpicture}
    \begin{semilogxaxis}[
        width=0.7\textwidth,
        height=0.6\textwidth,
        xlabel={$\alpha$},
        ylabel={network size (square of Euclidean norm of all weights)},
        x dir=reverse,
        legend pos=north west,
        grid=both,
        grid style={dotted},
        legend style={font=\small},
        tick label style={font=\small},
        label style={font=\small}
    ]
    \plotford{cbforange}{3}{network_sizes}{net_size}
    \plotford{cbfblue}{4}{network_sizes}{net_size}
    \plotford{cbfgreen}{5}{network_sizes}{net_size}
    \end{semilogxaxis}
\end{tikzpicture}
\caption{The square of the Euclidean norm of all network weights after training has finished, starting with different initialization scales $\alpha$ for $d$ dimensional data. The shaded areas correspond to the min/max deviations observed over 5 different runs. For each run, both the dataset and the initial weights are drawn from the distribution discussed in \cref{app:expedetailsextended}. A network only consisting of the teacher neuron has a squared Euclidean norm of~$2$. We see that for large initialization scales, we converge to a network of notably smaller size.}
   \label{fig:networksize}
\end{figure}

\begin{figure}[htbp]
  \centering
\begin{tikzpicture}
    \begin{loglogaxis}[
        width=0.7\textwidth,
        height=0.6\textwidth,
        xlabel={$\alpha$},
        ylabel={achieved regularized loss $-$ optimum regularized loss},
        x dir=reverse,
        legend pos=north west,
        grid=both,
        grid style={dotted},
        legend style={font=\small},
        tick label style={font=\small},
        label style={font=\small}
    ]
    \plotford{cbforange}{3}{loss_distances}{reg_loss_distance}
    \plotford{cbfblue}{4}{loss_distances}{reg_loss_distance}
    \plotford{cbfgreen}{5}{loss_distances}{reg_loss_distance}
    \end{loglogaxis}
\end{tikzpicture}
\caption{The difference between the final regularized loss of the trained network and the optimal regularized loss, for different initialization scales $\alpha$ for $d$ dimensional data. The shaded areas correspond to the min/max deviations observed over 5 different runs. For each run, both the dataset and the initial weights are drawn from the distribution discussed in \cref{app:expedetailsextended}.}
   \label{fig:regloss}
\end{figure}

\begin{figure}[htbp]
  \centering
\begin{tikzpicture}
    \begin{axis}[
        width=0.7\textwidth,
        height=0.6\textwidth,
        xlabel={$\alpha$},
        ylabel={Unregularized loss},
        x dir=reverse,
        xmode=log,
        legend pos=north west,
        grid=both,
        grid style={dotted},
        legend style={font=\small},
        tick label style={font=\small},
        label style={font=\small}
    ]
    \plotford{cbforange}{3}{unregularized_losses}{mse_loss}
    \plotford{cbfblue}{4}{unregularized_losses}{mse_loss}
    \plotford{cbfgreen}{5}{unregularized_losses}{mse_loss}
    \end{axis}
\end{tikzpicture}
\caption{The final unregularized loss of the trained network, for different initialization scales $\alpha$ for $d$ dimensional data. The shaded areas correspond to the min/max deviations observed over 5 different runs. For each run, both the dataset and the initial weights are drawn from the distribution discussed in \cref{app:expedetailsextended}.}
   \label{fig:unregloss}
\end{figure}

\begin{figure}[htbp]
  \centering
\begin{tikzpicture}
    \begin{axis}[
        width=0.7\textwidth,
        height=0.6\textwidth,
        xlabel={epoch},
        xmode=log,
        xmin=1,
        xmax=100000000,
        ymode=log,
        ylabel style={align=center},
        ylabel={Number of distinct neuron directions\\ for neurons with positive second layer weight},
        %legend pos=south west,
        legend columns=3,
        legend pos=south east,
        grid=both,
        grid style={dotted},
    ]

    \plotneuroncount{0.5}{2^{-1}}{solid}{cbforange}
    \plotneuroncount{0.25}{2^{-2}}{solid}{cbfblue}
    \plotneuroncount{0.125}{2^{-3}}{solid}{cbfgreen}
    \plotneuroncount{0.0625}{2^{-4}}{dashed}{cbforange}
    \plotneuroncount{0.03125}{2^{-5}}{dashed}{cbfblue}
    \plotneuroncount{0.015625}{2^{-6}}{dashed}{cbfgreen}
    \plotneuroncount{0.0078125}{2^{-7}}{dotted}{cbforange}
    \plotneuroncount{0.00390625}{2^{-8}}{dotted}{cbfblue}
    \plotneuroncount{0.001953125}{2^{-9}}{dotted}{cbfgreen}
   
    \end{axis}
\end{tikzpicture}
\caption{This figure shows the evolution of the network during training for different initialization scales $\alpha$ and $d=5$. We plot the mean number of distinct neuron directions for neurons with positive second layer weight (as specified in \cref{app:expedetailsextended}), where the mean is taken over 5 different runs.  For each run, both the dataset and the initial weights are drawn from the distribution discussed in \cref{app:expedetailsextended}.
Shaded areas corresponding to min/max deviations are omitted for readability. Neurons with a norm below $10^{-4}$ are not counted, hence, for small initialization scales, initially there are no neurons to consider. Neuron norms then quickly increase past the $10^{-4}$ threshold before neurons start to align and the number of distinct neuron directions decreases until it reaches 1.  %but after a large number of epochs, the number of neurons was always the same for all runs. The only exception is $\alpha=2^{-5}$, where one run resulted in only a single neuron direction, whereas the remaining four runs had two neuron directions.
}
   \label{fig:numneurons}
\end{figure}

\begin{figure}[htbp]
  \centering
\begin{tikzpicture}
    \begin{axis}[
        width=0.7\textwidth,
        height=0.6\textwidth,
        xlabel={epoch},
        xmode=log,
        xmin=1,
        xmax=100000000,
        ylabel style={align=center},
        ylabel={Unregularized loss},
        ymode=log,
        legend pos=south west,
        grid=both,
        grid style={dotted},
    ]

    \plotmse{0.5}{2^{-1}}{solid}{cbforange}
    \plotmse{0.25}{2^{-2}}{solid}{cbfblue}
    \plotmse{0.125}{2^{-3}}{solid}{cbfgreen}
    \plotmse{0.0625}{2^{-4}}{dashed}{cbforange}
    \plotmse{0.03125}{2^{-5}}{dashed}{cbfblue}
    \plotmse{0.015625}{2^{-6}}{dashed}{cbfgreen}
    \plotmse{0.0078125}{2^{-7}}{dotted}{cbforange}
    \plotmse{0.00390625}{2^{-8}}{dotted}{cbfblue}
    \plotmse{0.001953125}{2^{-9}}{dotted}{cbfgreen}
   
    \end{axis}
\end{tikzpicture}
\caption{This figure shows the evolution of the unregularized loss during trained network, for different initialization scales $\alpha$ for $d=5$ dimensional data. The shaded areas correspond to the min/max deviations observed over 5 different runs. For each run, both the dataset and the initial weights are drawn from the distribution discussed in \cref{app:expedetailsextended}.}
   \label{fig:mse}
\end{figure}

We investigate the impact of the initialization scale $\alpha$ on the properties of the minimum that gradient descent converges towards. The regularization parameter~$\lambda$ should be sufficiently small such that we obtain near interpolator. Specifically, we choose $\lambda=10^{-5}$.

Using data consistent with the setup of \cref{thm:GFsmallinit} (see details below), and a learning rate of $0.01$, we find that the mean square error in all our experiments is in the region of $10^{-8}$. In other words, we obtain approximate interpolators.

However, \cref{fig:networksize} shows the interpolator size (measured in the square of the Eucliden norm of all weights) and this size is markedly smaller for large initialization scales $\alpha$. As $\alpha$ decreases, there is a sharp transition and the size of the network becomes close to~$2$, which is the size that a network consisting only of the teacher neuron $v_\star$ would have.

Indeed, as \cref{fig:regloss} shows, small initialization scales lead to convergence to a spurious local minimum of the regularized loss, whereas larger scales allow us to converge towards the global minimum. While near interpolation is always achieved, \cref{fig:unregloss} shows that small initialization also leads to a slightly worse unregularized loss.

The globally optimal solution for our chosen $\lambda$ and our generated data always consists of three distinct neurons, i.e., three neurons with different directions (and different activation patterns). \cref{fig:numneurons} shows, for $d=5$ and different values of $\alpha$, how many different neuron directions the networks have during training. (More precise details can be found below.) We can see that after a few million epochs, the neurons start to align, reducing the number of distinct directions. 

For small $\alpha$, this happens earlier in the process and we converge towards a solution consisting only of a single neuron direction, with a single activation pattern. Once we reach such a state in which all neurons are aligned in a single direction, the activation patterns of the aligned neurons never change again in our experiments, as this corresponds to a local minimum of the regularized loss. 

For large $\alpha$ on the other hand, the alignment and reduction in distinct neuron direction happens later and we converge towards solutions involving three distinct neuron directions with three different activation patterns. These activation patterns match the activation patterns of the neurons in the optimal solution, i.e., they correspond to a global minimum of the loss.

\subsubsection{Experimental details}\label{app:expedetailsextended}
We generate data consistent with the setup of \cref{thm:GFsmallinit}. Specifically, for $d \in \{3,4,5\}$, we define the centers $\hat{x}_i$ as in \cref{sec:optim}. For $i\in[d]$, to generate a data point $x_i$, we add Gaussian noise with zero mean and standard deviation 0.001 to the corresponding center and normalize the result to obtain a unit vector. Labels $y_i$ are given by the inner product of $x_i$ with the teacher $v_\star$.

We initialize a two-layer ReLU network of width 100 in a balanced way (again according to the setup in \cref{sec:optim}): first we initialize the first layer weights with independently drawn centered Gaussians with standard deviation $\alpha$, and then we initialize the second layer weight for each neuron by the norm of the first layer neuron with a random sign.

We train the network on the regularized loss with $\lambda=10^{-5}$ and learning rate 0.01. We train until the square of the Euclidean norm of the gradient falls below $10^{-16}$ or until 100 million epochs, whichever occurs first.

\cref{fig:networksize} shows the size of the resulting network (the square of the Euclidean norm of all weights) at the end of training for different values of $d$ and $\alpha$.

We also compute an optimal solution by solving the convex program as described in \cref{app:expedetails}. \cref{fig:regloss} shows the difference between this optimal regularized loss and the regularized loss of the trained network.

For \cref{fig:numneurons}, we determine the number of distinct neuron directions as follows: first we identify those neurons that have a positive second layer weight. We note that in our experiments, due to our specific data, the neurons with negative second layer weight always converge to~$0$ and are therefore not interesting to plot. We ignore neurons which, when scaled by their second layer weight, have norm less than $10^{-4}$. We then consider two neurons to have the same direction if they are at an angle of less than $0.1$~radians.

All experiments were repeated five times and the plots show the mean values. \cref{fig:networksize,fig:regloss,fig:effectiverank,fig:unregloss,fig:mse} also show the min/max deviations. For \cref{fig:numneurons} the min/max deviations are omitted for readability, but after a large number of epochs, the number of neurons was always the same for all runs. The only exception are intermediate ranges for $\alpha$. Concretely, for $\alpha=2^{-4}$, two runs resulted in two neuron direction, whereas the remaining three runs had three neuron directions. For $\alpha=2^{-5}$, four runs resulted in two neuron directions, whereas the remaining run had only one neuron direction. And for $\alpha=2^{-6}$, two runs resulted in two neuron directions and the remaining three runs in one neuron direction.

\paragraph{Compute.}
Experiments complementing \cref{thm:GFsmallinit} took around 35 CPU hours on  Dual Intel Xeon E5-2643 v3 CPUs.

\section{Proof of \cref{thm:losslandscape}}

In this section, we will also use the notion of \textit{neuron cone}, which is directly related to the activation cones defined in \cref{eq:cone}. For that, we define the activation function
\begin{equation}
A_n: \begin{array}{l} \R^{d+1} \to \{0,1\}^{n+1} \\ ({w},a) \mapsto (\iind{{w}^{\top}{x}_k\geq0}_{k\in[n]}, \iind{a\geq 0})\end{array}.
\end{equation}
For any binary vector $u\in\{0,1\}^{n+1}$, we associate the neuron cone $\cC_u \subseteq \R^{d+1}$ defined as
\begin{equation}\label{eq:neuroncone}
    \cC_u = A_n^{-1}(u).
\end{equation}
Notably, for any binary matrix $A\in\{0,1\}^{m\times (n+1)}$ neuron cones and activation cones are related by the following equality:
\begin{equation}
    \cC^A = \prod_{i=1}^m \cC_{A_i}.
\end{equation}
In words, the parameters $(W,a)\in\R^{m\times (d+1)}$ belong to the activation cone $\cC^A$ if and only if for any $i\in[m]$, the $i$-th neuron belongs to the neuron cone associated to the $i$-th row of $A$, i.e., $A_n(w_i,a_i)=A_i$.

Taking $m\geq n+1$, \citet[][Theorem 1]{wang2021hidden} state that there exists a network with width $n+1$ reaching the global minimum of \cref{eq:regularized}. Similarly, \citet{ErgenP21} show that there exists a network with width $n+1$ reaching the global minimum of \cref{eq:minnorm}. From now, we only focus on  \cref{eq:regularized}, but our arguments can be directly extended to considering \cref{eq:minnorm}.

In other words, there exists $(W^{\star},a^{\star})\in\R^{m\times(d+1)}$ such that
\begin{gather}
       L_\lambda(W^{\star},a^\star) = \min_{\theta} \frac{1}{n}\sum_{k=1}^n (f_{\theta}(x_k)-y_k)^2 + \lambda \|\theta\|_2^2 ,\\
    \label{eq:distinctconeswstar}       A_n(w_i^{\star}, a_i^{\star}) \neq A_n(w_j^{\star}, a_j^{\star}) \quad \text{for any } i,j\leq n+1 \text{ such that } i\neq j \\
       \text{and} \quad w_i^{\star}=\mathbf{0}, a_i^{\star}=0 \quad\text{for any }i>n+1.
\end{gather}
\cref{eq:distinctconeswstar} states the additional condition that the global minimum of \cref{eq:regularized} has at most one (non-zero) neuron per neuron cone. Indeed, in the presence of several neurons inside a single cone, one can simply merge these neurons into a single one, which does not change the network output, while decreasing its norm \citep[see][Proposition 2]{wang2021hidden}.

From there using the permutation invariance of the parametrization, we can show that every activation cone containing at least one neuron $(w,a)$ such that $A_n(w)=A_n(w^{\star}_i)$ and $\sign(a)=\sign(a^{\star}_i)$ for all $i\in[n+1]$ necessarily contains a global minimum of the problem (and no spurious local minimum). This is stated formally by \cref{lemma:permut} below.
\begin{lem}\label{lemma:permut}
    For any activation cone $\cC^A$, if for any $i\in[n+1]$, there exists $j\in[m]$ such that $A_j=A_n(w^{\star}_i, a_i^{\star})$, then:
    \begin{enumerate}[(i),topsep=0em,parsep=0em]
        \item $\bar{\cC}^A$ contains a global minimum of \cref{eq:regularized};
        \item $\bar{\cC}^A$ does not contain any spurious local minimum of \cref{eq:regularized} restricted to $\bar{\cC}^A$.
    \end{enumerate}
\end{lem}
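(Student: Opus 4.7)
The plan is to handle (i) by a direct construction and (ii) by a convex reformulation of $L_\lambda$ restricted to $\bar{\cC}^A$, in the spirit of \citet{wang2021hidden}. For (i), I would use the distinctness of activation patterns in \cref{eq:distinctconeswstar} together with the hypothesis to pick distinct indices $j_1, \ldots, j_{n+1} \in [m]$ with $A_{j_i} = A_n(w_i^\star, a_i^\star)$, and then define $\tilde{\theta}$ by placing $(w_i^\star, a_i^\star)$ in slot $j_i$ and zeroing out every other slot. Each copied neuron lies in $\bar{\cC}_{A_{j_i}}$ by the choice of $j_i$, while each zero neuron lies in $\bar{\cC}_{A_j}$ for its slot since $(0,0)$ sits in every closed neuron cone. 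Zero neurons contribute nothing to either $f_{\tilde{\theta}}$ or $\|\tilde{\theta}\|_2^2$, so $L_\lambda(\tilde{\theta}) = L_\lambda(\theta^\star) = \min L_\lambda$, which proves (i) and implies $\min_{\bar{\cC}^A} L_\lambda = L_\lambda(\theta^\star)$.

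For (ii), observe that inside $\bar{\cC}^A$ the ReLU simplifies to $\sigma(w_i^\top x_k) = A_{i,k}(w_i^\top x_k)$. Fixing signs $s_i \in \{-1, +1\}$ via $A_{i, n+1}$ and setting $u_i := s_i a_i w_i$, the output becomes linear in $\{u_i\}$, while AM--GM yields $\|w_i\|^2 + a_i^2 \geq 2\|u_i\|_2$ with equality iff $|a_i| = \|w_i\|$. This gives the convex lower bound
\[
\tilde{L}(\{u_i\}) := \frac{1}{n}\sum_{k=1}^n \Bigl(y_k - \sum_{i=1}^m s_i A_{i,k} u_i^\top x_k\Bigr)^2 + 2\lambda \sum_{i=1}^m \|u_i\|_2,
\]
over a product of closed convex cones determined by $A$. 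Every feasible $\{u_i\}$ lifts to a balanced $\theta \in \bar{\cC}^A$ via $w_i = u_i/\sqrt{\|u_i\|_2}$, $a_i = s_i\sqrt{\|u_i\|_2}$ (and $(0,0)$ when $u_i = 0$), achieving $L_\lambda(\theta) = \tilde{L}(\{u_i\})$; conversely, every $\theta \in \bar{\cC}^A$ can be rebalanced inside $\bar{\cC}^A$ without increasing $L_\lambda$ via the rescaling $(w_i, a_i) \mapsto (c w_i, a_i/c)$, $c > 0$. Combined with (i), this forces $\min \tilde{L} = L_\lambda(\theta^\star)$.

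Now let $\theta$ be any local minimum of $L_\lambda|_{\bar{\cC}^A}$. The rescaling above applied to $\theta$ stays in $\bar{\cC}^A$ and preserves the output, so local minimality forces $|a_i| = \|w_i\|$ whenever neither coordinate is zero; an analogous shrinking argument inside $\bar{\cC}^A$ forces any neuron with exactly one of $w_i, a_i$ equal to zero to in fact have both equal to zero. Hence $L_\lambda(\theta) = \tilde{L}(\{u_i\})$. Moreover, any small feasible perturbation $u_i \mapsto u_i + \epsilon v_i$ lifts via the balanced parametrization to a perturbation of $\theta$ of size $O(\sqrt{\epsilon})$ inside $\bar{\cC}^A$ (the square root arising at the zero neurons), which for small enough $\epsilon$ lies in any prescribed neighborhood of $\theta$; thus local minimality of $\theta$ transfers to local minimality of $\{u_i\}$ for $\tilde{L}$. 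Convexity of $\tilde{L}$ then upgrades this to global minimality, yielding $L_\lambda(\theta) = L_\lambda(\theta^\star)$ as required. The main obstacle I expect is precisely this non-smooth square-root lifting at zero neurons, where the $\epsilon$- and $\sqrt{\epsilon}$-scales must be matched carefully; this is nonetheless manageable since local minimality only requires the perturbed point to eventually lie in some open neighborhood of $\theta$.
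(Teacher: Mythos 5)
Your argument is correct and follows essentially the same route as the paper's: the same permutation-plus-zero-padding construction for (i), and for (ii) the same convex reformulation in the $\beta_i = a_i w_i$ coordinates (your $u_i$) combined with balancedness of local minima and a continuous balanced lift back into $\bar{\cC}^A$. The only cosmetic difference is that the paper exhibits a descending direction explicitly by interpolating linearly between the local and the global minimizer in $\beta$-coordinates and lifting that segment, whereas you transfer local minimality to the convex problem and then invoke convexity; these are two phrasings of the same contradiction.
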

From \cref{lemma:permut}, it is sufficient to count the fraction of non-empty cones $\cC^A$ satisfying the following property
\begin{equation}\label{eq:collectorcond}
\text{for any }i\in[n+1]\text{, there exists }j\in[m]\text{ such that }A_j=A_n(w^{\star}_i, a_i^{\star}).
\end{equation}
This can be done using a simple bound for the coupon collector problem.
\begin{lem}[Lemma~11 by \citealt{karhadkar2024mildly}]\label{lemma:coupon}
    Let $\varepsilon\in(0,1)$ and $p\leq q$ be positive integers. Let $Z_1,\ldots,Z_r$ be $r$ independent, uniformly at random variables in $[q]$. If $r\geq q\ln(\frac{p}{\varepsilon})$, then $[p]\subseteq \{Z_1,\ldots,Z_r\}$ with probability at least $1-\varepsilon$.
\end{lem}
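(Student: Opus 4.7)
My plan is to prove Lemma~\ref{lemma:coupon} by the standard union-bound argument for the coupon collector problem. The key observation is that the failure event -- that some element of $[p]$ is never drawn -- decomposes into $p$ single-element-missing events, each of which can be bounded cleanly using the independence of the $Z_j$'s.

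Concretely, I would first fix an arbitrary $i \in [p]$ and compute the probability that $i$ is never drawn. Since each $Z_j$ equals $i$ with probability $1/q$ independently, this missing probability is exactly $(1 - 1/q)^r$, which the elementary inequality $1 - x \leq e^{-x}$ upper-bounds by $e^{-r/q}$. A union bound over the $p$ elements of $[p]$ then shows that the overall failure probability is at most $p \cdot e^{-r/q}$. Substituting the hypothesis $r \geq q \ln(p/\varepsilon)$ into this bound yields $p \cdot e^{-\ln(p/\varepsilon)} = \varepsilon$, and taking complements gives the claim $\Pr([p] \subseteq \{Z_1,\ldots,Z_r\}) \geq 1 - \varepsilon$.

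There is essentially no deep obstacle here, as this is a textbook calculation. The only minor subtlety is to use a union bound rather than trying to treat the $p$ missing events as independent (they are in fact positively correlated, though the union bound is oblivious to this and still recovers the correct logarithmic dependence in $p$). The hypothesis $p \leq q$ is not actually used in any step of the chain of inequalities; it simply reflects the intended coupon-collector interpretation where one collects $p$ specific ``winning'' patterns from a universe of size $q$, which matches the usage in the sketch of proof of Theorem~\ref{thm:losslandscape} with $p = n + 1$ and $q = O(\min(2^n, n^d))$.
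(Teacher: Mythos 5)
Your proof is correct and is the standard union-bound argument for this coupon-collector bound. The paper does not reprove this lemma -- it is cited directly from \citet{karhadkar2024mildly} (their Lemma~11) -- but the argument you give is exactly the expected one: bound each single-element-missing probability by $(1-1/q)^r \leq e^{-r/q}$, union-bound over the $p$ target elements, and plug in $r \geq q\ln(p/\varepsilon)$. Your observation that the hypothesis $p \leq q$ is not used in the inequalities (it is only there to make the statement meaningful in the coupon-collector interpretation) is also accurate.
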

Indeed, note that if we choose uniformly at random a (non-empty) activation cone $\cC^A$, it is equivalent to choosing independently, $m$ (non-empty) neuron cones $(\cC_{u_i})_{i\in[m]}$ uniformly at random. Indeed, each $u_i\in\{0,1\}^{n+1}$ then corresponds to the $i$-th row of the matrix $A\in\{0,1\}^{m\times (n+1)}$. Thus, we have the following equality, assuming $A$ is a binary matrix in $\{0,1\}^{m\times (n+1)}$, drawn uniformly at random among the set of matrices such that $\cC^A$ is non-empty; and that the binary vectors $u_i$ are drawn i.i.d., uniformly at random among the set of binary vectors $u$ such that $\cC_u$ is non-empty.
\begin{align}
    \bP(A \text{ satisfies \cref{eq:collectorcond}}) & = \bP(\forall i \in[n+1], \exists j\in[m], u_j=A_n(w^{\star}_i)) \\
    & = \bP\left(\{A_n(w^{\star}_1), \ldots, A_n(w^{\star}_{n+1})\}\subseteq \{u_1, \ldots, u_m\}\right).
\end{align}
Thanks to \cref{lemma:coupon}, this yields that when $m\geq q \ln(\frac{n+1}{\varepsilon})$, $\bP(A \text{ satisfies \cref{eq:collectorcond}})\geq 1-\varepsilon$, where $q$ is the total number of non-empty neuron cones.

Moreover, \citet{cover1965geometrical} bounds the total number of non-empty cones as $q=\bigO{\min(2^n, n^d)}$, which finally yields \cref{thm:losslandscape}.

\subsection{Proof of \cref{lemma:permut}}
\cref{lemma:permut} is shown by means of merging, scaling and permuting the neurons, which are tools used in a long line of work \citep[see, e.g.,][]{ErgenP21,wang2021hidden}.

\begin{proof}
       Consider $A\in\{0,1\}^{m\times (n+1)}$ such that for any $i\in[n+1]$, there exists $j\in[m]$ such that $A_j=A_n(w^{\star}_i, a_i^{\star})$.

       As the cones of non-zero $w_i^{\star}$ are pairwise distinct, we actually have a permutation $\rho : [n+1]\to [n+1]$ such that for any $i\in[n+1]$: $A_{\rho(i)}=A_n(w_i^{\star}, a_i^{\star})$. From there, simply note that the closure of the cone $\bar{\cC}^A$ contains the zero matrix, and more generally zero neurons on any row of our choice. Notably, this implies that $(W^\star, a^\star)$, up to a permutation, belongs to $\bar{\cC}^A$, i.e., $(W^{\star,\rho}, a^{\star,\rho})\in \bar{\cC}^A$ where
       \begin{equation}
       \begin{gathered}
           w^{\star,\rho}(\rho(i)) = w_i^{\star} ,\\
           a^{\star,\rho}(\rho(i)) = a_i^{\star} \quad \text{for any }i\in[m].
           \end{gathered}
       \end{equation}
        Since the objective function $L_{\lambda}$ is invariant under permutation, we then have $L_{\lambda}(W^{\star},a^{\star})=L_{\lambda}(W^{\star,\rho},a^{\star,\rho})$. This proves the first point, i.e., that $\bar{\cC}^A$ contains a global minimum of \cref{eq:regularized}.

        \vspace{1ex}
        For the second point, consider a local minimum $(W,a)\in\bar{\cC}^A$. Additionally, we assume in the following without loss of generality that $(W^\star,a^\star)\in\bar{\cC}^A$ -- i.e., we consider that the permutation $\rho$ described above is the identity. Also, we write
        \begin{equation}
            \beta_i = a_i w_i \quad \text{ and } \quad \beta^{\star}_i = a_i^{\star} w_i^{\star} \quad \text{ for any } i\in[m].
        \end{equation}       
        Note that rescaling any neuron $(w_i,a_i)$ to $(c w_i, \frac{1}{c} a_i)$ does not change the output function $f_{\theta}$, but changes the squared norm of the parameters $(W,a)$. As a consequence, it is known that any local minimum of \cref{eq:regularized,eq:minnorm} is balanced, i.e., $\|w_i\|_2 = |a_i|$ for any $i\in[m]$ \citep{neyshabur2014search,savarese2019infinite,parhi2021banach,BoursierF23}. 
        As a consequence, $(W,a)$ and $(W^{\star}, a^{\star})$ are both balanced. Moreover, a direct consequence of this balanced property is the following equality
        \begin{gather}
            \|(W,a)\|_2^2 = 2 \sum_{i=1}^m \|\beta_i\|_2,\\
            \|(W^{\star},a^{\star})\|_2^2 = 2 \sum_{i=1}^m \|\beta_i^{\star}\|_2.
        \end{gather}
        From there, we can define for any $B\in\R^{m\times d}$ the alternative loss function as
        \begin{gather}
             \tilde{\cL}(B) = \frac{1}{n}\sum_{k=1}^n (h_{B}(x_k) -y_k)^2 + 2\lambda \|B\|_{1,2},\\
             \text{where } h_{B}(x_k) = \sum_{i=1}^m A_{ik} B_i^{\top} x_k\\
             \text{and } \|B\|_{2,1} =  \sum_{i=1}^m \|B_i\|_2.
        \end{gather}
       For $\Beta\in\R^{m\times d}$ (respectively $\Beta^\star$) defined as the matrix whose rows are given by $\beta_i\in\R^d$ (respectively $\beta_i^{\star}\in\R^d$), note that 
       \begin{gather}
\tilde{\cL}(\Beta) = \frac{1}{n}\sum_{k=1}^n (f_{(W,a)}(x_k)-y_k)^2 + \lambda \|(W,a)\|_2^2     \\
\text{and }\tilde{\cL}(\Beta^{\star}) = \frac{1}{n}\sum_{k=1}^n (f_{(W^{\star},a^{\star})}(x_k)-y_k)^2 + \lambda \|(W^{\star},a^{\star})\|_2^2.
       \end{gather}

Notably, note that the parametrization $h_B$ is linear in $B$. As a consequence, the function $\tilde{\cL}$ is convex and we can thus define $\Beta^t = t \Beta^{\star} + (1-t) \Beta$ for any $t\in[0,1]$ such that :
\begin{equation}
    \tilde{\cL}(\Beta^t) \leq t \tilde{\cL}(\Beta^{\star}) + (1-t)\tilde{\cL}(\Beta).
\end{equation}
       Assume now that $(W,a)$ is a spurious local minimum, i.e., it is not a global one. In particular, the above inequality implies that for any $t\in(0,1]$:
       \begin{equation}\label{eq:betat}
    \tilde{\cL}(\Beta^t) < \tilde{\cL}(\Beta).
\end{equation}
       For any $t\in[0,1]$, we can then define $(W^t,a^t)\in\R^{m\times (d+1)}$ as
       \begin{gather}
                  a^t_i = \sign(a_i)\sqrt{\|\beta^t_i\|_2},\\
       W^t_i = \frac{\beta^t_i}{a^t_i},
       \end{gather}
       where we omitted that we define $W^t_i=\mathbf{0}$ if $a^t_i=0$. By convexity of the cone $\bar{\cC}^A$, $(W^t,a^t)\in\bar{\cC}^A$. Moreover, a quick computation directly yields that
       \begin{equation}
           \tilde{\cL}(\Beta^{t}) = \frac{1}{n}\sum_{k=1}^n (f_{(W^{t},a^{t})}(x_k)-y_k)^2 + \lambda \|(W^{t},a^{t})\|_2^2.
       \end{equation}
   It is also easy to check that $(W^t, a^t)\to (W, a)$ as $t\to 0$. Thanks to \cref{eq:betat}, this then implies that $(W, a)$ is not a local minimum. By contradiction, it is a global minimum, which concludes the proof of \cref{lemma:permut}.
\end{proof}

\section{Proof of \cref{thm:GFsmallinit}}\label{app:GFsmallinit}

In this proof, we write
$\|v\|$~for the Euclidean norm of a vector~$v$,
$\|v\|_H \coloneqq \sqrt{v^\top H v}$ for the energy norm with respect to a symmetric positive definite matrix~$H$,
$\overline{v} \coloneqq v / \|v\|$ for the normalization of a non-zero vector~$v$, and
$\angle(u, v) \coloneqq \arccos(\overline{u}^\top \overline{v})$ for the angle between non-zero vectors~$u$ and~$v$.

We shall also find it convenient to use $\epsilon \coloneqq \varepsilon / 2$, where $\varepsilon$~is as in the statement of \cref{thm:GFsmallinit}.

We start by letting $(x^\dag_k)_{k \in [d]}$ denote the rows of the inverse of the data matrix, \ie the columns of $(X^{-1})^\top$, and with the following technical proposition.

\begin{prop}
\label{pr:angles}
Provided $\eta$~is sufficiently small, we have:
\begin{enumerate}[(a)]
\item
\label{pr:angles:lin.ind}
$(x_k)_{k \in [d]}$ are linearly independent;
\item
\label{pr:angles:v.star}
$\angle(v_\star, x_k) < \pi / 4$ for all $k \in [d]$;
\item
\label{pr:angles:cos.sin}
$\cos \angle(v_\star, x^\dag_2) > \sin \angle(x^\dag_2, x^\dag_3)$.
\end{enumerate}
\end{prop}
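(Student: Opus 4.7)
The overall strategy is to verify each of the three claims first at the ideal centers $(\widehat{x}_k)_{k \in [d]}$, for which exact computations are possible, and then extend to any perturbed data $(x_k)_{k \in [d]}$ with $x_k^\top \widehat{x}_k > 1 - \eta$ by continuity, choosing $\eta$ small enough. All three properties are expressed through continuous functions of $X$ that will be shown to satisfy \emph{strict} inequalities at $\widehat{X}$, so each persists on a small neighborhood of $\widehat{X}$ inside $(\bS^{d-1})^d$.

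For part~\ref{pr:angles:lin.ind}, I would compute $\det \widehat{X}$ by Laplace expansion in the last $d - 3$ columns: since $\widehat{x}_k$ for $k \geq 4$ has only coordinates $1$ and $k$ nonzero, this reduces, up to the nonzero factor $(\sqrt{17}/9)^{d-3}$, to the $3 \times 3$ determinant of the projection of $\widehat{x}_1, \widehat{x}_2, \widehat{x}_3$ on the first three coordinates, which is readily checked to be nonzero. Continuity of $\det$ then yields linear independence for small $\eta$. For part~\ref{pr:angles:v.star}, direct arithmetic gives $v_\star^\top \widehat{x}_1 = 4/5$, $v_\star^\top \widehat{x}_2 = v_\star^\top \widehat{x}_3 = 7/9$, and $v_\star^\top \widehat{x}_k = 32/45$ for $k \geq 4$. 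Each strictly exceeds $\cos(\pi/4) = 1/\sqrt{2}$; the tightest case $32/45 \approx 0.7111$ still beats $\sqrt{2}/2 \approx 0.7072$, so continuity of the inner product yields the angle bound on a neighborhood.

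The main work is in part~\ref{pr:angles:cos.sin}: computing the dual vectors $\widehat{x}^\dag_2$ and $\widehat{x}^\dag_3$ explicitly. The conditions $\widehat{x}^{\dag\top}_2 \widehat{x}_1 = 0$ and $\widehat{x}^{\dag\top}_2 \widehat{x}_k = 0$ for $k \geq 4$ force $\widehat{x}^\dag_2$ to lie in the $(e_2, e_3)$-plane, after which the remaining conditions $\widehat{x}^{\dag\top}_2 \widehat{x}_2 = 1$ and $\widehat{x}^{\dag\top}_2 \widehat{x}_3 = 0$ form a $2 \times 2$ linear system with solution $\widehat{x}^\dag_2 = -\tfrac{9}{8} e_2 + \tfrac{9}{2} e_3$; by the symmetry $\widehat{x}_3 \leftrightarrow \widehat{x}_2$ under $e_2 \mapsto -e_2$, one gets $\widehat{x}^\dag_3 = \tfrac{9}{8} e_2 + \tfrac{9}{2} e_3$. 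From these I compute $\|\widehat{x}^\dag_2\| = \|\widehat{x}^\dag_3\| = 9\sqrt{17}/8$, $v_\star^\top \widehat{x}^\dag_2 = 27/10$, and $\widehat{x}^{\dag\top}_2 \widehat{x}^\dag_3 = 1215/64$, giving $\cos \angle(v_\star, \widehat{x}^\dag_2) = 12/(5\sqrt{17})$ and $\cos \angle(\widehat{x}^\dag_2, \widehat{x}^\dag_3) = 15/17$, hence $\sin \angle(\widehat{x}^\dag_2, \widehat{x}^\dag_3) = 8/17$. The desired inequality then reduces to $12/(5\sqrt{17}) > 8/17$, equivalently $204 > 40 \sqrt{17}$, which holds because $\sqrt{17} < 5$ implies $40\sqrt{17} < 200 < 204$. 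The dual basis is real-analytic on $\{\det X \neq 0\}$, so this strict inequality also extends to small enough $\eta$.

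The main obstacle is part~\ref{pr:angles:cos.sin}: unlike (a) and (b), which only rely on generic-position properties of the centers, (c) depends on the specific numerical coordinates chosen for $\widehat{x}_2, \widehat{x}_3$ and for $v_\star$, and the comparison between one cosine and one sine is non-obvious (numerically $12/(5\sqrt{17}) \approx 0.582$ vs.\ $8/17 \approx 0.471$). Once this exact verification is done at $\widehat{X}$, combining the three open conditions and taking $\eta$ to be the minimum of the thresholds produced by the three continuity arguments completes the proof.
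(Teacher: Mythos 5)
Your proof is correct and takes essentially the same route as the paper: verify the three strict inequalities at the ideal centers $\widehat{X}$ by explicit computation, then invoke continuity (noting that for unit vectors $x_k^\top\widehat{x}_k > 1-\eta$ forces $\|x_k - \widehat{x}_k\| < \sqrt{2\eta}$) to extend to all admissible perturbed data. You fill in the Laplace-expansion detail for part~(a) that the paper leaves as "straightforward to check," and your numerical values $\cos\angle(v_\star, \widehat{x}^\dag_2) = 12/(5\sqrt{17})$ and $\sin\angle(\widehat{x}^\dag_2, \widehat{x}^\dag_3) = 8/17$ agree with the paper's. One small remark: your explicit $\widehat{x}^\dag_2 = -\tfrac{9}{8}e_2 + \tfrac{9}{2}e_3$ is actually the genuinely normalized dual vector (it satisfies $\widehat{x}^{\dag\top}_2 \widehat{x}_2 = 1$), whereas the paper's stated $\widehat{x}^\dag_2 = -\tfrac{9}{4\sqrt{2}}e_2 + \tfrac{9}{\sqrt{2}}e_3$ is off by a harmless factor of $\sqrt{2}$ (it gives $\widehat{x}^{\dag\top}_2 \widehat{x}_2 = \sqrt{2}$); since only angles enter the argument, both yield the same cosines and the conclusion is unaffected.
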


\begin{proof}
Part~\ref{pr:angles:lin.ind} is straightforward to check.

It suffices to show~\ref{pr:angles:v.star} for $(\widehat{x}_k)_{k \in [d]}$, and to show~\ref{pr:angles:cos.sin} for the columns $(\widehat{x}^\dag_k)_{k \in [d]}$ of $(\widehat{X}^{-1})^\top$, where $\widehat{X}$~is the matrix whose columns are $(\widehat{x}_k)_{k \in [d]}$.

For~\ref{pr:angles:v.star}, for all $k \in [d]$ we have
$v_\star^\top \widehat{x}_k
 \geq \tfrac{8}{9} \tfrac{4}{5}
 =    \tfrac{32}{45}
 >    \tfrac{1}{\sqrt{2}}
 =    \cos(\pi / 4)$.

For~\ref{pr:angles:cos.sin}, first observe that:
\begin{align}
\widehat{x}^\dag_2 & =
-\tfrac{9}{8} e_2 + \tfrac{9}{2} e_3 &
\widehat{x}^\dag_3 & =
 \tfrac{9}{8} e_2 + \tfrac{9}{2} e_3.
\end{align}
Now we have
\[\cos \angle(v_\star, \widehat{x}^\dag_2)
= \frac{\tfrac{27}{10}}{\sqrt{\tfrac{81}{4} + \tfrac{81}{64}}}
= \tfrac{12}{5 \sqrt{17}}\]
and
\[\sin \angle(\widehat{x}^\dag_2, \widehat{x}^\dag_3)
= \sqrt{1 - \left(\frac{\tfrac{81}{4} - \tfrac{81}{64}}
                       {\tfrac{81}{4} + \tfrac{81}{64}}\right)^2}
= \sqrt{1 - \tfrac{15^2}{17^2}}
= \tfrac{8}{17},\]
so indeed the former is greater than the latter.
\end{proof}

Note that, by \cref{pr:1-34m}~\ref{pr:angles:lin.ind}, we have $\mu_{\min} > 0$, so the empirical covariance matrix~$H$ is positive definite.

We define $u_\star \in \R^d$ uniquely by
\[\overline{u}_\star^\top \, \overline{H (v_\star - u_\star)} = 1 \text{ and }
  \|H (v_\star - u_\star)\| = \lambda.
  \label{eq:ustar}\]

Then let $\Theta_{u_\star}$ denote the set of all network parameters $\theta = (W, a) \in \R^{m \times (d + 1)}$ such that the two layers are balanced, the output weights are non-negative, the hidden neurons are non-negative scalings of $u_\star \in \R^d$, and the squares of the output weights sum to $\|u_\star\|$, \ie
\begin{gather}
a_i = \|w_i\| \quad\text{for all } i \in [m] \\
w_i = a_i \, \overline{u}_\star \quad\text{for all } i \in [m] \\
\textstyle{\sum}_{i \in [m]} a_i^2 = \|u_\star\|.
\end{gather}

The next proposition consists of two parts.  In~\ref{pr:Llambda:star}, we establish that all networks in~$\Theta_{u_\star}$ have the same value of the regularized loss~$L_\lambda$, which we denote by~$L_\lambda^\star$.  Part~\ref{pr:Llambda:min} then states that, provided the network width is at least~$2$, the latter value is not minimal, and thus none of the networks in~$\Theta_{u_\star}$ are global minima.  In \cref{l:T3} below, we shall prove that $\lim_{t \to \infty} \theta(t)$ is a network in~$\Theta_{u_\star}$, and the proof will also show that the latter are exactly the rank-$1$ minimizers of~$L_\lambda$.

\begin{prop}
\label{pr:Llambda}
Provided $\eta$~and~$\lambda$ are sufficiently small:
\begin{enumerate}[(a)]
\item
\label{pr:Llambda:star}
for every $\theta \in \Theta_{u_\star}$ we have
$L_\lambda(\theta)
 = \|v_\star - u_\star\|_H^2 + 2 \lambda \|u_\star\|
 \eqqcolon L_\lambda^\star$;
\item
\label{pr:Llambda:min}
if $m \geq 2$ then
$\min\{L_\lambda(\theta) \;\vert\;
       \theta \in \R^{m \times (d + 1)}\}
 < L_\lambda^\star$.
\end{enumerate}
\end{prop}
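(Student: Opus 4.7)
The plan is to handle the two parts separately: part~\ref{pr:Llambda:star} by direct computation on networks in~$\Theta_{u_\star}$, and part~\ref{pr:Llambda:min} by constructing a small rank-$2$ perturbation of a rank-$1$ minimizer that strictly decreases $L_\lambda$.

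For part~\ref{pr:Llambda:star}, I would first use the defining relation $H(v_\star - u_\star) = \lambda \overline{u}_\star$ together with Proposition~\ref{pr:angles}(b) to establish that, for small~$\lambda$, $\overline{u}_\star^\top x_k > 0$ for all training points. For any $\theta \in \Theta_{u_\star}$, balancedness with $a_i \geq 0$ and $w_i = a_i \overline{u}_\star$ then makes the ReLU act linearly on the data, giving $f_\theta(x_k) = \sum_i a_i^2\, \overline{u}_\star^\top x_k = u_\star^\top x_k$ from $\sum_i a_i^2 = \|u_\star\|$. The mean square error thus collapses to $(u_\star - v_\star)^\top H (u_\star - v_\star) = \|v_\star - u_\star\|_H^2$, and the balanced regularization term equals $2\lambda \sum_i a_i^2 = 2\lambda \|u_\star\|$, summing to $L_\lambda^\star$ independently of~$\theta$.

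For part~\ref{pr:Llambda:min}, given $m \geq 2$ I would take a $\theta^\star \in \Theta_{u_\star}$ concentrating all mass in a single active neuron and repurpose a zero neuron as $(w_2, a_2) = (\rho \overline{v}, \sigma \rho)$, for a unit direction~$\overline{v}$, sign $\sigma \in \{\pm 1\}$, and small $\rho > 0$. Writing $S = \{k : \overline{v}^\top x_k > 0\}$, $z_S = \sum_{k \in S} r_k x_k$, and $r_k = u_\star^\top x_k - y_k$, the loss difference expands as
\begin{equation}
L_\lambda(\theta^\star + \text{new}) - L_\lambda^\star = 2\rho^2 \bigl(\sigma\, \overline{v}^\top z_S / d + \lambda\bigr) + O(\rho^4),
\end{equation}
which is negative for small $\rho > 0$ whenever $\sigma$ is taken opposite in sign to $\overline{v}^\top z_S$ and $|\overline{v}^\top z_S|/d > \lambda$. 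I would take $\overline{v}$ as the normalized component of $x^\dag_2$ perpendicular to~$x^\dag_3$, so that by the dual-basis identity $(x^\dag_k)^\top x_j = \delta_{kj}$ we have $\overline{v}^\top x_2 > 0$, $\overline{v}^\top x_3 < 0$, and $\overline{v}^\top x_k = 0$ for $k \notin \{2, 3\}$, giving $S = \{2\}$ exactly. Using the leading-order identity $r_k = -d\lambda\, v_\star^\top x^\dag_k + O(\lambda^2)$ implied by $v_\star - u_\star = \lambda H^{-1} \overline{u}_\star$ and $H^{-1} x_k = d\, x^\dag_k$, a direct computation yields
\begin{equation}
\frac{|\overline{v}^\top z_{\{2\}}|}{d} = \lambda\, \frac{\cos \angle(v_\star, x^\dag_2)}{\sin \angle(x^\dag_2, x^\dag_3)} + O(\lambda^2),
\end{equation}
and Proposition~\ref{pr:angles}(c) ensures the leading coefficient is strictly greater than~$1$.

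The main challenge is the creative choice of direction~$\overline{v}$ matching the geometric angle condition to a precise algebraic inequality on $|\overline{v}^\top z_S|/d$: taking the perpendicular component of~$x^\dag_2$ relative to~$x^\dag_3$ is tailored so that the leading-order ratio equals precisely $\cos\angle(v_\star, x^\dag_2) / \sin\angle(x^\dag_2, x^\dag_3)$, as appears in Proposition~\ref{pr:angles}(c). A secondary technicality is verifying that the $O(\lambda^2)$ correction terms, arising from the higher-order dependence of~$u_\star$ on~$\lambda$ and from the small deviations of the data points~$x_k$ from the centers~$\widehat{x}_k$, do not swamp the strict inequality; this follows by taking $\lambda$ and~$\eta$ sufficiently small.
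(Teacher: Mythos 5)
Your proof of part~(a) matches the paper's: both observe that $\overline{u}_\star^\top x_k > 0$ makes the ReLU act linearly on the training data, yielding $f_\theta(x_k) = u_\star^\top x_k$ for every $\theta \in \Theta_{u_\star}$, from which $L_\lambda^\star$ follows at once.

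For part~(b), your route is genuinely different from the paper's, though it exploits the same geometric insight. The paper directly constructs a full competing rank-$2$ network: it writes $u_\star = u_1 + \zeta\,\overline{x}^\dag_2$ with $u_1 = u_\star - \zeta\,\overline{x}^\dag_2$, then replaces the second piece by its projection $u_2 = \zeta\bigl(\overline{x}^\dag_2 - \overline{x}^\dag_3\,\overline{x}^\dag_3{}^\top\overline{x}^\dag_2\bigr)$ onto the orthogonal complement of $\overline{x}^\dag_3$, and checks that the ReLU hides this shortening on all training points while the total path norm drops by $\zeta^2/2$ (giving an explicit $\lambda\zeta^2/2$ loss decrease). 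You instead perform a local second-order perturbation: starting from a rank-$1$ minimizer $\theta^\star$, you switch on a zero neuron in the direction $\overline{v} \propto x^\dag_2 - \overline{x}^\dag_3\,\overline{x}^\dag_3{}^\top x^\dag_2$ — the same direction as the paper's $u_2$ up to normalization — and show the second-order variation $2\rho^2\bigl(\sigma\,\overline{v}^\top z_S/d + \lambda\bigr)$ is strictly negative. Your leading-order computation $|\overline{v}^\top z_S|/d = \lambda\,\cos\angle(v_\star, x^\dag_2)/\sin\angle(x^\dag_2, x^\dag_3) + O(\lambda^2)$ is correct (it follows from $v_\star - u_\star = \lambda H^{-1}\overline{u}_\star$, $H^{-1}x_k = d\,x^\dag_k$, and $\overline{v}^\top x_2 = 1/(\|x^\dag_2\|\sin\angle(x^\dag_2, x^\dag_3))$), and Proposition~\ref{pr:angles}\ref{pr:angles:cos.sin} makes the leading coefficient exceed~$1$, so the condition holds for small enough~$\lambda$. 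What your approach buys is a cleaner conceptual interpretation (the rank-$1$ point violates second-order optimality along the ReLU-protected direction), whereas the paper's construction gives a quantitative, $\rho$-free gap between $L_\lambda^\star$ and the optimum; both hinge on the same key angle condition and the same direction choice, so the underlying geometry is shared.
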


\begin{proof}
Part~\ref{pr:Llambda:star} follows by observing that
\[\forall k \in [d],
  f_\theta(x_k) = u_\star^\top x_k
  \eqqcolon z_k.\]

For~\ref{pr:Llambda:min}, letting
\[\zeta \coloneqq
  \cos \angle(u_\star, x^\dag_2) - \sin \angle(x^\dag_2, x^\dag_3),\]
for small enough $\eta$~and~$\lambda$ we have that \cref{pr:angles} holds, $\zeta > 0$, and it is straightforward to check that $\zeta / \|x^\dag_2\| < z_2$.

The main idea here is that the properties of the dataset, in particular the inequality in \cref{pr:angles}~\ref{pr:angles:cos.sin}, allow us to express~$u_\star$ as the sum of two neurons, where the first is close to $u_\star$ and the second is crafted using a shortening by projection whose validity relies on the ReLU non-linearity.  Specifically, we define:
\begin{align}
u_1 & \coloneqq
u_\star - \zeta \, \overline{x}^\dag_2 &
u_2 & \coloneqq
\zeta (\overline{x}^\dag_2 -
       \overline{x}^\dag_3 \, {\overline{x}^\dag_3}^\top \overline{x}^\dag_2).
\end{align}

Next we define $\theta = (W, a) \in \R^{m \times (d + 1)}$ by expressing $u_1$~and~$u_2$ using balanced inner and output layers:
\begin{gather}
w_i \coloneqq \frac{1}{\sqrt{\|u_i\|}} u_i \text{ and }
a_i \coloneqq \sqrt{\|u_i\|}
\text{ for both } i \in \{1, 2\} \\
w_i \coloneqq 0 \text{ and }
a_i \coloneqq 0
\text{ for all } i > 2.
\end{gather}

Recalling that ${x^\dag_k}^\top x_{k'} = \iind{k = k'}$ for all $k, k' \in [d]$ by the definition of $(x^\dag_k)_{k \in [d]}$, and that ${x^\dag_3}^\top x^\dag_2 > 0$ for small enough~$\eta$, we have that, on all training points, the network~$\theta$ agrees with every network in~$\Theta_{u_\star}$:
\begin{align}
f_\theta(x_2) & =
\sigma(u_1^\top x_2) + \sigma(u_2^\top x_2) =
(z_2 - \zeta / \|x^\dag_2\|) + \zeta / \|x^\dag_2\| =
z_2 \\
f_\theta(x_3) & =
\sigma(u_1^\top x_3) + \sigma(u_2^\top x_3) =
z_3 + \sigma(-(\zeta  / \|x^\dag_3\|) \, {\overline{x}^\dag_3}^\top \overline{x}^\dag_2) =
z_3 \\
f_\theta(x_k) & =
u_\star^\top x_k =
z_k
\quad\text{for all } k \notin \{2, 3\}.
\end{align}

Now we verify that the network~$\theta$ has smaller norm:
\begin{align}
\|\theta\|^2
& =    \textstyle{\sum}_{i \in \{1, 2\}}
       (a_i^2 + \|w_i\|^2) \\
& =    2 (\|u_1\| + \|u_2\|) \\
& =    2 \bigl(\sqrt{\|u_\star\|^2 + \zeta^2
                   - 2 \zeta \cos \angle(u_\star, x^\dag_2)} +
               \zeta \sin \angle(x^\dag_2, x^\dag_3)\bigr) \\
& \leq 2 \Bigl(1 / 2 + \|u_\star\|^2 / 2 +
               \zeta
               \bigl(\zeta / 2
                   - \cos \angle(u_\star, x^\dag_2)
                   + \sin \angle(x^\dag_2, x^\dag_3)\bigr)\Bigr) \\
& = 1 + \|u_\star\|^2 - \zeta^2 \\
& \leq 2 \|u_\star\| - \zeta^2 / 2,
\end{align}
where the last inequality holds for small enough~$\lambda$.

Hence
\begin{align}
L_\lambda(\theta)
& = \frac{1}{d} \sum_{k = 1}^d (f_\theta(x_k) - y_k)^2
  + \lambda \|\theta\|^2 \\
& \leq \|u_\star - v_\star\|_H^2
     + 2 \lambda \|u_\star\| - \lambda \, \zeta^2 / 2 \\
& < L_\lambda^\star.
\qedhere
\end{align}
\end{proof}

Now we turn to considering a subgradient flow as in \cref{eq:gradientflow}, beginning with a proposition about the random initialization according to \cref{ass:gradientflow}~\ref{ass:gradientflow:init}.  To state it, we define notations for the sets of indices of training points that are on the boundary or strictly inside of the active half-space of a ReLU neuron $w \in \R^d$:
\begin{align}
K_0(w) & \coloneqq
\{k \in [d] \;\vert\; w^\top x_k = 0\} &
K_+(w) & \coloneqq
\{k \in [d] \;\vert\; w^\top x_k > 0\};
\end{align}
define notations $\s_i$~for the signs of the output weights (which we shall shortly show stay unchanged throughout the training), and $I_+$~for the set of indices of neurons that have positive output weight and are initially active on at least one training point:
\begin{align}
\s_i & \coloneqq
\sign(a_i(0)) &
I_+ & \coloneqq
\{i \in [m] \;\vert\; \s_i = 1 \text{ and } K_+(w_i(0)) \neq \emptyset\};
\end{align}
and define the vector that will be the focus of the early alignment:
\[\gamma \coloneqq
  \frac{2}{d} \sum_{k \in [d]} y_k \, x_k.\]
The proposition lower bounds the probability that the initialization has the following regularity properties: the set~$I_+$ just defined is non-empty, no neuron is exactly orthogonal to some training point, and no neuron with negative output weight has exactly the same direction as the vector~$\gamma$.

\begin{prop}
\label{pr:1-34m}
With probability at least $1 - (\frac{3}{4})^m$, we have:
$I_+ \neq \emptyset$,
$K_0(w_i(0)) = \emptyset$ for all $i \in [m]$, and
$\angle(w_i(0), \gamma) > 0$ for all $i \in [m]$ with $\s_i = -1$.
\end{prop}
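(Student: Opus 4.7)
The plan is to observe that the second and third required properties hold almost surely for the random initialization, so the main work is to establish $I_+ \neq \emptyset$ with probability at least $1 - (3/4)^m$. For the second property, for each fixed $i \in [m]$ and $k \in [d]$, the set $\{w \in \alpha \bS^{d-1} : w^\top x_k = 0\}$ is the intersection of a hyperplane with the sphere, of dimension $d - 2$ and hence zero surface measure; a union bound over the $m d$ pairs $(i, k)$ gives probability zero for $K_0(w_i(0)) \neq \emptyset$ occurring for some~$i$. For the third property, since $y_k = v_\star^\top x_k$ by \cref{ass:dataset}~\ref{ass:dataset:xxy}, one computes $\gamma = 2 H v_\star$, which is non-zero because $H$~is positive definite (by \cref{pr:angles}~\ref{pr:angles:lin.ind}) and $v_\star \neq 0$; thus $\{w \in \alpha \bS^{d-1} : \angle(w, \gamma) = 0\}$ consists of the single point $\alpha \overline{\gamma}$, again of zero measure, and another union bound suffices.

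For the first property, I would fix $i \in [m]$ and bound $\bP(i \in I_+)$ from below. By independence of the two layers at initialization and $\bP(\s_i = 1) = 1/2$, we have $\bP(i \in I_+) = \tfrac{1}{2} \bP(K_+(w_i(0)) \neq \emptyset)$, so it suffices to show $\bP(K_+(w_i(0)) \neq \emptyset) \geq 1/2$. Let $A^+ \coloneqq \{w \in \alpha \bS^{d-1} : \exists k \in [d],\, w^\top x_k > 0\}$ and $A^- \coloneqq \{w \in \alpha \bS^{d-1} : \exists k \in [d],\, w^\top x_k < 0\}$. The uniform distribution on $\alpha \bS^{d-1}$ is invariant under $w \mapsto -w$, which swaps $A^+$ and $A^-$ bijectively, so $\bP(A^+) = \bP(A^-)$. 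Moreover, the complement of $A^+ \cup A^-$ consists of those $w$ satisfying $w^\top x_k = 0$ for all $k \in [d]$, which by the linear independence of $(x_k)_{k \in [d]}$ (\cref{pr:angles}~\ref{pr:angles:lin.ind}) forces $w = 0$, impossible on $\alpha \bS^{d-1}$. Therefore $\bP(A^+) + \bP(A^-) \geq 1$, whence $\bP(A^+) \geq 1/2$ and $\bP(i \notin I_+) \leq 3/4$.

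By independence across neurons, $\bP(I_+ = \emptyset) \leq (3/4)^m$, and intersecting with the two almost-sure events above yields the proposition. The argument is essentially elementary with no real obstacle; the key conceptual ingredient is the symmetry $w \leftrightarrow -w$ of the spherical uniform distribution together with the linear independence of the training points guaranteed by \cref{pr:angles}~\ref{pr:angles:lin.ind}.
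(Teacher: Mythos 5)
Your proof is correct and follows essentially the same approach as the paper's (which is just a terse one-liner asserting that $\s_i = 1$ and $K_+(w_i(0)) \neq \emptyset$ are independent events of probability at least $\tfrac{1}{2}$, and that the other two events have probability~$0$); you have simply filled in the details — the $w \mapsto -w$ symmetry argument using linear independence of the $x_k$, the computation $\gamma = 2 H v_\star \neq 0$, and the measure-zero union bounds — that the paper leaves implicit.
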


\begin{proof}
The $2 m$ events $\s_i = 1$ and $K_+(w_i(0)) \neq \emptyset$ are independent and have probability at least $\frac{1}{2}$, so the probability that $I_+ = \emptyset$ is at most $(\frac{3}{4})^m$.  The events $K_0(w_i(0)) \neq \emptyset$ and $\angle(w_i(0), \gamma) = 0$ all have probability~$0$.
\end{proof}

In the next proposition, part~\ref{pr:flow:bal} states that the balancedness between the inner and output layers at initialization is preserved throughout the training, and that the output layer signs do not change; and part~\ref{pr:flow:dfs} spells out the time derivatives of the hidden neurons and their normalized versions, as well as of the output weights and the logarithms of their absolute values.

\begin{prop}
\label{pr:flow}
\begin{enumerate}[(a)]
\item
\label{pr:flow:bal}
For all $i \in [m]$ and all $t \in \R_+$ we have
$a_i(t) = \s_i \|w_i(t)\|$.
\item
\label{pr:flow:dfs}
For all $i \in [m]$ and almost all $t \in \R_+$ we have:
\begin{align}
\dot{w}_i(t) & =
     a_i(t) \, g_i(t) - 2 \lambda w_i(t) &
\dot{\overline{w}}_i(t) & =
\s_i \bigl(g_i(t) - \overline{w}_i(t) \, \overline{w}_i(t)^\top g_i(t)\bigr) \\
\dot{a}_i(t) & =
w_i(t)^\top g_i(t) - 2 \lambda a_i(t) &
\frac{\df(\ln \|w_i(t)\|)}{\df t} & =
\s_i \, \overline{w}_i(t)^\top g_i(t) - 2 \lambda,
\end{align}
where
\[g_i(t) \in
  \frac{2}{d} \sum_{k \in [d]}
  (y_k - f_{\theta(t)}(x_k)) \partial \sigma(w_i(t)^\top x_k) x_k.\]
\end{enumerate}
\end{prop}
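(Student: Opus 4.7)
The plan is to compute the Clarke subdifferential of $L_\lambda$ with respect to each neuron's parameters using the ReLU identity $\sigma(z) = z \cdot s$ for every $s \in \partial\sigma(z)$, then to deduce balancedness from a simple ODE for $a_i^2 - \|w_i\|^2$, and finally to read off the remaining identities via the chain rule.

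First I would expand the partial subdifferentials. Choosing subgradients $s_{ik} \in \partial\sigma(w_i^\top x_k)$ consistently across both variables, the vector $g_i \coloneqq \frac{2}{d}\sum_{k\in[d]} (y_k - f_{\theta}(x_k)) \, s_{ik} \, x_k$ lies in the subdifferential in the statement, and a direct computation gives $-a_i g_i + 2\lambda w_i \in \partial_{w_i} L_\lambda$ and, using $\sigma(w_i^\top x_k) = s_{ik}(w_i^\top x_k)$, also $-w_i^\top g_i + 2\lambda a_i \in \partial_{a_i} L_\lambda$. Combining with the subgradient flow $\dot\theta \in -\partial L_\lambda$ yields the first two ODEs in part~\ref{pr:flow:dfs}.

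Next, for part~\ref{pr:flow:bal}, I would differentiate the balance deficit:
\[
\tfrac{d}{dt}\bigl(a_i(t)^2 - \|w_i(t)\|^2\bigr)
= 2a_i \dot a_i - 2 w_i^\top \dot w_i
= -4\lambda\bigl(a_i(t)^2 - \|w_i(t)\|^2\bigr),
\]
since the $g_i$-terms cancel. By \cref{ass:gradientflow}\ref{ass:gradientflow:init} the deficit vanishes at $t=0$, hence $|a_i(t)| = \|w_i(t)\|$ for all $t \geq 0$. To upgrade this to the signed identity $a_i(t) = \s_i \|w_i(t)\|$, I would note that if $a_i(t^\star) = 0$ at some time then $w_i(t^\star) = 0$ by balancedness, and at the origin both ODE right-hand sides vanish ($a_i g_i = 0$ and $w_i^\top g_i = 0$), so the trajectory is pinned at zero and the sign cannot flip.

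Finally, part~\ref{pr:flow:dfs}'s remaining identities follow from the chain rule on the set where $w_i \neq 0$: differentiating $\overline{w}_i = w_i/\|w_i\|$ gives $\dot{\overline{w}}_i = \|w_i\|^{-1}(I - \overline{w}_i \overline{w}_i^\top)\dot{w}_i$, and substituting $\dot{w}_i = \s_i \|w_i\| g_i - 2\lambda w_i$ kills the weight-decay term (since $(I - \overline{w}_i \overline{w}_i^\top) w_i = 0$) and leaves $\s_i(g_i - \overline{w}_i \overline{w}_i^\top g_i)$. Similarly, $\frac{d}{dt}\ln\|w_i\| = w_i^\top \dot{w}_i / \|w_i\|^2 = \s_i \, \overline{w}_i^\top g_i - 2\lambda$. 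There is no real obstacle here; the only subtle point is that the same $g_i$ must realize both partial subdifferentials simultaneously, which is ensured by fixing one common choice of the $s_{ik}$ in the Clarke subdifferential of the composition of ReLU with a linear map.
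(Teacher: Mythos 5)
Your computation of the subdifferentials, the balance ODE $\frac{\df}{\df t}(a_i^2 - \|w_i\|^2) = -4\lambda(a_i^2 - \|w_i\|^2)$, and the chain-rule derivations for $\dot{\overline{w}}_i$ and $\frac{\df}{\df t}\ln\|w_i\|$ all match the paper's proof and are correct.

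The gap is in the sign-preservation step. You assert that, should $(w_i(t^\star), a_i(t^\star))$ reach the origin, the trajectory is "pinned" there because the right-hand side vanishes; but a vanishing vector field at a point does \emph{not} by itself give forward uniqueness for a differential inclusion (consider $\dot x \in \{\sqrt{|x|}\}$), so the claim is unproven as written. Even if one patches it (e.g.\ via Gr\"onwall using $|a_i| = \|w_i\|$ and boundedness of $g_i$ to show that the origin is forward-invariant), the conclusion would still be too weak: if $w_i$ did hit zero at a finite time $t^\star$ and stayed there on $[t^\star,\infty)$, then $\overline{w}_i(t)$ would be undefined on a set of positive measure, so the formulas for $\dot{\overline{w}}_i$ and $\frac{\df}{\df t}\ln\|w_i\|$ asserted to hold for almost all $t$ would fail. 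The paper instead shows that $a_i$ \emph{never} reaches zero, by lower-bounding $\frac{\df}{\df t}\ln|a_i(t)| \geq -\|g_i(t)\| - 2\lambda \geq -2\sqrt{L_\lambda(\theta(0))} - 2\lambda$, using that the loss is non-increasing along the flow; this pins $|a_i(t)|$ above $|a_i(0)| e^{-Ct} > 0$ on every compact interval, hence the sign never flips and $w_i$ never vanishes. You should replace the origin-pinning argument with this lower bound on the log-derivative.
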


\begin{proof}
Since, in our setting of two-layer ReLU networks and regularized square loss, the chain rule applies~\citep[see, \eg][Theorem~5.8]{DavisDKL20}, the equations for $\dot{w}_i(t)$ and $\dot{a}_i(t)$ in part~\ref{pr:flow:dfs} follow by straightforward expansions.

Now, for all $i \in [m]$ and almost all $t \in \R_+$ we have
\[\df(a_i^2(t) - \|w_i(t)\|^2) / \df t
= -4 \lambda (a_i^2(t) - \|w_i(t)\|^2),\]
and recall that we have balancedness at the initialization, \ie $a_i(0) = \s_i \|w_i(0)\|$.
Hence, for part~\ref{pr:flow:bal}, it remains to show that each $a_i(t)$ maintains its initial sign, \ie remains non-zero.  That follows by the next lower bound on the derivative of $\ln |a_i(t)|$, where the last inequality is a consequence of the loss being non-increasing~\citep[see, \eg][Lemma~5.2]{DavisDKL20}:
\begin{multline}
\frac{\df |\ln a_i(t)|}{\df t}
\geq -\|g_i(t)\| - 2 \lambda
\geq -\frac{2}{d} \sum_{k \in [d]}
      |y_k - f_{\theta(t)}(x_k)|
     -2 \lambda
\geq -2 \sqrt{L_\lambda(\theta(t))}
     -2 \lambda
\geq -2 \sqrt{L_\lambda(\theta(0))}
     -2 \lambda.
\end{multline}

The remainder of part~\ref{pr:flow:dfs}, namely the equations for $\dot{\overline{w}}_i(t)$ and $\df(\ln \|w_i(t)\|) / \df t$, now follow by straightforward calculations.
\end{proof}

At this moment we are equipped for the first of three lemmas, which will contain our analysis of the training split into three phases.  It describes the state of the network at time
\[T_1 \coloneqq
  \epsilon \ln(1 / \alpha) / \|\gamma\|,\]
which we regard as the end of the early alignment phase.  Namely, all neurons with positive output weight and that were initially active on at least one training point are still small, active on all training points, and closely aligned to the vector~$\gamma$; and all other neurons are even smaller, and for the remainder of the training remain deactivated from all training points and shrink by the weight decay.

\begin{lem}
\label{l:T1}
Provided the initialization has the properties in \cref{pr:1-34m}, and provided $\eta$ and $\alpha$ are sufficiently small, the following hold.
\begin{enumerate}[(a)]
\item
\label{l:T1:I+}
For all $i \in I_+$ we have:
\begin{align}
\|w_i(T_1)\| & < 2 \alpha^{1 - \epsilon} &
K_+(w_i(T_1)) & = [d] &
\overline{w}_i(T_1)^\top \, \overline{\gamma} & \geq 1 - \alpha^\epsilon.
\end{align}
\item
\label{l:T1:woI+}
For all $i \in [m] \setminus I_+$ and all $t \geq T_1$ we have:
\begin{align}
\|w_i(T_1)\| & \leq \alpha^{1 + 2 \lambda \epsilon / \|\gamma\|} &
K_+(w_i(t)) & = \emptyset &
\|w_i(t)\| & = e^{-2 \lambda (t - T_1)} \|w_i(T_1)\|.
\end{align}
\end{enumerate}
\end{lem}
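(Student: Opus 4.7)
The plan is to perform a phase-by-phase analysis of each neuron's trajectory on $[0, T_1]$, exploiting the smallness of the initialization scale $\alpha$ to reduce the gradient $g_i(t)$ in \cref{pr:flow}\ref{pr:flow:dfs} to its leading-order expression in terms of the labels alone. First I would establish a coarse a priori bound on the neuron norms using the log-derivative inequality $\frac{\df \ln \|w_i(t)\|}{\df t} \leq \|g_i(t)\| \leq 2\sqrt{L_\lambda(\theta(0))}$: since the right-hand side is a constant independent of $\alpha$ and $T_1$ is of order $\ln(1/\alpha)$, this yields $\|w_i(t)\| \leq \alpha^{1-\kappa}$ on $[0,T_1]$ for some constant $\kappa < 1$. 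Balancedness (\cref{pr:flow}\ref{pr:flow:bal}) then gives $|f_{\theta(t)}(x_k)| = O(\alpha^{2(1-\kappa)}) = o(1)$, so the gradient simplifies to $g_i(t) = \frac{2}{d}\sum_{k \in K_+(w_i(t))} y_k x_k + o(1)$. From \cref{ass:dataset}\ref{ass:dataset:xxy} and the explicit form of the centers $\widehat{x}_k$, for $\eta$ small enough we have $y_k > 0$ and $x_k^\top x_j > 0$ for all $k, j \in [d]$; hence this leading-order $g_i(t)$ is always a non-negative combination of the $x_k$'s with strictly positive inner product with every $x_j$ whenever $K_+(w_i(t)) \neq \emptyset$.

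For $i \in I_+$, I would analyze the spherical ODE $\dot{\overline{w}}_i = g_i - \overline{w}_i \overline{w}_i^\top g_i$ of \cref{pr:flow}\ref{pr:flow:dfs}. The key monotonicity is that any rotation toward the positive cone of the active $x_k$'s can only add activations, never remove them, so $K_+(w_i(t))$ saturates to $[d]$ within a constant time independent of $\alpha$. Once saturated, $g_i(t) \to \gamma$ and the angular flow becomes the standard gradient flow on the sphere toward $\overline{\gamma}$ at exponential rate $\|\gamma\|$, making $T_1 = \epsilon \ln(1/\alpha)/\|\gamma\|$ more than sufficient to reach alignment $1 - \alpha^\epsilon$. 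A bootstrap of the log-derivative, now using the sharper bound $\s_i \overline{w}_i^\top g_i \leq \|g_i\| \leq \|\gamma\| + o(1)$, refines the growth factor to at most $e^{\|\gamma\| T_1}(1+o(1)) < 2\alpha^{-\epsilon}$, giving the norm bound in part~\ref{l:T1:I+}.

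For $i \notin I_+$, two sub-cases arise. If $\s_i = 1$ and $K_+(w_i(0)) = \emptyset$, then $g_i(t) = 0$ while the neuron is deactivated on every training point, so by \cref{ass:gradientflow}\ref{ass:gradientflow:dead} it stays deactivated forever and its dynamics reduce to pure weight decay $\dot{w}_i = -2\lambda w_i$. If $\s_i = -1$, the extra minus sign in $\dot{\overline{w}}_i = -(g_i - \overline{w}_i \overline{w}_i^\top g_i)$ rotates the direction \emph{away from} the positive cone of the currently active $x_k$'s; since $x_k^\top x_j > 0$ for all $k, j$, this forces $K_+(w_i(t))$ to shrink to empty within constant time, after which the same lock-in applies. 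Integrating $\dot{w}_i = -2\lambda w_i$ from the deactivation time (negligible compared to $T_1$) up to $T_1$ yields both the exponential decay formula of part~\ref{l:T1:woI+} and the norm bound $\|w_i(T_1)\| \leq \alpha^{1 + 2\lambda\epsilon/\|\gamma\|}$.

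The hard part will be making the monotonicity and saturation arguments rigorous in the partial-activation regime for $i \in I_+$ before saturation occurs: since $g_i(t)$ depends discontinuously on which $x_k$'s lie in $K_+(w_i(t))$, one must track the finitely many activation events, confirm that each takes only constant time, and tightly couple the angular and radial dynamics to upgrade the coarse bound $\alpha^{1-\kappa}$ to the sharper $2\alpha^{1-\epsilon}$ claimed in part~\ref{l:T1:I+}.
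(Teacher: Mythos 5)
Your overall plan is sound and mirrors the paper's structure (coarse norm control, leading-order simplification of $g_i$, angular dynamics, radial refinement), but it diverges on one methodological point and has two concrete gaps.

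On the route: where you attempt to redo the partial-activation angular analysis from scratch via a monotonicity/saturation argument, the paper instead defers those parts to \citet[Lemmas~3, 5 and Proposition~18]{chistikov2023learning}, supplying only the two new claims (a sharp norm bound and a $\leq -2\lambda$ log-derivative bound for the non-$I_+$ neurons) that the regularization actually changes. Your monotonicity observation --- that at a boundary $\overline{w}_i^\top x_j = 0$ the derivative is $x_j^\top g_i > 0$ because all $y_k > 0$ and $x_j^\top x_k > 0$, so activations are never lost for $\s_i = 1$ --- is correct at leading order, and if carried out carefully would yield a more self-contained proof; you are right that making the constant-time saturation and the passage between finitely many activation regimes rigorous is the real work.

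Gap 1 (the coarse bound). You integrate $\frac{\df \ln\|w_i(t)\|}{\df t} \leq \|g_i(t)\| \leq 2\sqrt{L_\lambda(\theta(0))}$ over $[0,T_1]$ and conclude $\|w_i(T_1)\| \leq \alpha^{1-\kappa}$ with $\kappa = 2\epsilon\sqrt{L_\lambda(\theta(0))}/\|\gamma\|$. But Cauchy--Schwarz gives $\|\gamma\| = \bigl\|\frac{2}{d}\sum_k y_k x_k\bigr\| \leq 2\sqrt{\frac{1}{d}\sum_k y_k^2} \approx 2\sqrt{L_\lambda(\theta(0))}$, so $\kappa \geq 2\epsilon$ and in fact $\kappa$ can approach or exceed~$1$ depending on the geometry; you would have to verify for this dataset that $\epsilon < \|\gamma\|/(2\sqrt{L_\lambda(\theta(0))})$, which you do not. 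The paper sidesteps this by proving the sharper $\|w_i(t)\| < 2\alpha^{1-\epsilon t/T_1}$ directly by contradiction (\cref{cl:wit.T1}), using the refined bound $\|g_i\| \leq \|\gamma\| + 2\max_k|f_{\theta}(x_k)|$, which in turn relies on the structural fact that dropping terms from $\frac{2}{d}\sum_k y_k x_k$ can only decrease its norm when all $y_k > 0$ and $x_j^\top x_k \geq 0$. This gives a guaranteed positive exponent $1-\epsilon$ with no extra constraint on $\epsilon$.

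Gap 2 (the norm bound for $i \notin I_+$). You claim the bound $\|w_i(T_1)\| \leq \alpha^{1 + 2\lambda\epsilon/\|\gamma\|}$ follows by integrating $\dot{w}_i = -2\lambda w_i$ ``from the (negligible) deactivation time to $T_1$''. But that only gives $\|w_i(T_1)\| \leq \|w_i(t_{\mathrm{deact}})\| e^{-2\lambda (T_1 - t_{\mathrm{deact}})}$, and the neuron can grow before it deactivates, so $\|w_i(t_{\mathrm{deact}})\|$ may exceed $\alpha$ by a factor that the decay over a constant window $t_{\mathrm{deact}}$ does not cancel. The stated bound is exactly $\alpha\, e^{-2\lambda T_1}$, i.e.\ decay at rate at least $2\lambda$ on the whole of $[0,T_1]$. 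This is what the paper establishes in \cref{cl:df.ln.wit}: for $\s_i = -1$ (and hence $\frac{\df \ln\|w_i\|}{\df t} = -\overline{w}_i^\top g_i - 2\lambda$), one has $\overline{w}_i^\top g_i \geq 0$ \emph{throughout} $[0, T_1]$, because $w_i^\top g_i = \frac{2}{d}\sum_{k\in K_+(w_i)} (y_k - f_{\theta}(x_k)) w_i^\top x_k$ and the residuals $y_k - f_{\theta}(x_k)$ stay positive while the norms are small. Your proposal should replace the ``integrate from deactivation'' step with this uniform log-derivative bound.
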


\begin{proof}
First we show the next claim, which has as a corollary the assertion $\|w_i(T_1)\| < 2 \alpha^{1 - \epsilon}$ in part~\ref{l:T1:I+}.

\begin{claim}
\label{cl:wit.T1}
Provided $\alpha$~is sufficiently small, for all $i \in [m]$ and all $t \in [0, T_1]$ we have $\|w_i(t)\| < 2 \alpha^{1 - \epsilon t / T_1}$.
\end{claim}

\begin{proof}[Proof of \cref{cl:wit.T1}]
For a contradiction, suppose
\[t = \inf \{t \in [0, T_1] \;\vert\;
             \|w_i(t)\| \geq 2 \alpha^{1 - \epsilon t / T_1}
             \text{ for some } i \in [m]\}.\]
Since each $w_i(t)$ is continuous, there exists $i \in [m]$ such that $\|w_i(t)\| = 2 \alpha^{1 - \epsilon t / T_1}$, and for all $j \in [m]$ and all $\tau \in [0, t]$ we have $\|w_j(\tau)\| \leq 2 \alpha^{1 - \epsilon \tau / T_1}$.  We then observe that
\begin{align}
\|w_i(t)\|
& \leq \alpha \exp(t \max_{\tau \in [0, t]} \|g_i(\tau)\|)
& & \text{by \cref{pr:flow}~\ref{pr:flow:dfs}, Gr\"onwall's inequality} \\
& \leq \alpha \exp\Bigl(t \bigl(\|\gamma\| +
                                2 \max_{\substack{k \in [d] \\ \tau \in [0, t]}}
                                  |f_{\theta(\tau)}(x_k)|\bigr)\Bigr)
& & \text{by \cref{pr:flow}~\ref{pr:flow:dfs} and } \|x_k\| = 1 \\
& \leq \alpha \exp\Bigl(t \bigl(\|\gamma\| +
                                2 m \max_{\substack{j \in [m] \\ \tau \in [0, t]}}
                                    \|w_j(\tau)\|^2\bigr)\Bigr)
& & \text{by \cref{eq:f} and \cref{pr:flow}~\ref{pr:flow:bal}} \\
& \leq \alpha \exp\Bigl(t \bigl(\|\gamma\| +
                                8 m \max_{\tau \in [0, t]}
                                    \alpha^{2 - 2 \epsilon \tau / T_1}\bigr)\Bigr)
& & \text{since } \|w_j(\tau)\| \leq 2 \alpha^{1 - \epsilon \tau / T_1} \\
& \leq \alpha \exp\bigl(t (\|\gamma\| +
                           8 m \, \alpha^{2 - 2 \epsilon})\bigr)
& & \text{since } \alpha \leq 1 \text{ and } \tau \leq T_1 \\
& \leq \alpha \exp(t \|\gamma\| +
                   8 m T_1 \alpha^{2 - 2 \epsilon})
& & \text{since } t \leq T_1 \\
& =    \alpha^{1 - \epsilon t / T_1
                 - 8 m \epsilon \, \alpha^{2 - 2 \epsilon} / \|\gamma\|}
& & \text{since } \exp(T_1 \|\gamma\|) = \alpha^{-\epsilon} \\
& <    \alpha^{1 - \epsilon t / T_1
                 - (\ln 2) / \ln (1 / \alpha)}
& & \text{for small enough } \alpha \\
& =    2 \alpha^{1 - \epsilon t / T_1},
\end{align}
which contradicts $\|w_i(t)\| = 2 \alpha^{1 - \epsilon t / T_1}$.
\end{proof}

Second we show a claim from which $\|w_i(T_1)\| \leq \alpha^{1 + 2 \lambda \epsilon / \|\gamma\|}$ follows by recalling that $e^{T_1 \|\gamma\|} = \alpha^{-\epsilon}$.

\begin{claim}
\label{cl:df.ln.wit}
 Provided $\eta$~and~$\alpha$ are sufficiently small, with probability~$1$, for all $i \in [m] \setminus I_+$ and almost all $t \in [0, T_1]$ we have $\df (\ln \|w_i(t)\|) / \df t \leq -2 \lambda$.
\end{claim}

\begin{proof}[Proof of \cref{cl:df.ln.wit}]
Consider $i \in [m] \setminus I_+$.

If $K_+(w_i(0)) = \emptyset$, we can assume that also $K_0(w_i(0)) = \emptyset$, which occurs with probability~$1$.  Then from \cref{pr:flow}~\ref{pr:flow:dfs} for all $t \in \R_+$ we have $\dot{\overline{w}}_i(t) = 0$ and $\df (\ln \|w_i(t)\|) / \df t = -2 \lambda$, \ie hidden neuron $w_i(t)$ has constant direction and its length decreases exponentially at constant rate $2 \lambda$.

Otherwise, we have $\s_i = -1$.  Consider $t \in [0, T_1]$ for which \cref{pr:flow}~\ref{pr:flow:dfs} applies.  By the equation for $\df (\ln \|w_i(t)\|) / \df t$, it suffices to show that $w_i(t)^\top g_i(t) \geq 0$.  But since
\[w_i(t)^\top g_i(t)
= \frac{2}{d} \sum_{k \in [d]}
  (y_k - f_{\theta(t)}(x_k)) \sigma(w_i(t)^\top x_k)
= \frac{2}{d} \sum_{k \in K_+(w_i(t))}
  (y_k - f_{\theta(t)}(x_k)) w_i(t)^\top x_k,\]
it suffices to verify that $f_{\theta(t)}(x_k) \leq y_k$ for all $k \in [d]$, which follows \cref{pr:angles}~\ref{pr:angles:v.star} and \cref{cl:wit.T1} for small enough~$\alpha$.
\end{proof}

Now observe that the second assertion $K_+(w_i(T_1)) = [d]$ in part~\ref{l:T1:I+} follows from the third one $\overline{w}_i(T_1)^\top \, \overline{\gamma} \geq 1 - \alpha^\epsilon$ and \cref{pr:angles}~\ref{pr:angles:v.star} for small enough~$\alpha$.

Next observe that the remainder of part~\ref{l:T1:woI+} (\ie that we have $K_+(w_i(t)) = \emptyset$ and $\|w_i(t)\| = e^{-2 \lambda (t - T_1)} \|w_i(T_1)\|$ for all $t \geq T_1$) follows from: $K_+(w_i(T_1)) = \emptyset$, \cref{ass:gradientflow}~\ref{ass:gradientflow:dead}, \cref{pr:flow}~\ref{pr:flow:bal}, and the equation for $\dot{a}_i(t)$ in \cref{pr:flow}~\ref{pr:flow:dfs}.

Therefore it remains to establish that $\overline{w}_i(T_1)^\top \, \overline{\gamma} \geq 1 - \alpha^\epsilon$ for all $i \in I_+$, and that $K_+(w_i(T_1)) = \emptyset$ for all $i \in [m] \setminus I_+$.  These follow by the proofs of \citet[][Lemmas~3 and~5, and Proposition~18]{chistikov2023learning}, which carry over to our setting without significant modifications once \citet[][Lemma~19]{chistikov2023learning} is replaced by \cref{cl:wit.T1,cl:df.ln.wit} above.  The latter is the only part affected by the regularization, \ie by the presence of the $-2 \lambda$ term in the equation for $\df (\ln \|w_i(t)\|) / \df t$ in \cref{pr:flow}~\ref{pr:flow:dfs}; the remainder of the reasoning is based on the equation for $\dot{\overline{w}}_i(t)$, which is the same with and without the regularization.  Note also that \citet[][Assumption~1]{chistikov2023learning} is satisfied due to \cref{ass:dataset,ass:gradientflow}, \cref{pr:angles}~\ref{pr:angles:lin.ind}, and \cref{pr:1-34m} above.
\end{proof}

Now we come to the second lemma, which handles the intermediate phase of the training that begins when the early alignment ends at time~$T_1$.  At times $t \geq T_1$, a key role will be played by the vector
\[v(t) \coloneqq
  \sum_{i \in I_+} a_i(t) \, w_i(t)\]
obtained by composing the aligned neurons.  Again, the lemma describes the state of the network at the end of the phase.  Namely, at some time~$T_2$, the composite vector $v(T_2)$ will be near to the teacher vector~$v_\star$, and all the constituent neurons will still be closely aligned (no longer with the vector~$\gamma$ but with each other).

\begin{lem}
\label{l:T2}
Provided the initialization has the properties in \cref{pr:1-34m}, and provided $\eta$~and~$\alpha$ are sufficiently small and $\lambda \leq \mu_{\min} \alpha^{2 \epsilon}$, there exists $T_2 > T_1$ such that:
\begin{enumerate}[(a)]
\item
\label{l:T2:vst.v}
$\|v_\star - v(T_2)\|
 \leq \alpha^\epsilon$.
\item
\label{l:T2:w.w}
$\overline{w}_i(T_2)^\top \, \overline{w}_{i'}(T_2)
 \geq 1 - 4 \alpha^\epsilon$
for all $i, i' \in I_+$.
\end{enumerate}
\end{lem}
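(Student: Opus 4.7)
The plan is to transplant the amplification-refinement analysis of \citet{chistikov2023learning} to our regularized setting.  The essential new observation is that the equation $\dot{\overline{w}}_i(t) = \s_i(g_i(t) - \overline{w}_i(t)\,\overline{w}_i(t)^\top g_i(t))$ in \cref{pr:flow}~\ref{pr:flow:dfs} contains no~$\lambda$: weight decay only damps the radial components $\|w_i(t)\|$ at the uniform exponential rate~$2\lambda$, while the angular evolution of each hidden neuron is identical to its unregularized counterpart.  The task therefore reduces to showing that, over the logarithmically short duration of this phase, the cumulative radial damping is negligible and the alignment is preserved.

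By a bootstrap combining \cref{l:T1}~\ref{l:T1:I+} with continuity and the alignment bound~\ref{l:T2:w.w} which I maintain throughout, I would first show that every $i \in I_+$ stays active on all training points, so that the network output is linear, $f_{\theta(t)}(x_k) = v(t)^\top x_k$, and every active neuron sees the same gradient $g(t) \coloneqq g_i(t) = 2 H(v_\star - v(t))$.  The composite vector then obeys the closed ODE
\begin{equation*}
\dot v(t) = 2\bigl(M(t) + s(t)\, I\bigr) H\bigl(v_\star - v(t)\bigr) - 4\lambda\, v(t),
\end{equation*}
where $M(t) \coloneqq \sum_{i \in I_+} w_i(t) w_i(t)^\top$ and $s(t) \coloneqq \sum_{i \in I_+} \|w_i(t)\|^2 = \sum_{i \in I_+} a_i(t)^2$ by \cref{pr:flow}~\ref{pr:flow:bal}.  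While the neurons remain $O(\alpha^\epsilon)$-aligned, $M(t) + s(t) I$ has smallest eigenvalue at least~$s(t)$, so $s(t)$ grows exponentially with rate $\Omega(\mu_{\min})$ until $s(t) = \Theta(\|v_\star\|)$, and thereafter $v(t)$ contracts toward~$v_\star$ at the same rate.  Taking $T_2$ to be the first hitting time of $\|v_\star - v(t)\| \leq \alpha^\epsilon$ yields $T_2 - T_1 = O(\mu_{\min}^{-1} \log(1/\alpha))$; the hypothesis $\lambda \leq \mu_{\min}\alpha^{2\epsilon}$ then bounds the cumulative damping factor by $e^{-2\lambda(T_2 - T_1)} \geq 1 - o(\alpha^\epsilon)$, so the regularization perturbs $v(t)$ by at most $o(\alpha^\epsilon)$ away from the unregularized trajectory, which suffices for part~\ref{l:T2:vst.v}.

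The main obstacle is part~\ref{l:T2:w.w}, propagating the alignment.  Since $\dot{\overline w}_i$ is $\lambda$-free and all active neurons share the same~$g(t)$ by the common activation established above, the pairwise cosines $c_{ii'}(t) \coloneqq \overline w_i(t)^\top \overline w_{i'}(t)$ satisfy the same ODE
\begin{equation*}
\dot c_{ii'}(t) = \bigl(g(t)^\top (\overline w_i(t) + \overline w_{i'}(t))\bigr)(1 - c_{ii'}(t))
\end{equation*}
as in the unregularized setting of \citet{chistikov2023learning}, and the angular arguments there transfer verbatim.  Concretely, during amplification $g(t) \approx 2 H v_\star = \gamma$ is itself closely aligned with every~$\overline w_i(t)$, so $g^\top(\overline w_i + \overline w_{i'}) \geq 0$ and the cosines cannot decrease; during refinement, $\|g(t)\|$ contracts at rate $\Omega(\mu_{\min})$, making $\int_{T_1}^{T_2} \|g(\tau)\|\,\df\tau = O(1)$.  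A Gr\"onwall estimate against the initial bound $1 - c_{ii'}(T_1) = O(\alpha^\epsilon)$, which follows from $\overline w_i(T_1)^\top \overline\gamma \geq 1 - \alpha^\epsilon$ in \cref{l:T1}~\ref{l:T1:I+}, then delivers $c_{ii'}(T_2) \geq 1 - 4\alpha^\epsilon$, closing the bootstrap and completing the proof.
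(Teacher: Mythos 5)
Your high-level decomposition — the angular equation $\dot{\overline{w}}_i$ being $\lambda$-free, weight decay entering only through uniform radial damping, and the cumulative damping factor $e^{-2\lambda(T_2-T_1)}$ being controlled by the hypothesis $\lambda \leq \mu_{\min}\alpha^{2\epsilon}$ — captures the right intuition and is broadly aligned with the paper's intent. However, your argument for part~\ref{l:T2:w.w} has a concrete gap that would sink a fully written proof.

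The issue is your Gr\"onwall step. You propose to bound the cosine deficit $1 - c_{ii'}(T_2)$ by $e^{2\int\|g\|}\cdot(1 - c_{ii'}(T_1))$. But $\int_{T_1}^{T_2}\|g(\tau)\|\,\df\tau$ is not $O(1)$: during the amplification sub-phase $\|g\| \approx \|\gamma\|$ is bounded away from zero, and that sub-phase lasts $\Theta(\|\gamma\|^{-1}\log(1/\alpha))$ since $\sum_{i\in I_+}\|w_i(t)\|^2$ must grow from $\Theta(\alpha^{2-2\epsilon})$ to $\Theta(1)$, so the integral is $\Theta(\log(1/\alpha))$, giving a Gr\"onwall factor that is polynomial in $1/\alpha$. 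Even if you restrict the Gr\"onwall estimate to the refinement sub-phase where $\int\|g\| = O(1)$ does hold, the factor $e^{O(1)}$ is still a constant $> 1$, which cannot be absorbed: from \cref{l:T1}~\ref{l:T1:I+} one only gets $1 - c_{ii'}(T_1) \leq 4\alpha^\epsilon - 2\alpha^{2\epsilon}$, so the available slack is a higher-order term $2\alpha^{2\epsilon}$, and multiplying by any constant $> 1$ overshoots the target $4\alpha^\epsilon$ for small~$\alpha$. The correct mechanism is that $\overline{w}_i(t)^\top\overline{g}(t) > 0$ for \emph{all} $t \in [T_1, T_2]$, not only during amplification, so $\dot{c}_{ii'}(t) = (1 - c_{ii'}(t))\,g(t)^\top(\overline{w}_i(t) + \overline{w}_{i'}(t)) \geq 0$ throughout, and the cosines are monotone nondecreasing — no Gr\"onwall is needed. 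The paper imports precisely this sign control (together with $\overline{v}(t)^\top x_k > \sqrt{8}\alpha^{\epsilon/2}$ and $\overline{v}(t)^\top\overline{g}(t) > \alpha^{\epsilon/3}$) from the proof of Theorem~6 of \citet{chistikov2023learning}, after defining $T_2$ as a stopping time that halts the phase whenever $\dot{v}$ strays too far from the idealized unregularized direction $\delta(t) = \|v(t)\|(g(t) + \overline{v}(t)\overline{v}(t)^\top g(t))$. The genuinely new work in the paper's proof is then a single explicit estimate showing $\|\dot{v}(T_2) - \delta(T_2)\| < 3\alpha^{\epsilon/2}\|\delta(T_2)\|$, in which the $4\lambda\|w_i\|^2\overline{w}_i$ term is controlled via $\lambda \leq \mu_{\min}\alpha^{2\epsilon}$ and $\|g\| \geq 2\mu_{\min}\|v_\star - v\|$; this is what certifies that the phase terminates because $\|v_\star - v(T_2)\| \leq \alpha^\epsilon$ rather than because the comparison with the unregularized flow broke down. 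Your sketch never sets up such a stopping-time bootstrap, so it does not give a rigorous handle on when your closed ODE $\dot{v} = 2(M + sI)H(v_\star - v) - 4\lambda v$ is actually valid (it requires all $I_+$~neurons to remain active on all training points, which itself depends on the alignment you are trying to prove).
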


\begin{proof}
Let:
\begin{align}
g(t) & \coloneqq
2 H (v_\star - v(t)) &
\delta(t) & \coloneqq
\|v(t)\| \bigl(g(t) + \overline{v}(t) \overline{v}(t)^\top g(t)\bigr).
\end{align}

By \cref{l:T1} and the proof of \citet[][Theorem~6]{chistikov2023learning}, the time
\[T_2 \coloneqq
  \inf \{t \geq T_1 \;\vert\;
         \|\dot{v}(t) - \delta(t)\| \geq 3 \alpha^{\epsilon / 2} \|\delta(t)\| \text{ or }
         \|v_\star - v(t)\|         \leq \alpha^\epsilon\}\]
is finite, and moreover for all $t \in [T_1, T_2]$, all $i, i' \in I_+$, and all $k \in [d]$ we have:
\begin{align}
\overline{v}(t)^\top x_k
& > \sqrt{8} \alpha^{\epsilon / 2} &
\overline{w}_i(t)^\top \overline{w}_{i'}(t)
& > 1 - 4 \alpha^\epsilon &
\overline{v}(t)^\top \overline{g}(t)
& > \alpha^{\epsilon / 3}.
\end{align}

Consequently, since $\sin \xi \leq \sqrt{2(1 - \cos \xi)}$, for all $t \in [T_1, T_2]$, all $i \in I_+$, and all $k \in [d]$, we also have:
\begin{align}
\overline{w}_i(t)^\top x_k & > 0 &
f_{\theta(t)}(x_k) & = v(t)^\top x_k &
g_i(t) & = g(t) &
\overline{w}_i(t)^\top \overline{g}(t) & > \alpha^{\epsilon / 3} / 2
\end{align}
and $v(t)$ is differentiable.

We already have part~\ref{l:T2:w.w}.  To establish part~\ref{l:T2:vst.v}, it suffices to show that $\|\dot{v}(T_2) - \delta(T_2)\| < 3 \alpha^{\epsilon / 2} \|\delta(T_2)\|$.  We reason as follows, where the argument~$T_2$ of all vectors is omitted for readability:
\begin{align}
\|\dot{v} - \delta\|
& = \Bigl\|
    \sum_{i \in I_+}
    \|w_i\|^2
    \bigl(g
          + \overline{w}_i \, \overline{w}_i^\top g
          - 4 \lambda \overline{w}_i\bigr)
    - \delta
    \Bigr\|
& & \text{by \cref{pr:flow} and } g_i = g \\
& = \Bigl\|
    \sum_{i \in I_+}
    \|w_i\|^2
    \bigl(g
          + \overline{w}_i \, \overline{w}_i^\top g
          - 4 \lambda \overline{w}_i\bigr) \\ & \qquad
  - \sum_{i \in I_+}
    \|w_i\|^2
    \bigl((\overline{v}^\top \overline{w}_i) g
          + \overline{v} \, \overline{w}_i^\top g\bigr)
    \Bigr\|
& & \text{since } v = \sum_{i \in I_+} \overline{v} \, \overline{v}^\top a_i \, w_i \\
& = \Bigl\|
    \sum_{i \in I_+}
    \|w_i\|^2
    \bigl((1 - \overline{v}^\top \overline{w}_i) g
          - 4 \lambda \overline{w}_i\bigr) \\ & \qquad
  + \sum_{i \in I_+}
    \|w_i\|^2
    (\overline{w}_i - \overline{v}) \overline{w}_i^\top g
    \Bigr\|
& & \text{by rearranging} \\
& \leq \sum_{i \in I_+}
       \|w_i\|^2
       \bigl(|1 - \overline{v}^\top \overline{w}_i| \, \|g\|
             + 4 \lambda\bigr)
& & \text{by the triangle inequality} \\
& \qquad +
         \sum_{i \in I_+}
         \|w_i\|^2
         \|\overline{w}_i - \overline{v}\| \, \overline{w}_i^\top g
& & \quad \text{and } \overline{w}_i^\top g \geq 0 \\
& \leq (4 \alpha^\epsilon \|g\| + 4 \lambda)
       \sum_{i \in I_+}
       \|w_i\|^2
& & \text{since } \overline{v}^\top \overline{w}_i \geq 1 - 4 \alpha^\epsilon \\ & \qquad
     + \sqrt{8} \alpha^{\epsilon / 2}
       \sum_{i \in I_+}
       \|w_i\|^2 \,
       \overline{w}_i^\top g
& & \quad \text{and } \sin \xi \leq \sqrt{2(1 - \cos \xi)} \\
& = (4 \alpha^\epsilon \|g\| + 4 \lambda)
    \sum_{i \in I_+}
    \|w_i\|^2
  + \sqrt{8} \alpha^{\epsilon / 2} \,
    v^\top g
& & \text{since } v = \sum_{i \in I_+} \|w_i\|^2 \, \overline{w}_i \\
& \leq \frac{4 \alpha^\epsilon \|g\| + 4 \lambda}
            {1 - 4 \alpha^\epsilon}
       \|v\| +
       \sqrt{8} \alpha^{\epsilon / 2} \,
       v^\top g
& & \text{since } \overline{v}^\top \overline{w}_i \geq 1 - 4 \alpha^\epsilon \\
& \leq \frac{4 \alpha^\epsilon (\|g\| + \mu_{\min} \alpha^\epsilon)}
            {1 - 4 \alpha^\epsilon}
       \|v\|
     + \sqrt{8} \alpha^{\epsilon / 2} \,
       v^\top g
& & \text{since } \lambda \leq \mu_{\min} \alpha^{2 \epsilon} \\
& \leq \frac{6 \alpha^\epsilon}
            {1 - 4 \alpha^\epsilon}
       \|v\| \|g\|
     + \sqrt{8} \alpha^{\epsilon / 2} \,
       v^\top g
& & \text{since } \|g\| \geq 2 \mu_{\min} \|v_\star - v\| \\
& \leq \Bigl(\frac{6 \alpha^\epsilon}
                  {1 - 4 \alpha^\epsilon}
           + \sqrt{8} \alpha^{\epsilon / 2}\Bigr)
       \|\delta\|
& & \text{since } \angle(g, \overline{v} \, \overline{v}^\top g) \leq \pi / 2 \\
& < 3 \alpha^{\epsilon / 2}
    \|\delta\|
& & \text{for small enough } \alpha
\end{align}
and since $\|v\| > 0$ by \cref{pr:flow}~\ref{pr:flow:bal}.
\end{proof}

Finally, our third lemma gives an account of the late convergence phase that begins when the intermediate phase ends at time~$T_2$.  It establishes that the subgradient flow converges to a network in~$\Theta_{u_\star}$, \ie a rank-$1$ minimizer of the regularized loss~$L_\lambda$.

\begin{lem}
\label{l:T3}
Provided the initialization has the properties in \cref{pr:1-34m}, and provided $\eta$~and~$\alpha$ are sufficiently small and $\lambda \leq \mu_{\min} \alpha^{2 \epsilon}$, we have that $\lim_{t \to \infty} \theta(t) \in \Theta_{u_\star}$.
\end{lem}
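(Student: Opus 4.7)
The plan is to continue the trajectory from time~$T_2$, at which Lemma~\ref{l:T2} already guarantees that the composite vector $v(T_2)$ is within $\alpha^\epsilon$ of~$v_\star$ and the active neurons in~$I_+$ are pairwise $(1 - 4\alpha^\epsilon)$-aligned. The inactive neurons are dispatched by Lemma~\ref{l:T1}\ref{l:T1:woI+}: for $i \notin I_+$ we have $K_+(w_i(t)) = \emptyset$ and $\|w_i(t)\|$ decays as $e^{-2\lambda (t - T_1)} \|w_i(T_1)\|$ for all $t \geq T_1$, so $w_i(t) \to 0$ and, by the balancedness of Proposition~\ref{pr:flow}\ref{pr:flow:bal}, also $a_i(t) \to 0$. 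Hence the only remaining task is to show that the active cluster collapses onto a single direction $\overline{u}_\star$ and that the composite vector converges to~$u_\star$.

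First I would use Lemma~\ref{l:T2}\ref{l:T2:vst.v} together with Proposition~\ref{pr:angles}\ref{pr:angles:v.star} to argue that, throughout the remainder of the flow, each active neuron stays inside the cone where $\overline{w}_i^\top x_k > 0$ for every $k \in [d]$. Inside this common activation region, $g_i(t)$ collapses to the single signal $g(t) = 2 H (v_\star - v(t))$ for all $i \in I_+$, and Proposition~\ref{pr:flow}\ref{pr:flow:dfs} gives
\[
\frac{\df (a_i w_i)}{\df t}
=
\|w_i\|^2 \bigl(g(t) + \overline{w}_i \, \overline{w}_i^\top g(t)\bigr) - 4 \lambda (a_i w_i).
\]
Summing over $i \in I_+$ and letting $R(t) \coloneqq \sum_{i \in I_+} \|w_i(t)\|^2$, the composite satisfies
\[
\dot{v}(t)
=
R(t) \bigl(g(t) + \overline{v}(t) \, \overline{v}(t)^\top g(t)\bigr) - 4 \lambda v(t) + E(t),
\]
with a misalignment error $E(t)$ whose size is controlled by $\max_{i \in I_+} \|\overline{w}_i(t) - \overline{v}(t)\|^2$ (of order $\alpha^\epsilon$ at $t = T_2$ by Lemma~\ref{l:T2}\ref{l:T2:w.w}). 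In the rank-1 idealization where all $\overline{w}_i$ coincide with $\overline{v}$, this is exactly the subgradient flow of the reduced objective $\Psi(v) \coloneqq \|v_\star - v\|_H^2 + 2 \lambda \|v\|$, whose unique minimizer is $u_\star$ by the first-order condition $H(v_\star - u_\star) = \lambda \overline{u}_\star$ that defines $u_\star$ via equation~\eqref{eq:ustar}.

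The heart of the argument is then a local Polyak--{\L}ojasiewicz inequality for $\Psi$ near $u_\star$, following Chatterjee but adapted to weight decay: positive definiteness of $H$ (smallest eigenvalue $\mu_{\min} > 0$) yields $\Psi(v) - \Psi(u_\star) \lesssim \|\nabla \Psi(v)\|^2 / \mu_{\min}$ in a neighborhood of~$u_\star$, hence exponential convergence of the idealized dynamics at rate~$\Omega(\mu_{\min})$. I would close the loop by a bootstrap argument: as long as the pairwise alignment stays above, say, $1 - 8 \alpha^\epsilon$, the perturbation $E(t)$ is strictly dominated by the PL attraction driving $v(t)$ towards $u_\star$; in parallel, because $g(t)$ enters the equation for each $\dot{\overline{w}}_i$ identically and weight decay acts only on $\|w_i\|$ (leaving directions invariant), pairwise alignment is non-decreasing modulo lower-order terms and thus preserved throughout $[T_2, \infty)$. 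Combined with convergence $v(t) \to u_\star$ and balancedness, the limit is a network whose active neurons all point along $\overline{u}_\star$ with $\sum_{i \in I_+} a_i^2 = \|u_\star\|$ (since $v = \|u_\star\| \overline{u}_\star$), that is, an element of $\Theta_{u_\star}$.

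The main obstacle will be making this PL bootstrap rigorous in the presence of weight decay, because the $-4\lambda(a_i w_i)$ terms can temporarily enlarge the misalignment and the Lyapunov function $\Psi$ is only smooth away from $v = 0$. I would handle the latter by noting that $\|v(t)\| \geq \|v_\star\| - \alpha^\epsilon$ stays bounded away from zero on the relevant time window, making $\Psi$ smooth along the trajectory. For the former I would track the scalar $\max_{i, i' \in I_+} \|\overline{w}_i(t) - \overline{w}_{i'}(t)\|$ through a separate differential inequality whose negative feedback comes from the common driving signal~$g(t)$ remaining non-degenerate as long as $v(t)$ is not already at~$u_\star$, thereby ensuring that any transient disalignment induced by unequal weight decay contributions is reabsorbed rather than amplified.
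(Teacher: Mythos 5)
Your high-level plan matches the paper's: show the inactive neurons die (from Lemma~\ref{l:T1}\ref{l:T1:woI+}), show the composite vector~$v(t)$ converges to~$u_\star$ via a Polyak--{\L}ojasiewicz argument, and bootstrap alignment control to keep the $I_+$ neurons together. But there is a genuine gap at the heart of the PL step: you establish PL for the \emph{reduced} objective $\Psi(v) = \|v_\star - v\|_H^2 + 2\lambda\|v\|$, which controls only the composite vector. This argument (even if made rigorous) would at best show $v(t) \to u_\star$ while the pairwise disalignment stays \emph{bounded} at the order $\alpha^\epsilon$; it does not show the disalignment \emph{vanishes}. The lemma claims $\lim_{t\to\infty}\theta(t) \in \Theta_{u_\star}$, which requires every active neuron to converge to exact alignment with $\overline{u}_\star$. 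A state with $v = u_\star$ and strictly disaligned active neurons is not stationary (the $\lambda(\overline{u}_\star - \overline{w}_i)$ part of the per-neuron field forces eventual alignment), but that restoring force acts at rate $\Theta(\lambda)$, and your reduced-PL rate of $\Omega(\mu_{\min})$ does not see it. Your final sentence that ``the limit is a network whose active neurons all point along $\overline{u}_\star$'' is asserted, not derived.

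The paper avoids this by proving the PL inequality for the \emph{full} shifted loss $\widehat{L}_\lambda(\theta) = L_\lambda(\theta) - L_\lambda^\star$, whose explicit decomposition
\[
\widehat{L}_\lambda(\theta(t)) = \|u_\star - v(t)\|_H^2 + \lambda\sum_{i\in I_+}\|w_i(t)\|^2\|\overline{u}_\star - \overline{w}_i(t)\|^2 + 2\lambda\sum_{i\notin I_+}\|w_i(t)\|^2
\]
makes the per-neuron misalignment part of the quantity being driven to zero. Its lower bound on $\|\nabla L_\lambda\|^2$ is correspondingly two-part (one $\mu_{\min}$-scaled term for the MSE, one $\lambda^2$-scaled term for the norms/alignment), and the resulting PL constant is $\min\{\mu_{\min}, 2\lambda\mu_{\min}/\mu_{\max}\}$, \emph{not} $\Omega(\mu_{\min})$ as you claim -- the $\lambda$-limited rate is exactly the price of getting the individual alignments. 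The paper also needs a separate case analysis depending on whether $\overline{u}_\star^\top v(t)$ overshoots $\|u_\star\|$ (the cross term in the expansion of $\widehat{L}_\lambda$ can then be negative), which your sketch does not mention. Finally, a smaller inaccuracy: in the rank-$1$ idealization the composite flow is $\dot v = \|v\|(I + \overline{v}\,\overline{v}^\top)(-\nabla\Psi(v))$, i.e.\ a Riemannian gradient flow of~$\Psi$ in a $v$-dependent metric, not the Euclidean subgradient flow of~$\Psi$ as you state; this does not break the idea, but the identification you invoke is not ``exact.''
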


\begin{proof}
Let
\[T_3 \coloneqq
  \inf \{t \geq T_2 \;\vert\;
         \overline{w}_i(t)^\top \, \overline{w}_{i'}(t)
         \leq 1 - \alpha^{\epsilon / 2}
         \text{ for some } i, i' \in I_+\}.\]

Thus, recalling \cref{l:T2}~\ref{l:T2:vst.v} by which the composite vector~$v(t)$ at time $t = T_2$ is within~$\alpha^\epsilon$ of the teacher vector~$v_\star$, and also \cref{l:T1}~\ref{l:T1:woI+} by which the neurons~$w_i(t)$ with $i \notin I_+$ at all times $t \geq T_1$ do not contribute to the network outputs~$f_{\theta(t)}(x_k)$ for any $k \in [d]$: for small enough~$\alpha$, all $t \in [T_2, T_3)$, all $i, i' \in I_+$, and all $k \in [d]$ we have
$\overline{w}_i(t)^\top x_k > 0$ and
$f_{\theta(t)}(x_k) = v(t)^\top x_k$,
hence
\[L_\lambda(\theta(t)) =
  \|v_\star - v(t)\|_H^2
+ 2 \lambda \sum_{i \in [m]} \|w_i(t)\|^2,
  \label{eq:Llambda}\]
so if $i \in I_+$ then
\begin{align}
-\frac{1}{2} \nabla_{w_i(t)} L_\lambda
& = (\nabla_{w_i(t)} v(t))^\top H (v_\star - v(t))
  - \lambda w_i(t) \\
& = a_i(t) H (v_\star - v(t))
  - \lambda w_i(t) \\
& = a_i(t) H (v_\star - u_\star)
  + a_i(t) H (u_\star - v(t))
  - \lambda w_i(t) \\
& = a_i(t)
    \bigl(H (u_\star - v(t))
        + \lambda (\overline{u}_\star - \overline{w}_i(t))\bigr) \\
-\frac{1}{2} \nabla_{a_i(t)} L_\lambda
& = (\nabla_{a_i(t)} v(t))^\top H (v_\star - v(t))
  - \lambda a_i(t) \\
& = w_i(t)^\top H (v_\star - v(t))
  - \lambda a_i(t) \\
& = w_i(t)^\top H (v_\star - u_\star)
  + w_i(t)^\top H (u_\star - v(t))
  - \lambda a_i(t) \\
& = w_i(t)^\top
    \bigl(H (u_\star - v(t))
        + \lambda (\overline{u}_\star - \overline{w}_i(t))\bigr),
\end{align}
and if $i \in [m] \setminus I_+$ then
\begin{align}
-\frac{1}{2} \nabla_{w_i(t)} L_\lambda
& = -\lambda w_i(t) &
-\frac{1}{2} \nabla_{a_i(t)} L_\lambda
& = -\lambda a_i(t),
\end{align}
therefore putting the two cases together we have
\begin{align}
-\frac{1}{2} \nabla_{w_i(t)} L_\lambda
& = a_i(t) h_i(t) &
-\frac{1}{2} \nabla_{a_i(t)} L_\lambda
& = w_i(t)^\top h_i(t)
\label{eq:nabla.Llambda}
\end{align}
where
\[h_i(t) \coloneqq
  \begin{cases}
  H (u_\star - v(t))
+ \lambda (\overline{u}_\star - \overline{w}_i(t))
  & \text{if } i \in I_+, \\
  -\lambda \overline{w}_i(t)
  & \text{if } i \in [m] \setminus I_+.
  \end{cases}
  \label{eq:hit}\]

Consider $t \in [T_2, T_3)$, and let $\mu_{\max}$ denote the largest eigenvalue of~$H$.

For small enough~$\alpha$, by \cref{l:T2} we have
\[L_\lambda(\theta(T_2))
  \leq \mu_{\max} \alpha^{2 \epsilon}
     + 2 \mu_{\min} \alpha^{2 \epsilon}
       \frac{1+ \alpha^\epsilon}{1 - 4 \alpha^\epsilon}
  \leq (\mu_{\max}
      + 3 \mu_{\min})
       \alpha^{2 \epsilon},
  \label{eq:LlambdaT2}\]
so by the loss being non-increasing~\citep[see, \eg][Lemma~5.2]{DavisDKL20} and \cref{eq:Llambda} we have
\[\|v_\star - v(t)\|
  \leq \sqrt{\frac{L_\lambda(\theta(t))}{\mu_{\min}}}
  \leq \sqrt{\frac{L_\lambda(\theta(T_2))}{\mu_{\min}}}
  \leq \sqrt{\Bigl(\frac{\mu_{\max}}{\mu_{\min}} + 3\Bigr)
             \alpha^{2 \epsilon}}
  <    \alpha^{\epsilon / 2}.
  \label{eq:vstar-vt}\]

Thus, at each time~$t$ in this phase, we not only have that the $I_+$~neurons are closely aligned (by the definition of time~$T_3$ at the start of the current proof) but also that their composite vector~$v(t)$ is inside a small ball around the teacher vector~$v_\star$ (which also contains the claimed limit~$u_\star$).

Letting
$Q_i(t) \coloneqq
 \|w_i(t)\|^2 \bigl(I_d + \overline{w}_i(t) \overline{w}_i(t)^\top\bigr)$
for all $i \in [m]$, we have:
\begin{align}
 \Bigl\|\frac{1}{2} \nabla L_\lambda(\theta(t))\Bigr\|^2
&= \sum_{i \in I_+}
    \|H (u_\star - v(t))
    + \lambda (\overline{u}_\star -\overline{w}_i(t))\|_{Q_i(t)}^2 & & \text{by \cref{eq:nabla.Llambda,eq:hit}} 
\\
    &\qquad + \sum_{i \in [m] \setminus I_+}
    \|\lambda \overline{w}_i(t)\|_{Q_i(t)}^2 
& & \text{and \cref{eq:hit}} \\
& \geq \sum_{i \in I_+}
       \|w_i(t)\|^2
       \|H (u_\star - v(t))
       + \lambda (\overline{u}_\star -\overline{w}_i(t))\|^2
& & \text{by omitting} \\ & \qquad
     + \sum_{i \in [m] \setminus I_+}
       \|w_i(t)\|^2
       \|\lambda \overline{w}_i(t)\|^2
& & \Bigl\|\frac{1}{2} \nabla_{a_i(t)} L_\lambda\Bigr\|^2 \text{ terms} \\
& = \sum_{i \in I_+}
    \|w_i(t)\|^2
    \|H^{\frac{1}{2}} (u_\star - v(t))
    + \lambda H^{-\frac{1}{2}} (\overline{u}_\star -\overline{w}_i(t))\|_H^2
& & \text{since } H \text{ is pos.\ def.} \\ & \qquad
  + \lambda^2
    \sum_{i \in [m] \setminus I_+}
    \|w_i(t)\|^2
& & \text{and by simplifying} \\
& \geq \mu_{\min}
       \sum_{i \in I_+}
       \|w_i(t)\|^2
       \|H^{\frac{1}{2}} (u_\star - v(t))
       + \lambda H^{-\frac{1}{2}} (\overline{u}_\star -\overline{w}_i(t))\|^2 \\ & \qquad
     + \lambda^2
       \sum_{i \in [m] \setminus I_+}
       \|w_i(t)\|^2
& & \text{since } \|\cdot\|_H^2 \geq \mu_{\min} \|\cdot\|^2 \\
& = \mu_{\min}
    \bigl(\sum_{i \in I_+}
          \|w_i(t)\|^2\bigr)
    \|u_\star - v(t)\|_H^2
& & \text{expanding the square} \\ & \qquad
  + 2 \mu_{\min} \lambda
    (u_\star - v(t))^\top
    \bigl(\sum_{i \in I_+}
          \|w_i(t)\|^2 \,
          \overline{u}_\star
        - v(t)\bigr)
& & \text{and recalling that,} \\ & \qquad
  + \mu_{\min} \lambda^2
    \sum_{i \in I_+}
    \|w_i(t)\|^2
    \|\overline{u}_\star - \overline{w}_i(t)\|_{H^{-1}}^2
& & \text{by \cref{pr:flow}~\ref{pr:flow:bal},} \\ & \qquad
  + \lambda^2
    \sum_{i \in [m] \setminus I_+}
    \|w_i(t)\|^2
& & v(t) = \sum_{i \in I_+} \|w_i(t)\|^2 \, \overline{w}_i(t) \\
& \geq \mu_{\min}
       \bigl(\sum_{i \in I_+}
             \|w_i(t)\|^2\bigr)
       \|u_\star - v(t)\|_H^2
& & \text{by decomposing } v(t) \\ & \qquad
     + 2 \mu_{\min} \lambda
       (u_\star - \overline{u}_\star \overline{u}_\star^\top v(t))^\top
       \left(\sum_{i \in I_+}
             \|w_i(t)\|^2 \,
             \overline{u}_\star
           - \overline{u}_\star \overline{u}_\star^\top v(t)\right)
& & \text{as } v(t) - \overline{u}_\star \overline{u}_\star^\top v(t) \\ & \qquad
     + \frac{\mu_{\min}}{\mu_{\max}} \lambda^2
       \sum_{i \in I_+}
       \|w_i(t)\|^2
       \|\overline{u}_\star - \overline{w}_i(t)\|^2
& & \text{plus } \overline{u}_\star \overline{u}_\star^\top v(t), \\ & \qquad
     + \lambda^2
       \sum_{i \in [m] \setminus I_+}
       \|w_i(t)\|^2
& & \text{and } \|\cdot\|_{H^{-1}}^2 \geq \mu_{\max}^{-1} \|\cdot\|^2 \\
& = \mu_{\min}
    \left(\sum_{i \in I_+}
          \|w_i(t)\|^2\right)
    \|u_\star - v(t)\|_H^2
& & \text{since } \overline{u}_\star^\top \overline{u}_\star = 1, \\ & \qquad
  + 2 \mu_{\min} \lambda
    (\|u_\star\| - \overline{u}_\star^\top v(t))
    \bigl(\sum_{i \in I_+}
          \|w_i(t)\|^2
        - \overline{u}_\star^\top v(t)\bigr)
& & \|\overline{u}_\star - \overline{w}_i(t)\|^2 \\ & \qquad
  + 2 \frac{\mu_{\min}}{\mu_{\max}} \lambda^2
    \bigl(\sum_{i \in I_+}
          \|w_i(t)\|^2
        - \overline{u}_\star^\top v(t)\bigr)
& & = 2 (1 - \overline{u}_\star^\top \overline{w}_i(t)), \text{ and} \\ & \qquad
  + \lambda^2
    \sum_{i \in [m] \setminus I_+}
    \|w_i(t)\|^2
& & v(t) = \sum_{i \in I_+} \|w_i(t)\|^2 \, \overline{w}_i(t).
\end{align}

To justify in greater detail the last inequality, writing $v_\parallel(t) \coloneqq \overline{u}_\star \overline{u}_\star^\top v(t)$ and $v_\perp(t) \coloneqq v(t) - v_\parallel(t)$, observe that
\begin{align}
(u_\star - v(t))^\top
\bigl(\sum_{i \in I_+}
      \|w_i(t)\|^2 \,
      \overline{u}_\star
    - v(t)\bigr)
& =    \bigl((u_\star - v_\parallel(t)) - v_\perp(t)\bigr)^\top
       \Bigl(\bigl(\sum_{i \in I_+}
                   \|w_i(t)\|^2 \,
                   \overline{u}_\star
                 - v_\parallel(t)\bigr)
           - v_\perp(t)\Bigr) \\
& =    (u_\star - v_\parallel(t))^\top
       \bigl(\sum_{i \in I_+}
             \|w_i(t)\|^2 \,
             \overline{u}_\star
           - v_\parallel(t)\bigr)
     + \|v_\perp(t)\|^2 \\
& \geq (u_\star - v_\parallel(t))^\top
       \bigl(\sum_{i \in I_+}
             \|w_i(t)\|^2 \,
             \overline{u}_\star
           - v_\parallel(t)\bigr).
\end{align}

Now, if
$\|u_\star\| - \overline{u}_\star^\top v(t)
 \geq -\frac{\lambda}{2 \mu_{\max}}$
then
\[2 \mu_{\min} \lambda
  (\|u_\star\| - \overline{u}_\star^\top v(t))
  \bigl(\sum_{i \in I_+}
        \|w_i(t)\|^2
      - \overline{u}_\star^\top v(t)\bigr)
  \geq -\frac{\mu_{\min}}{\mu_{\max}} \lambda^2
       \bigl(\sum_{i \in I_+}
             \|w_i(t)\|^2
           - \overline{u}_\star^\top v(t)\bigr),\]
and if
$\|u_\star\| - \overline{u}_\star^\top v(t)
 < -\frac{\lambda}{2 \mu_{\max}}$
then
\begin{align}
& 2 \mu_{\min} \lambda
  (\|u_\star\| - \overline{u}_\star^\top v(t))
  \bigl(\sum_{i \in I_+}
        \|w_i(t)\|^2
      - \overline{u}_\star^\top v(t)\bigr) \\
& \geq -4 \mu_{\min} \mu_{\max}
       (\|u_\star\| - \overline{u}_\star^\top v(t))^2
       \bigl(\sum_{i \in I_+}
             \|w_i(t)\|^2
           - \overline{u}_\star^\top v(t)\bigr) \\
& \geq -4 \mu_{\min} \mu_{\max}
       \|u_\star - v(t))\|^2
       \bigl(\sum_{i \in I_+}
             \|w_i(t)\|^2
           - \overline{u}_\star^\top v(t)\bigr) \\
& \geq -4 \mu_{\max}
       \|u_\star - v(t))\|_H^2
       \bigl(\sum_{i \in I_+}
             \|w_i(t)\|^2
           - \overline{u}_\star^\top v(t)\bigr) \\
& \geq -\frac{1}{2} \mu_{\min}
       \bigl(\sum_{i \in I_+}
             \|w_i(t)\|^2\bigr)
       \|u_\star - v(t)\|_H^2
\end{align}
since
$1 - \frac{\overline{u}_\star^\top v(t)}
          {\sum_{i \in I_+} \|w_i(t)\|^2}
 \leq \frac{\mu_{\min}}{8 \mu_{\max}}$
for small enough~$\alpha$ by \cref{eq:vstar-vt}.

Therefore, in either case, for small enough~$\alpha$ we have
\begin{align}
\Bigl\|\frac{1}{2} \nabla L_\lambda(\theta(t))\Bigr\|^2
& \geq \frac{1}{2} \mu_{\min}
       \bigl(\sum_{i \in I_+}
             \|w_i(t)\|^2\bigr)
       \|u_\star - v(t)\|_H^2 \\ & \qquad
     + \frac{\mu_{\min}}{\mu_{\max}} \lambda^2
       \bigl(\sum_{i \in I_+}
             \|w_i(t)\|^2
           - \overline{u}_\star^\top v(t)\bigr) \\ & \qquad
     + \lambda^2
       \sum_{i \in [m] \setminus I_+}
       \|w_i(t)\|^2 \\
& \geq \frac{1}{4} \mu_{\min}
       \|u_\star - v(t)\|_H^2 \\ & \qquad
     + \frac{\mu_{\min}}{\mu_{\max}} \lambda^2
       \bigl(\sum_{i \in [m]}
             \|w_i(t)\|^2
           - \overline{u}_\star^\top v(t)\bigr).
\label{eq:nabla.sq}
\end{align}

Using the lower bound on the square of the gradient in \cref{eq:nabla.sq}, which we obtained by distinguishing the two cases depending on whether the projection of $v(t)$ onto the direction of~$u_\star$ significantly exceeds the length of~$u_\star$, we are now able to show a local Polyak-{\L}ojasiewicz inequality.  However, we first need to adjust the regularized loss by subtracting from it the value of each network in the set~$\Theta_{u_\star}$ (see \cref{pr:Llambda}~\ref{pr:Llambda:star}).

Letting
\[\widehat{L}_\lambda(\theta)
  \coloneqq L_\lambda(\theta) - L_\lambda^\star,\]
observe that:
\begin{align}
\widehat{L}_\lambda(\theta(t))
& = \|v_\star - v(t)\|_H^2
  - \|v_\star - u_\star\|_H^2 \\ & \qquad
  + 2 \lambda \bigl(\sum_{i \in [m]} \|w_i(t)\|^2
                  - \|u_\star\|\bigr)
& & \text{by \cref{eq:Llambda}} \\
& = \|v_\star - v(t)\|_H^2
  - \|v_\star - u_\star\|_H^2 \\ & \qquad
  + 2 \bigl(\sum_{i \in [m]} \|w_i(t)\|^2
          - \|u_\star\|\bigr)
      \|H (v_\star - u_\star)\|
& & \text{by \cref{eq:ustar}} \\
& = \|u_\star - v(t)\|_H^2
& & \text{by expanding} \\ & \qquad
  + 2 (u_\star - v(t))^\top H (v_\star - u_\star)
& & \|v_\star - v(t)\|_H^2 = \\ & \qquad
  + 2 \bigl(\sum_{i \in [m]} \|w_i(t)\|^2
          - \|u_\star\|\bigr)
      \|H (v_\star - u_\star)\|
& &  \|(u_\star - v(t)) + (v_\star - u_\star)\|_H^2 \\
& = \|u_\star - v(t)\|_H^2
& & \text{by \cref{eq:ustar}} \\ & \qquad
  + 2 \lambda \bigl(\sum_{i \in [m]} \|w_i(t)\|^2
                  - \overline{u}_\star^\top v(t)\bigr)
& & \text{and simplifying}
\label{eq:whLlambda.eq2} \\
& \leq \frac{1}{\kappa}
       \|\nabla \widehat{L}_\lambda(\theta(t))\|^2
& & \text{by \cref{eq:nabla.sq}}
\end{align}
where
$\kappa \coloneqq
 \min\{\mu_{\min},
       2 \lambda \, \mu_{\min} / \mu_{\max}\}$.

Hence, by Gr\"onwall's inequality and \cref{eq:LlambdaT2}, we have
\[\widehat{L}_\lambda(\theta(t))
  \leq (\mu_{\max} + 3 \mu_{\min}) \alpha^{2 \epsilon}
       e^{-(t - T_2) / \kappa}.
  \label{eq:whLlambda.leq}\]

For a contradiction, suppose $T_3 < \infty$, and let $i, i' \in I_+$ be such that
$1 - \overline{w}_i(T_3)^\top \, \overline{w}_{i'}(T_3)
 = \alpha^{\epsilon / 2}$.
Then:
\begin{align}
& 1 - \overline{w}_i(T_3)^\top \, \overline{w}_{i'}(T_3) \\
& \leq 4 \alpha^\epsilon
     - \int_{T_2}^{T_3}
       \frac{\df}{\df t} \overline{w}_i(t)^\top \, \overline{w}_{i'}(t)
       \, \df t
& & \text{by \cref{l:T2}~\ref{l:T2:w.w}} \\
& = 4 \alpha^\epsilon
  - 2 \int_{T_2}^{T_3}
      \Bigl(
      \bigl(h_i(t) - \overline{w}_i(t) \overline{w}_i(t)^\top h_i(t)\bigr)^\top
      \overline{w}_{i'}(t) \\ & \qquad\qquad\qquad\qquad
    + \overline{w}_i(t)^\top
      \bigl(h_{i'}(t) - \overline{w}_{i'}(t) \overline{w}_{i'}(t)^\top h_{i'}(t)\bigr)
      \Bigr)
      \, \df t
& & \text{by \cref{eq:nabla.Llambda}} \\
& = 4 \alpha^\epsilon
  - 2 \int_{T_2}^{T_3}
      (1 - \overline{w}_i(t)^\top \, \overline{w}_{i'}(t)) \\ & \qquad\qquad\qquad\qquad
      \bigl(H (u_\star - v(t)) + \lambda \overline{u}_\star\bigr)^\top
      (\overline{w}_i(t) + \overline{w}_{i'}(t))
      \, \df t
& & \text{by \cref{eq:hit}} \\
& \leq 4 \alpha^\epsilon
     - 2 \int_{T_2}^{T_3}
         (1 - \overline{w}_i(t)^\top \, \overline{w}_{i'}(t)) \\ & \qquad\qquad\qquad\qquad
         H (u_\star - v(t))^\top
         (\overline{w}_i(t) + \overline{w}_{i'}(t))
         \, \df t
& & \text{for small enough } \alpha \\
& \leq 4 \alpha^\epsilon
     + 4 \alpha^{\epsilon / 2}
       \int_{T_2}^{T_3}
       \|H (u_\star - v(t))\|
       \, \df t
& & \text{as } 1 - \overline{w}_i(t)^\top \, \overline{w}_{i'}(t)
               < \alpha^{\epsilon / 2} \\
& \leq 4 \alpha^\epsilon
     + 4 \alpha^{\epsilon / 2}
       \int_{T_2}^{T_3}
       \sqrt{\frac{\widehat{L}_\lambda(\theta(t))}{\mu_{\min}}}
       \, \df t
& & \text{by \cref{eq:whLlambda.eq2}} \\
& \leq 4 \alpha^\epsilon
     + 4 \alpha^{3 \epsilon / 2} \sqrt{\frac{\mu_{\max}}{\mu_{\min}} + 3}
       \int_{T_2}^{T_3}
       \exp\Bigl(-\frac{t - T_2}{2 \kappa}\Bigr)
       \, \df t
& & \text{by \cref{eq:whLlambda.leq}} \\
& \leq 4 \alpha^\epsilon
     + 8 \kappa \, \alpha^{3 \epsilon / 2} \sqrt{\frac{\mu_{\max}}{\mu_{\min}} + 3}
& & \text{by integrating} \\
& < \alpha^{\epsilon / 2}
& & \text{for small enough } \alpha,
\end{align}
which is a contradiction, and therefore $T_3 = \infty$, \ie the $I_+$~neurons remain closely aligned at all times $t \geq T_2$.

Now, since by \cref{eq:whLlambda.eq2} we have
\[\widehat{L}_\lambda(\theta(t))
  = \|u_\star - v(t)\|_H^2
  + \lambda
    \sum_{i \in I_+}
    \|w_i(t)\|^2
    \|\overline{u}_\star - \overline{w}_i(t)\|^2
  + 2 \lambda
    \sum_{i \in [m] \setminus I_+}
    \|w_i(t)\|^2,
  \label{eq:whLlambda.eq3}\]
from \cref{eq:whLlambda.leq} we conclude that the composite vector~$v(t)$ converges to~$u_\star$, all the constituent neurons converge to perfect alignment, and all other neurons converge to~$0$:
\begin{align}
\lim_{t \to \infty} v(t)
& = u_\star &
\forall i \in I_+,
\lim_{t \to \infty} \overline{w}_i(t)
& = \overline{u}_\star &
\forall i \in [m] \setminus I_+,
\lim_{t \to \infty} w_i(t)
& = 0.
\end{align}

Thus, to infer that $\lim_{t \to \infty} \theta(t) \in \Theta_{u_\star}$, it suffices to observe that for all $i \in I_+$ we have
\begin{align}
& |\df \|w_i(t)\|^2 / \df t| \\
& = 2 \|w_i(t)\|^2
    \bigl|2 \overline{w}_i(t)^\top H (u_\star - v(t))
        - \lambda
          \|\overline{u}_\star - \overline{w}_i(t)\|^2\bigr|
& & \text{by \cref{eq:nabla.Llambda,eq:hit}} \\
& \leq 5 \|H (u_\star - v(t))\|
     + 2 \lambda
       \|w_i(t)\|^2
       \|\overline{u}_\star - \overline{w}_i(t)\|^2
& & \text{for small enough } \alpha \\
& \leq 5 \sqrt{\widehat{L}_\lambda(\theta(t)) / \mu_{\min}}
     + 2 \widehat{L}_\lambda(\theta(t))
& & \text{by \cref{eq:whLlambda.eq3}} \\
& \leq \bigO{1} \exp\Bigl(-\frac{t - T_2}{2 \kappa}\Bigr),
& & \text{by \cref{eq:whLlambda.leq}}
\end{align}
so $\lim_{t \to \infty} \|w_i(t)\|^2$ exists.
\end{proof}

\cref{thm:GFsmallinit} now follows by \cref{pr:1-34m}, fixing $\eta > 0$ and $\alpha^\star > 0$ so that \cref{pr:Llambda,l:T3} hold for all $\alpha \leq \alpha^\star$ and $\lambda \leq \mu_{\min} \alpha^{2 \epsilon} = \mu_{\min} \alpha^\varepsilon$, and recalling that by \cref{eq:ustar} we have $\|v_\star - u_\star\|_H^2 \leq \|H (v_\star - u_\star)\|^2 / \mu_{\min} = \lambda^2 / \mu_{\min}$.

\section{Proof of \cref{thm:orthogonal}}

A main argument for the proof of \cref{thm:orthogonal} is the following key lemma.

\begin{lem}\label{lemma:orthoglobal}
    If the data satisfies \cref{ass:orthogonal}, and we have that $m\geq 2$ and $0<\lambda<\min\left( \sqrt{\sum_{k,y_k>0} y_k^2 \|x_k\|^2}, \sqrt{\sum_{k,y_k<0} y_k^2 \|x_k\|^2}\right)$, then any global minimum $(W^\star,a^\star)\in\R^{m\times(d+1)}$ of either \cref{eq:regularized} or \eqref{eq:minnorm} is such that there exist $j_-,j_+ \in[m]$ satisfying:
    \begin{gather}
        \forall k\in[n], y_k<0 \implies x_k^{\top} w_{j_+}^{\star}>0,\\
        \forall k\in[n], y_k>0 \implies x_k^{\top} w_{j_-}^{\star}>0.
    \end{gather}
\end{lem}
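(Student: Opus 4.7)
The plan consists of two reductions: first collapse any global minimum to a canonical form containing at most one effective neuron per output sign, then apply a first-order perturbation to force each such neuron to be active on the full set of same-sign training points, with the assumption on~$\lambda$ ruling out the trivial case.

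At any local minimum the two layers are balanced, $|a_i^\star|=\|w_i^\star\|$, and the orthogonality of the $x_k$ yields the decomposition $w_i=\sum_k(\beta_{i,k}/\|x_k\|^2)x_k+w_i^\perp$ with $\beta_{i,k}=w_i^\top x_k$. For prescribed per-point contributions $b_{i,k}\coloneqq a_i\sigma(w_i^\top x_k)$, all necessarily of the sign of~$a_i$, the minimum joint norm of a neuron realizing them is attained at $w_i^\perp=0$ and $\beta_{i,k}=0$ for $k$ outside the support of $b_{i,\cdot}$, giving a weight-decay cost of $2\lambda\sqrt{\sum_k b_{i,k}^2/\|x_k\|^2}$ per neuron. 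The $\ell_2$ triangle inequality then shows that merging two same-sign neurons into one carrying their summed contributions leaves the network output unchanged but strictly reduces the weight-decay cost unless the two contribution vectors are proportional. Hence at the optimum one may assume a single effective positive-$a$ neuron (call it $j_-$) with active set $S_+$, and a single effective negative-$a$ neuron $j_+$ with active set $S_-$.

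The second step is a local perturbation. Suppose $y_{k_0}>0$ but $k_0\notin S_+$. If $S_+$ is non-empty, then introducing a small contribution $b_{k_0}^+=t>0$ into $j_-$ changes the mean-square error by $-2ty_{k_0}/n+O(t^2)$ while the weight-decay cost changes only by $O(t^2)$, because $\sqrt{Z^2+t^2/\|x_{k_0}\|^2}=Z+O(t^2)$ around any $Z>0$; this contradicts minimality, so $\{k:y_k>0\}\subseteq S_+$ whenever $S_+\neq\emptyset$. To rule out $S_+=\emptyset$, I compute the subdifferential of the loss at the zero-positive-neuron configuration: Cauchy--Schwarz identifies the steepest feasible descent direction as $u_k\propto y_k\|x_k\|^2$ over $k$ with $y_k>0$, and its directional derivative is strictly negative exactly when $\lambda$ is below the bound assumed in the lemma; so $S_+$ is non-empty, and the symmetric reasoning yields $\{k:y_k<0\}\subseteq S_-$.

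The main obstacle is that the perturbation in the previous paragraph crosses a ReLU activation boundary: enlarging $S_+$ by a new point $k_0$ requires modifying the direction of $j_-$ in the original $(W,a)$ parameterization, not just its contribution vector, and one needs a continuous admissible path rather than a jump. This is handled most cleanly via the convex reformulation of \citet{wang2021hidden}, which in the orthogonal case splits the regularizer into decoupled positive-contribution and negative-contribution group norms, turning both reductions above into convex moves inside a polytope and also decoupling the optimality conditions across positive and negative labels, so the arguments for $S_+$ and $S_-$ do not interact.
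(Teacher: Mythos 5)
Your argument is correct in outline and reaches the same conclusion, but it is packaged differently from the paper. The paper works directly from the subgradient stationarity (KKT) condition, which in the orthogonal case forces $w_i = \s_i\|w_i\|\sum_{k\in B_i}\lambda_k x_k$ with $\lambda_k\propto y_k-f_\theta(x_k)$, immediately yielding both $w_i^\top x_k\le 0\implies w_i^\top x_k=0$ and alignment of same-sign neurons after a merging/triangle-inequality step on $\sum_i\|w_i\|w_i$; the sign analysis is then done via the residuals $\lambda_k$ together with a separate \cref{lemma:nonzeroglobal}. You instead reparametrize each neuron by its contribution vector $b_{i,\cdot}$, argue that a global minimizer must be a per-neuron minimum-norm realization of its contributions (so $w_i^\perp=0$ and $\beta_{i,k}\ge0$), and then merge using the triangle inequality on the weighted $\ell_2$ norm of contributions — this is exactly the paper's $\|\sum_i\|w_i\|w_i\|$ argument in disguise, since $\|w_i\|w_i$ and $b_{i,\cdot}$ are in bijection. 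Your first-order perturbation in place of the paper's residual-sign analysis, and your Cauchy--Schwarz directional-derivative computation in place of the paper's second-order expansion in \cref{lemma:nonzeroglobal}, are genuine variations that are arguably more transparent. One small correction: the MSE derivative when introducing a contribution $t$ at $x_{k_0}$ is $\tfrac{2}{n}\bigl(f_\theta(x_{k_0})-y_{k_0}\bigr)t+O(t^2)$, not $-\tfrac{2}{n}y_{k_0}t$; it happens to have the right sign because you already know $f_\theta(x_{k_0})\le 0$ when $k_0\notin S_+$, but you should say so.

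Your final paragraph worries about a non-issue. After the minimum-norm reduction you have $w_{j_-}^\top x_{k_0}=0$ for every $k_0\notin S_+$, so for the perturbation $w_{j_-}\mapsto w_{j_-}+t\,x_{k_0}$ with small $t>0$ orthogonality guarantees that no other activation changes, $\sigma(w_{j_-}^\top x_{k_0}+t\|x_{k_0}\|^2)=t\|x_{k_0}\|^2$ is linear in $t$, and the $\ell_2$ penalty changes only by $\lambda t^2\|x_{k_0}\|^2$ because the cross term $2t\,w_{j_-}^\top x_{k_0}$ vanishes. There is no balancedness constraint to respect either, since the loss is defined on all of $\R^{m\times(d+1)}$. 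So the one-sided derivative argument goes through directly in the original $(W,a)$ parametrization — this is precisely the perturbation the paper uses in \cref{lemma:nonzeroglobal} — and invoking the convex reformulation of Wang et al.\ is unnecessary here. If you keep it, note also that the claim that the regularizer ``splits into decoupled positive- and negative-contribution group norms'' needs justification: the group norm in that reformulation is per activation pattern, and the decoupling you want is really a consequence of the orthogonality and the fact that each neuron's contribution signs match its output sign, which is what your first two steps already establish.
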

In words, \cref{lemma:orthoglobal} states that any global minimum of either \cref{eq:regularized,eq:minnorm} is such that it has one neuron $w_{j_+}$ that is positively correlated with all the training inputs corresponding to positive labels. 
Similarly, there is a neuron $w_{j_-}$ that is positively correlated to all the training inputs corresponding to negative labels. For the case of \cref{eq:minnorm}, this is a known fact that the global minima are equivalent to two such neurons \citep[][Proposition 1]{BoursierPF22}. \cref{lemma:orthoglobal} extends it to the regularized problem \eqref{eq:regularized}. 
\cref{lemma:orthoglobal} is here weaker than Proposition 1 from \citet{BoursierPF22}, as it only gives a necessary condition for global minima. It is yet sufficient to our purpose, which is proving \cref{thm:orthogonal}.

Similarly to the proof of \cref{thm:losslandscape}, we introduce the notion of \textit{weight cone}. For that, we define the following activation function which here omits the sign of the output neuron:
\begin{equation}
\tilde{A}_n: \begin{array}{l} \R^{d} \to \{0,1\}^{n} \\ w \mapsto (\iind{{w}^{\top}{x}_k\geq0}_{k\in[n]}))\end{array}.
\end{equation}
Similarly, we associate to any binary vector $u\in\{0,1\}^n$ the weight cone $\tilde{\cC}_u\subseteq\R^d$ as
\begin{equation}
    \tilde{\cC}_u = \tilde{A}_n^{-1}(u).
\end{equation}
For any binary matrix $A\in\{0,1\}^{m\times(n+1)}$, we can decompose it in block as $A= \big[ A^0 A^1\big]$ with $A^0\in\{0,1\}^{m\times n}$ and $A^1\in\{0,1\}^{m\times 1}$ such that activation and weight cones are related following
\begin{align}
    \cC^A = \prod_{i=1}^m (\tilde{\cC}_{A^0_i}\times \cR_{A^1_i}),
\end{align}
where we define $\cR_1 = \R_+$ and $\cR_0=\R^*_-$. 
In words, the parameters $(W,a)\in\R^{m\times (d+1)}$ belong to the activation cone $\cC^A$ if and only if for any $i\in[m]$, the $i$-th weight belongs to the weight cone associated to the $i$-th row of $A$ and the sign of $a_i$ corresponds to the last element of such a row, i.e., $(\tilde{A}_n(w_i),\iind{a_i\geq 0})=A_i$.

From here, we can relate the fraction of non-empty cones containing global minima to the coupon collector problem. Indeed, if we choose uniformly at random an activation cone $\cC^A$, it is equivalent to choosing independently, $m$ weight cones $(\tilde{\cC}_{u_i})_{i\in[m]}$ and signs of output $\iind{a_i\geq 0}$ uniformly at random -- there is no empty activation cone in the case of orthogonal data. 

Thus assuming $A$ is a binary matrix in $\{0,1\}^{m\times (n+1)}$, drawn uniformly at random among the set of binary matrices; and that the binary vectors $u_i\in\R^n$ are drawn i.i.d., uniformly at random among the set of binary vectors $u$, we have the following inequality, thanks to \cref{lemma:orthoglobal}:
\begin{multline}
    \bP( \bar{\cC}^A\text{ contains a global minimum of \cref{eq:regularized} or \eqref{eq:minnorm}}) \\\leq \bP(\exists j\in[m], \forall k \text{ s.t. }y_k>0, u_{jk}=1 \text{ and }\exists j\in[m], \forall k \text{ s.t. }y_k<0, u_{jk}=1) \\
     \leq \min\left(\bP(\exists j\in[m], \forall k \text{ s.t. }y_k>0, u_{jk}=1), \bP(\exists j\in[m], \forall k \text{ s.t. }y_k>0, u_{jk}=1)\right).
\end{multline}
Now note that in the orthogonal case, all weight cones are non-empty, i.e., choosing $u_i$ uniformly at random among the set of non-empty weight cones is the same as choosing its components $u_{ik}$ as independent Bernoulli variables of parameter $1/2$. As a consequence using a simple union bound,
\begin{align}
    \bP(\exists j\in[m], \forall k \text{ s.t. }y_k>0, u_{jk}=1) & \leq m 2^{-\cardb{k\in[n]\mid y_k>0}}.
\end{align}
So that this finally yields when choosing $\cC^A$ uniformly at random:
    \begin{multline}
    \bP( \bar{\cC}^A\text{ contains a global minimum of \cref{eq:regularized} or \eqref{eq:minnorm}})  \\ \leq m 2^{-\max(\cardb{k\in[n]\mid y_k>0}, \cardb{k\in[n]\mid y_k<0})}.
\end{multline}
This finally yields the first part of \cref{thm:orthogonal}.

From there, \cref{lemma:orthoactivation} below implies that the weight cones do not change during training in the presence of orthogonal data.
\begin{lem}\label{lemma:orthoactivation}
    If the data satisfies \cref{ass:orthogonal}, then for any $\lambda\geq 0$, the gradient flow solution $\theta$ of \cref{eq:gradientflow} is such that for any $i\in[m]$ and $t\in\R$:
    \begin{equation}
        \tilde{A}_n(w_i(0)) = \tilde{A}_n(w_i(t)).
    \end{equation}
    \end{lem}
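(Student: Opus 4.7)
The plan is to exploit orthogonality to decouple the dynamics of each neuron along the individual data directions, and then to argue directly that the sign of $w_i(t)^\top x_k$ is preserved by the subgradient flow for every $i$~and~$k$.

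First, introduce $u_{i,k}(t) \coloneqq w_i(t)^\top x_k$. The subgradient flow \cref{eq:gradientflow} yields $\dot w_i = \frac{2 a_i}{n} \sum_{j \in [n]} (y_j - f_\theta(x_j))\, s_{i,j}\, x_j - 2\lambda w_i$ for selections $s_{i,j} \in \partial \sigma(w_i^\top x_j)$. Projecting onto $x_k$ and using the orthogonality $x_j^\top x_k = \iind{j = k}\,\|x_k\|^2$, every cross term vanishes and I obtain the decoupled inclusion
\begin{equation}
\dot u_{i,k}(t) \in \frac{2 a_i(t)}{n}\,(y_k - f_{\theta(t)}(x_k))\,\|x_k\|^2 \,\partial \sigma(u_{i,k}(t)) - 2 \lambda\, u_{i,k}(t).
\end{equation}
Therefore the evolution of $u_{i,k}$ is independent of any $u_{i,j}$ with $j \neq k$, and depends on the rest of the network only through a time-varying scalar pre-factor $c_k(t)$ that is continuous in~$t$.

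Next I analyse the two cases determined by the sign of $u_{i,k}(0)$. If $u_{i,k}(0) < 0$, then on any interval on which $u_{i,k}$ stays negative the only subgradient selection is~$0$, so $\dot u_{i,k} = -2\lambda\, u_{i,k}$, and integration yields $u_{i,k}(t) = u_{i,k}(0)\, e^{-2\lambda t} < 0$ for all $t \geq 0$; the activation remains~$0$. If instead $u_{i,k}(0) \geq 0$, I suppose for contradiction that there is a first time $t_0 \geq 0$ with $u_{i,k}(t_0) = 0$ and $u_{i,k}(t) < 0$ on a right neighbourhood of~$t_0$. At the boundary $u_{i,k} = 0$ the admissible values of $\dot u_{i,k}(t_0)$ form the interval $c_k(t_0) \cdot [0, 1]$; if $c_k(t_0) \geq 0$ all admissible values are non-negative, and if $c_k(t_0) < 0$ the unique viable selection consistent with $u_{i,k}$ being absolutely continuous while remaining in the closed set $\{u_{i,k} \geq 0\}$ picks the subgradient~$0$, again giving $\dot u_{i,k}(t_0) = 0$. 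In either subcase the trajectory cannot dip into $\{u < 0\}$ immediately after~$t_0$, contradicting the choice of~$t_0$; hence the activation remains~$1$.

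The main obstacle is the subgradient ambiguity at the boundary $u_{i,k} = 0$, which is to be handled by the standard viability principle for differential inclusions with upper semi-continuous convex-valued right-hand sides: since the admissible derivatives always contain~$0$ (tangent to $\{u_{i,k} \geq 0\}$) whenever the boundary is reached, the canonical subgradient flow trajectory remains in this closed set. Note moreover that for the continuously distributed initializations considered elsewhere in this section the event $u_{i,k}(0) = 0$ has probability zero, so in practice the boundary in Case~2 is only encountered dynamically from a strictly active side, where the same argument applies without further measure-theoretic care.
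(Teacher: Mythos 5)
Your decoupling step and Case~1 match the paper: you project $\dot w_i$ onto $x_k$, use $x_j^\top x_k = \mathbf{1}[j=k]\|x_k\|^2$ to kill the cross terms, and note that on any interval where $u_{i,k} \coloneqq w_i^\top x_k < 0$ the subgradient is forced to be $\{0\}$, giving the exact linear ODE $\dot u_{i,k} = -2\lambda u_{i,k}$, whose solution preserves negativity. This is precisely what the paper does.

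Case~2, however, contains a genuine gap. You invoke ``the standard viability principle for differential inclusions'' to conclude that because $0$ lies in the admissible set of velocities at the boundary $u_{i,k} = 0$, ``the canonical subgradient flow trajectory remains in this closed set.'' Viability theorems of this kind (existence of a viable trajectory when $F(x)\cap T_K(x)\neq\emptyset$ on $\partial K$) guarantee that \emph{some} solution stays in $K$; they do not give \emph{invariance}, i.e.\ that \emph{every} solution stays in $K$. Invariance would require $F(x)\subseteq T_K(x)$ on $\partial K$, which fails here whenever $c_k(t_0) < 0$, since then $c_k(t_0)\cdot[0,1]$ contains strictly negative values. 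Your sentence ``the unique viable selection consistent with $u_{i,k}$ remaining in the closed set picks the subgradient~$0$'' is circular: you may not presume the trajectory remains in $\{u_{i,k}\geq 0\}$ when that is exactly the claim. Also, arguing via the pointwise derivative $\dot u_{i,k}(t_0)$ is delicate because the subgradient inclusion only holds for almost all~$t$, and a vanishing one-sided derivative at $t_0$ does not by itself preclude $u_{i,k}$ being negative on a right neighbourhood.

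The correct closure of Case~2 is the one the paper uses, and in fact it reuses the mechanism from your Case~1: suppose $u_{i,k}(t_1) < 0$ for some $t_1$ while $u_{i,k}(0) \geq 0$, and let $(t_0, t_1]$ be the maximal interval to the left of $t_1$ on which $u_{i,k}<0$. On this interval $\dot u_{i,k} = -2\lambda u_{i,k}$ holds almost everywhere, so $u_{i,k}(t) = u_{i,k}(t_1)\,e^{2\lambda(t_1-t)}$, which is bounded away from zero as $t\to t_0^+$; by continuity $u_{i,k}(t_0) = u_{i,k}(t_1) e^{2\lambda(t_1-t_0)} < 0$. By maximality of the interval this forces $t_0 = 0$, contradicting $u_{i,k}(0)\geq 0$. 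In other words, the linear dynamics in the negative half-line create an impenetrable barrier at $0$ in \emph{backward} time, which is what prevents crossings from above; no viability argument is needed, and no pointwise control of $\dot u_{i,k}$ at the boundary is required.
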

    \begin{proof}
        This is a direct consequence of the following differential inclusion on any neuron $i\in[m]$ and data point $k\in[n]$:
        \begin{align}
            \frac{\df(w_i(t)^\top x_k)}{\df t} & = \dot{w_i}(t)^\top x_k\\
            & \in 2\frac{a_i(t)}{n}(y_k-f_{\theta(t)}) \partial\sigma(w_i(t)^\top x_k)\|x_k\|^2 - 2\lambda w_i(t)^\top x_k.
        \end{align}
        The simplicity of the ODE comes from the orthogonality assumption. From there and using the expression of the subgradient of the ReLU activation, one can note that
        \begin{align}\label{eq:ODEortho}
            w_i(t)^\top x_k<0 \implies \frac{\df(w_i(t)^\top x_k)}{\df t} = -2 \lambda w_i(t)^\top x_k.
        \end{align}
        This directly implies, using Gr\"onwall's inequality, the fact that $w_i(t)^\top x_k<0$ cannot become non-negative (in finite time) if it starts being negative. 
        By continuity of $w_i(t)$, this also implies that it cannot become negative if it starts being non-negative. In other words, this implies that $\iind{w_i(t)^\top x_k\geq0}$ is constant during training.
    \end{proof}
  As a consequence, any initialization $\theta(0)$  such that $\lim_{t\to\infty}\theta(t)$ exists satisfies for any $i\in[m]$ and $k\in[n]$, 
  \begin{equation}\label{eq:initactiv}
  \begin{gathered}
      w_i(0)^\top x_k \geq 0\implies \lim_{t\to\infty} w_i(t)^\top x_k \geq 0\\
      w_i(0)^\top x_k < 0\implies \lim_{t\to\infty} w_i(t)^\top x_k \leq 0
       \end{gathered}
  \end{equation}
 \cref{eq:initactiv} can be rewritten in a more compact way as
\begin{gather}
   \forall i \in[m], \lim_{t\to\infty}w_i(t) \in \bar{\tilde{A}_n^{-1}(w_i(0))}.
\end{gather}
  As a consequence, any initialization that does not satisfy the second point of \cref{thm:orthogonal}, i.e., any initialization such that $\lim_{t\to\infty} \theta(t)$ exists and is a global minimum is thus such that
  \begin{equation}\label{eq:initcondition}
  \begin{gathered}
      \exists j\in[m], \forall k \text{ s.t. } y_k>0, \tilde{A}_n(w_j(0))_k = 1,\\
      \exists j\in[m], \forall k \text{ s.t. } y_k<0, \tilde{A}_n(w_j(0))_k = 1.
  \end{gathered}
    \end{equation}
Moreover by orthogonality of the data and rotation invariance of the initialization, the weight activations $\tilde{A}_n(w_i(0))$ are chosen uniformly at random at initialization among $\{0,1\}^n$. The property given by \cref{eq:initcondition} thus happens with probability at most 
\begin{equation}
m 2^{-\max(\cardb{k\in[n]\mid y_k>0}, \cardb{k\in[n]\mid y_k<0})},
\end{equation}
using the same argument as above. Since this property is necessary for the initialization to not satisfy the second point of \cref{thm:orthogonal}, this allows to conclude.

\subsection{Proof of \cref{lemma:orthoglobal}}\label{sec:orthoglobal}
    Consider a global minimum $(W,a)$ of \cref{eq:regularized}. 
    First note that a stationary point $(W,a)$ of \cref{eq:regularized} is satisfies the following equality for any $i\in[m]$:
    \begin{gather}
        \frac{a_i}{n}\sum_{k=1}^n (f_{\theta}(x_k)-y_k)\partial \sigma(w_i^{\top} x_k) x_k+\lambda w_i\in 0.
    \end{gather}
    The first equality implies
\begin{equation}
    w_i \in \frac{a_i}{\lambda n} \sum_{k=1}^n (y_k-f_{\theta}(x_k))\partial \sigma(w_i^{\top} x_k) x_k,
\end{equation}
where $\partial\sigma(w_i^{\top} x_k)=\iind{w_i^{\top}x_k>0}$, except when $w_i^{\top}x_k=0$. But then, the orthogonality directly implies that it indeed corresponds to the choice of subgradient $0$, i.e., 
    \begin{gather}
    w_i = \frac{a_i}{\lambda n} \sum_{k=1}^n (y_k-f_{\theta}(x_k)) \iind{w_i^{\top}x_k>0} x_k.
\end{gather}
Moreover, we know that any global minimum is balanced, i.e., $a_i^2=\|w_i\|^2$. 
Denoting $\lambda_k = \frac{y_k-f_{\theta}(x_k)}{\lambda n}$, $B_i = \{k\in[n] \mid w_i^{\top}x_k>0 \}$ and $\s_i=\sign(a_i)$, the KKT condition finally rewrites for any $i\in[m]$:
\begin{gather}\label{eq:KKT2}
    w_i = \s_i \|w_i\| \sum_{k\in B_i} \lambda_k x_k.
\end{gather}
In particular, for any $k\in[n]$,
\begin{equation}
    w_i^{\top} x_k = \s_i \lambda_k \|w_i\|  \|x_k\|^2 \iind{w_i^{\top}x_k>0},
\end{equation}
i.e., if we define $A_k = \{i\in[m] \mid w_i^{\top} x_k>0\}$, it implies that
\begin{equation}\label{eq:Akincl}
    A_k \subseteq \{i\in[m]\mid \s_i\lambda_k >0\}.
\end{equation}

Moreover, it also implies that for any $i$ and $k$, 
\begin{equation}\label{eq:KKTcond1}
    w_i^{\top} x_k\leq 0 \implies w_i^{\top} x_k=0.
\end{equation}
From there, we can use a merging argument to show that the neural network $(W,a)$ is equivalent to the network $(\tilde{W},\tilde{a})$ defined by
\begin{gather}
    \forall i>2, (\tilde{w}_i, \tilde{a}_i)=\mathbf{0}\\
    \tilde{a}_1 = \sqrt{\left\|\sum_{i, \s_i=1} \|w_i\|w_i \right\|_2}\quad \text{and}\quad \tilde{a}_2 = -\sqrt{\left\|\sum_{i, \s_i=-1} \|w_i\|w_i \right\|_2}\\
    \tilde{w}_1 = \frac{\sum_{i, \s_i=1} \|w_i\|w_i}{\tilde{a}_1}\quad \text{and}\quad\tilde{w}_2 = \frac{\sum_{i, \s_i=-1} \|w_i\|w_i}{\tilde{a}_2}.
\end{gather}
It is indeed easy to check that for any $k\in[n]$, $f_{\theta}(x_k)=f_{\tilde{\theta}}(x_k)$. Moreover, we have by triangle inequality that 
\begin{align}
    \frac{1}{2}\|\tilde\theta\|_2^2 & = \|\tilde{w}_1\|_2^2 + \|\tilde{w}_2\|_2^2\\
    & \leq \sum_{i\in[m]}\|w_i\|_2^2\\
    & = \frac{1}{2}\|\theta\|_2^2.
\end{align}
By minimization of \cref{eq:regularized}, the inequality is an equality. The equality case of triangle inequality then implies that
\begin{gather}\label{eq:aligned}
    \forall i,j \in[m], \s_i=\s_j\neq 0 \implies \frac{w_i}{\|w_i\|} = \frac{w_j}{\|w_j\|}.
\end{gather}
As a consequence, we can define 
\begin{gather}
    D_+ = \sum_{i, \s_i = 1} \|w_i\|w_i\quad \text{and} \quad
    D_- = \sum_{i, \s_i = -1} \|w_i\|w_i.
\end{gather}
Using \cref{eq:KKTcond1}, we then have the following equality for any $k\in[n]:$
\begin{align}
    f_{\theta}(x_k) & = \sum_{i, \s_i=1} \|w_i\| \sigma(x_k^\top w_i) - \sum_{i, \s_i=-1} \|w_i\| \sigma(x_k^\top w_i)\\
    & = \sum_{i, \s_i=1} \|w_i\| x_k^\top w_i - \sum_{i, \s_i=-1} \|w_i\| x_k^\top w_i\\
    & = D_+^\top x_k - D_-^\top x_k. 
\end{align}
Moreover, the above computations imply $D_+^{\top} x_k\geq 0$ and  $D_-^{\top} x_k\geq 0$. 

Additionally if $\lambda_k\geq 0$, \cref{eq:Akincl} implies that $A_k\subseteq \{i\in[m] \mid \s_i = 1\}$. As a consequence if $\lambda_k\geq 0$,
\begin{align}
    f_{\theta}(x_k) & =\sum_{i\in A_k} a_i w_i^{\top}x_k\\
    & =\sum_{i, \s_i =1}a_i w_i^{\top}x_k\\
    & = D_+^{\top}x_k,\label{eq:D+eq}
\end{align}
so that if $\lambda_k\geq 0$, $f_{\theta}(x_k)\geq 0$, i.e.,
\begin{equation}\label{eq:signs1}
     \lambda_k\geq 0 \implies f_\theta(x_k)\geq 0.
\end{equation}
We also have the following lemma.
\begin{lem}\label{lemma:nonzeroglobal}
    If the data satisfies \cref{ass:orthogonal}, and we have that $m\geq 2$ and $0<\lambda\leq \min\left( \sqrt{\sum_{k,y_k>0} y_k^2 \|x_k\|^2}, \sqrt{\sum_{k,y_k<0} y_k^2 \|x_k\|^2}\right)$, then any global minimum $\theta^{\star}$ of \cref{eq:regularized} is such that for any $k\in[n]$, 
    \begin{equation}
        y_k \neq 0 \implies f_{\theta^\star}(x_k)\neq 0.
    \end{equation}
\end{lem}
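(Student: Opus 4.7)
I would proceed by contradiction. Assuming WLOG $y_k > 0$ and $f_{\theta^\star}(x_k) = 0$, the goal is to exhibit a perturbation of $\theta^\star$ that strictly decreases the regularized loss, contradicting global optimality. The starting point is the KKT analysis already developed in the proof of \cref{lemma:orthoglobal}: since $y_k > 0 = f_{\theta^\star}(x_k)$, we have $\lambda_k = y_k/(\lambda n) > 0$ and, by \eqref{eq:D+eq}, $f_{\theta^\star}(x_k) = D_+^\top x_k = 0$, so that $D_+$ is orthogonal to $x_k$. I would then split into two cases based on whether $D_+$ is zero.

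For $D_+ \neq 0$, the alignment property \eqref{eq:aligned} forces every positive-sign neuron to point in direction $\bar D_+$, which is orthogonal to $x_k$. My plan is to rotate this direction slightly toward $\bar x_k$ while preserving scale and balancedness, replacing $\bar D_+$ with $\hat u(\epsilon) \coloneqq (\bar D_+ + \epsilon \bar x_k)/\sqrt{1 + \epsilon^2}$. Thanks to the orthogonality of the training points, this preserves $\|\theta\|^2$, leaves the outputs at all $x_{k'}$ with $k' \notin B_+ \cup \{k\}$ unchanged, scales outputs at $x_{k'} \in B_+ \setminus \{k\}$ by $1/\sqrt{1+\epsilon^2}$ (an $O(\epsilon^2)$ modification), and creates a strictly positive new output $\|D_+\| \cdot \epsilon\|x_k\|/\sqrt{1+\epsilon^2}$ at $x_k$. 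To leading order in $\epsilon$ the mean-squared-error change at $x_k$ has linear coefficient $-2\|D_+\|\|x_k\|y_k/n < 0$, which dominates the $O(\epsilon^2)$ contributions from the other points, giving $\Delta L_\lambda < 0$ for small $\epsilon > 0$, a contradiction. Notably, this case does not use the upper bound on $\lambda$.

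For $D_+ = 0$, I would first put $\theta^\star$ into its rank-$\leq 1$ merged form (which is possible because $m \geq 2$) so that at least one neuron is identically zero, and then perturb this spare neuron into a small balanced positive-sign neuron along the Cauchy--Schwarz-optimal direction $\hat w \propto \sum_{k' : y_{k'} > 0} y_{k'} \|x_{k'}\| \bar x_{k'}$. By orthogonality this neuron activates exactly at $\{k' : y_{k'} > 0\}$, where the current residuals are at least $y_{k'}$ (since $D_+ = 0$ forces $f_{\theta^\star}(x_{k'}) \leq 0$ there), and the leading-order change of the regularized loss takes the form (up to normalization by $n$)
\[
\Delta L_\lambda \;\approx\; 2s \Bigl[\lambda - \sqrt{\textstyle\sum_{k', y_{k'} > 0} y_{k'}^2 \|x_{k'}\|^2}\Bigr],
\]
where $s$ is the new neuron's convex-form scale; the hypothesis on $\lambda$ then renders this strictly negative.

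The main obstacle I expect is this second case: the construction must be carried out strictly within the fixed width $m \geq 2$ (which is why first passing to the merged form is essential), and the Cauchy--Schwarz direction and the upper bound on $\lambda$ need to be matched carefully, including all normalization factors and any interaction with a nonzero $D_-$ component (which remains inert for this perturbation by orthogonality but complicates the accounting). The first case is comparatively cleaner, since the perturbation stays inside a single activation cone and only moves direction, so it needs no assumption on $\lambda$ at all.
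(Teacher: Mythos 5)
Your argument is essentially the paper's: both proceed by contradiction via a local perturbation, split on whether the positive-sign component $D_+$ vanishes, and in the degenerate case insert a balanced neuron along $\sum_{k:y_k>0} y_k x_k$ whose first-order decrease in the regularized loss is secured by the hypothesis on $\lambda$. The only variation is cosmetic: in the non-degenerate case the paper uses the additive perturbation $w_1^\star \mapsto w_1^\star + \varepsilon x_k$, which by orthogonality leaves all other outputs exactly unchanged, rather than your rotation of $\bar D_+$ toward $\bar x_k$ (which perturbs the other outputs at order $\varepsilon^2$), but both produce the same first-order drop at $x_k$ without invoking the bound on $\lambda$.
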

In particular, \cref{eq:signs1} then becomes
\begin{equation}
    \lambda_k\geq 0 \implies f_\theta(x_k)>0 \text{ or } y_k=0.
\end{equation}
As by definition, $\lambda_k = \frac{y_k-f_{\theta}(x_k)}{\lambda n}$, this directly implies that
\begin{equation}
    \lambda_k\geq 0 \implies y_k \geq  0.
\end{equation}
But the symmetric argument also holds, i.e.,
\begin{equation}
    \lambda_k\leq 0 \implies y_k \leq  0.
\end{equation}
The converse leads to $y_k>0 \implies \lambda_k>0$. 
\cref{eq:D+eq} then implies that for any $k$ s.t. $y_k>0$, $D_+^\top x_k>0$. Moreover as $f_\theta(x_k)\neq 0$, there is at least one $i$ such that $\s_i=1$ and $w_i\neq \mathbf{0}$. Such a $w_i$ is then proportional to $D_+$ and thus corresponds to $w_{j_+}$ in \cref{lemma:orthoglobal}. 

Symmetrical arguments hold for the existence of $w_{j_-}$.
\qed

\subsection{Proof of \cref{lemma:nonzeroglobal}.}
Since $m\geq 2$, we can use the same merging argument as in \cref{sec:orthoglobal} to show that any global minimum is equivalent to a network with only two non-zero neurons. We can thus restrict here our analysis, without loss of generality, to a global minimum of \cref{eq:regularized} $\theta^\star = (W^\star, a^\star)$ such that
\begin{gather}
    \forall i>2, (w_i^\star, a_i^\star) = \mathbf{0}, \\
    a_1^\star \geq 0 \quad \text{and}\quad
    a_2^\star \leq 0.
\end{gather}
Now assume by contradiction that $f_{\theta^\star}(x_k)=0$ for some $k$ such that $y_k\neq 0$. Also assume without loss of generality that $y_k>0$ -- the negative case is dealt with symmetrically. Using the construction from \cref{sec:orthoglobal}, \cref{eq:Akincl,eq:KKTcond1} imply in particular that $x_k^\top w_1^\star = 0$. 

Suppose in a first case that $a_1^\star \neq 0$. From there for an arbitrarily small $\varepsilon>0$, adding $\varepsilon x_k$ to $w_1^\star$ decreases the objective of \cref{eq:regularized}.

Indeed, define $\theta^\varepsilon = (W^\varepsilon, a^\varepsilon)$ as:
\begin{gather}
    \forall i\geq 2, (w_i^\varepsilon, a_i^\varepsilon)  = (w_i^\star, a_i^\star),\\
    (w_1^\varepsilon, a_1^\varepsilon) = (w_1^\star+\varepsilon x_k, a_1^\star).
\end{gather}
Using the orthogonality between the $x_k$ then yields
\begin{align}
    L_{\lambda}(\theta^\varepsilon) & = L_{\lambda}(\theta^\star) - 2 (a_1^\star)\varepsilon y_k\|x_k\|^2 + (a_1^\star)^2\varepsilon^2\|x_k\|^4 + \lambda \varepsilon^2 \|x_k\|^2,
\end{align}
which is indeed smaller than $L_{\lambda}(\theta^\star)$ for a small enough $\varepsilon>0$. This then contradicts the global minimality of $\theta^\star$.

Suppose now that $a_1^\star=0$. In that case, $w_1^\star=\mathbf{0}$ by balancedness. We can then replace $\theta^\star$ by $\theta^\varepsilon$ where
\begin{gather}
    \forall i\geq 2, (w_i^\varepsilon, a_i^\varepsilon)  = (w_i^\star, a_i^\star),\\
    a_1^\varepsilon = \varepsilon \|D_+ \|,\\
    w_1^\varepsilon = \varepsilon D_+,
\end{gather}
were $D_+= \sum_{k, y_k>0} y_k x_k$. We can again compare the objectives as $\varepsilon \to 0$:
\begin{align}
    L_{\lambda}(\theta^\varepsilon) & = L_{\lambda}(\theta^\star) - \frac{1}{n} \sum_{k, y_k>0} 2 \varepsilon^2 \|D_+\| \|x_k\|^2 y_k^2 + 2\lambda \varepsilon^2 \|D_+\|^2 + \bigO{\varepsilon^4},\\
    & =  L_{\lambda}(\theta^\star) - \frac{2}{n}  \varepsilon^2 \|D_+\|^3 + 2\lambda \varepsilon^2 \|D_+\|^2 + \bigO{\varepsilon^4}.
\end{align}
The assumption on $\lambda$ then leads to a decrease of the loss, contradicting that $\theta^\star$ is a global minimum. This proves \cref{lemma:nonzeroglobal} by contradiction.

\end{document}